\documentclass[twoside, 11pt]{article}

\usepackage[preprint, abbrvbib]{jmlr2e}

\usepackage{mathtools}
\usepackage{stmaryrd}
\usepackage{enumitem}
\usepackage{booktabs}
\usepackage{graphicx}
\usepackage{mathabx}
\usepackage{float}
\usepackage{subcaption}

\usepackage[ruled, vlined]{algorithm2e}
\newtheorem{assumption}[theorem]{Assumption}

\firstpageno{1}
\begin{document}

\title{Contrast-Free ICA and Causal Inference via Wasserstein Distances to the Gaussian}

\author{\name F\'elix Laplante\thanks{Corresponding author.} \email felixlaplante.research@gmail.com \\
\addr LaMME, Universit\'e Paris-Saclay \\
\AND \name Christophe Ambroise \email christophe.ambroise@univ-evry.fr \\
\addr LaMME, Universit\'e Paris-Saclay \\
\AND \name Pierre Humbert \email pierre.humbert@cnrs.fr \\
\addr LaMME, Universit\'e Paris-Saclay}

\maketitle

\begin{abstract}
We study the squared $2$-Wasserstein distance to the standard Gaussian as a non-Gaussianity criterion and use it for linear Independent Component Analysis (ICA) and causal discovery in Linear Non-Gaussian Acyclic Models (LiNGAM). Unlike commonly used parametric contrasts and approximations of information-theoretic quantities, this criterion requires no distributional regularity beyond finite second moments, involves neither approximation nor tuning parameters, and can be computed exactly and efficiently from empirical order statistics. Our analysis relies on a strict subadditivity property of the $2$-Wasserstein distance to the Gaussian. At the population level, we prove exact identification of the ICA unmixing matrix, up to signed permutation, and give an analogous characterization of causal orders through sequential least-squares residuals. We then define empirical plug-in estimators and prove distribution-free uniform convergence under finite-moment assumptions, before detailing three practical solvers: a Picard-style orthogonal optimizer for ICA, an exhaustive dynamic program for causal order search, and a greedy order search variant. Empirically, we demonstrate competitive performance for both tasks and provide open-source implementations for source separation and causal discovery.
\end{abstract}

\begin{keywords}
Independent Component Analysis, Causal Inference, Optimal Transport, Wasserstein Distance, Non-Gaussianity.
\end{keywords}

\section*{Introduction}

Linear independent component analysis for source separation \citep{jutten1991blind} and linear non-Gaussian acyclic models for causal inference \citep{shimizu2006linear} are two closely related, well-established problems that rely on independence and non-Gaussianity to ensure identifiability. In ICA, one observes an invertible linear mixture of independent latent sources and seeks to recover them up to permutation and scaling. In linear causal inference, one observes variables generated by independent structural noises through an acyclic linear system and seeks to recover the causal order and eventually the underlying directed graph.

Many classical ICA algorithms seek an unmixing matrix such that the recovered sources are as non-Gaussian as possible, based on higher-order moments, cumulants, likelihood surrogates, or approximations of negentropy \citep{comon1994independent, bell1995information, hyvarinen1999fast, hyvarinen2000independent}. While often effective in practice, these parametric methods generally do not exploit all the information in the source distributions and may lead to incorrect recovery. Furthermore, many require choosing a contrast or nonlinearity, whose suitability depends on the source distributions, often assumed to be absolutely continuous with respect to Lebesgue measure.

These limitations are also relevant to causal discovery. Indeed, without additional restrictions, a linear Gaussian acyclic model does not uniquely determine the underlying causal graph, whereas linear non-Gaussian models enjoy the same identifiability principle as ICA. Early methods such as ICA-LiNGAM \citep{shimizu2006linear} exploited this connection by applying ICA to recover the structural model. More recent approaches, such as DirectLiNGAM \citep{shimizu2011directlingam}, avoid using ICA by recursively finding exogenous variables using nonparametric independence criteria \citep{gretton2005measuring}.

Overall, whether in ICA or causal discovery, identification depends on the non-Gaussianity of the source distributions.

\bigbreak

This paper studies the squared $2$-Wasserstein distance of a standardized random variable to the standard Gaussian as a nonparametric non-Gaussianity measure. The Wasserstein route provides an exact, easily computable, one-dimensional optimal-transport quantity that admits a direct order-statistics plug-in. We call it \emph{contrast-free} in a specific sense: although the criterion is itself a contrast, it is fixed by the Gaussian reference and has no tunable nonlinearity or moment order. This makes the resulting ICA and causal order estimators distribution-free. Apart from standardization, non-Gaussianity, and finite moments for concentration, the theory does not impose a parametric source family, density smoothness, cumulant condition, or likelihood model.

\paragraph{Main Contributions.}

Our contributions are as follows. Under the usual linear ICA and LiNGAM assumption that at most one source is Gaussian, we first prove that a nontrivial orthogonal mixture of independent standardized sources has strictly smaller total Wasserstein non-Gaussianity than the unmixed sources (Lemmas~\ref{lem:wasserstein-linear-combination} and~\ref{lem:strict-wasserstein}). We then derive an optimal transport-based objective whose oracle maximizers are exactly the true unmixing matrices, up to signed permutations (Theorem~\ref{thm:ica-identifiability}), and extend the methodology to identify causal orders in linear non-Gaussian Structural Equation Models (SEMs) (Theorem~\ref{thm:causal-order-identification}). We further establish distribution-free uniform convergence and consistency of global empirical maximizers for ICA unmixing (Theorem~\ref{thm:uniform-convergence} and Corollary~\ref{cor:ica-estimator-consistency}) as well as for causal order estimation (Theorems~\ref{thm:dag-uniform-convergence}, \ref{thm:dag-order-sub-gaussian-rate}, and Corollary~\ref{cor:dag-order-consistency}). Finally, we develop a quasi-Newton solver for ICA as well as exact and greedy order search procedures for causal inference with exactness guarantees (Theorem~\ref{thm:greedy-equal-score}), and evaluate their recovery and computational cost in reproducible, controlled experiments (Sections~\ref{sec:algorithms} and~\ref{sec:experiments}).

\section{Related Work}

ICA is a classical model for blind source separation. Its standard identifiability theory shows that independent non-Gaussian sources can be recovered from linear mixtures up to signed permutations, provided that at most one source is Gaussian \citep{comon1994independent}. Early and influential algorithms include cumulant-based and information-theoretic methods such as JADE, Infomax, and FastICA \citep{cardoso1993blind, bell1995information, hyvarinen1999fast, hyvarinen2000independent}. Modern ICA solvers also exploit second-order optimization on the whitened orthogonal problem. For instance, the Picard algorithm uses a preconditioned quasi-Newton scheme and is particularly relevant to our Wasserstein implementation \citep{ablin2018faster}. Nonparametric ICA methods optimize distributional dependence criteria rather than fixed low-order cumulants or prescribed nonlinearities. For instance, quasi-nonparametric neural estimators such as MINE learn a variational estimate of mutual information \citep{belghazi2018mine}. Kernel ICA instead uses reproducing-kernel dependence measures \citep{bach2002kernel}, while distance-covariance ICA of \cite{matteson2017independent} seeks a rotation minimizing a distance-covariance measure of mutual dependence after whitening. Since distance covariance vanishes if and only if the components are independent, the criterion is necessary and sufficient for independence, and consistency is established without assuming the existence of independent components a priori. Other recent optimal-transport approaches use multivariate ranks and related rank-based independence criteria \citep{niu2022distribution}. Particularly close is the Wasserstein projection-pursuit method of \cite{mukherjee2023wasserstein}, which also maximizes one-dimensional Wasserstein distances to the Gaussian. However, it targets subspace recovery in a spiked model, whereas our ICA result recovers the individual source directions, up to sign and permutation.

Linear non-Gaussian causal inference techniques build on the same source-separation principle. For instance, \cite{shimizu2006linear} shows that linear acyclic structural equation models with independent non-Gaussian noises are identifiable from observational data. Building on this insight, DirectLiNGAM provides a greedy sequential procedure that avoids an ICA preprocessing step \citep{shimizu2011directlingam}. Score-based structure learning instead optimizes decomposable local scores over Directed Acyclic Graphs (DAGs). The dynamic program of \cite{silander2006simple} and the A\textsuperscript{*}-based algorithm of \cite{yuan2013learning} give exact combinatorial solvers for globally optimal Bayesian network structures with exponential computational cost. For linear SEMs, identifiable score-based approaches have been obtained under additional restrictions, including known or equal variances \citep{loh2014high, chen2019causal, park2020identifiability, peters2014identifiability}. Finally, continuous relaxations such as NOTEARS \citep{zheng2018dags} and DAGMA \citep{bello2022dagma} replace the combinatorial acyclicity constraint by smooth characterizations. They are scalable, but their solutions depend on nonconvex continuous optimization and regularization choices.

\bigbreak

In the interest of full transparency, the first version of the present work was submitted to arXiv, and the following day, the authors of \cite{jha2026linear} independently submitted their closely related preprint. They also use the squared $2$-Wasserstein distance to the standard Gaussian as an ICA non-Gaussianity criterion, but only establish population-level recovery. Indeed, their theoretical analysis is restricted to the oracle ICA setting under strong smoothness conditions, whereas the present work additionally establishes empirical uniform convergence and estimator consistency, extends the criterion to causal order identification and estimation, and develops exact and greedy causal order solvers.

\section{Optimal Transport Preliminaries} \label{sec:ot-preliminaries}

Optimal Transport (OT) provides the fundamental metric used throughout the paper. More specifically, we use the $2$-Wasserstein distance. We refer the reader to \cite{villani2009optimal}, \cite{peyre2019computational}, and \cite{brenier1991polar} for general references and foundational results on optimal transport. We now recall the optimal-transport notions used below, beginning with the set of distributions with finite moments on which Wasserstein distances are well-defined.

\begin{definition}[$p$-Moment Probability Measures]
Let $p \geq 1$. We denote by $\mathcal{P}_p(\mathbb{R}^d)$ the set of Borel probability measures on $\mathbb{R}^d$ with finite $p$-th moment:
\[
\mathcal{P}_p(\mathbb{R}^d) \coloneqq \bigl\{ \mu \in \mathcal{P}(\mathbb{R}^d) : \int_{\mathbb{R}^d} \Vert x \Vert^p \, d\mu(x) < \infty \bigr\}.
\]
\end{definition}

Although the definition of the Wasserstein distance is stated for $p \geq 1$, we mainly use the quadratic case $p = 2$. Throughout the following sections, we identify random variables with their distributions, so we may write Wasserstein distances in terms of random variables instead.

\begin{definition}[$p$-Wasserstein Distance]
Let $\mu, \nu \in \mathcal{P}_p(\mathbb{R}^d)$ and let $\Pi(\mu, \nu)$ denote the set of probability measures on $\mathbb{R}^d \times \mathbb{R}^d$ with marginals $\mu$ and $\nu$. The $p$-Wasserstein distance between $\mu$ and $\nu$ is defined by
\[
\mathcal{W}_p(\mu, \nu)^p \coloneqq \inf_{\pi \in \Pi(\mu, \nu)} \int_{\mathbb{R}^d \times \mathbb{R}^d} \Vert x - y \Vert^p \, d\pi(x, y) = \inf_{X \sim \mu, Y \sim \nu} \mathbb{E}\bigl[ \Vert X - Y \Vert^p \bigr].
\]
\end{definition}

For the projected distance used in the concentration analysis, we first define pushforward measures.

\begin{definition}[Pushforward Measure] \label{def:pushforward-measure}
Let $\mu$ be a Borel probability measure on $\mathbb{R}^d$ and let $T$ be a measurable map from $\mathbb{R}^d$ to $\mathbb{R}^k$. The pushforward of $\mu$ by $T$, denoted $T_\sharp\mu$, is the Borel probability measure on $\mathbb{R}^k$ defined by
\[
T_\sharp\mu(A) \coloneqq \mu\bigl( T^{-1}(A) \bigr),
\]
for every Borel set $A \subseteq \mathbb{R}^k$. Equivalently, if $X \sim \mu$, then $T(X) \sim T_\sharp\mu$.
\end{definition}

The max-sliced distance is then defined as the largest one-dimensional Wasserstein distance over all directions. This quantity will be used in the proof of Theorem~\ref{thm:uniform-convergence} in Appendix~\ref{pf:ica-uniform-convergence} to uniformly control the projected empirical Wasserstein errors over the orthogonal ICA parameter space.

\begin{definition}[Max-Sliced Wasserstein Distance] \label{def:max-sliced-wasserstein}
Let $\mu, \nu \in \mathcal{P}_p(\mathbb{R}^d)$. For $\theta \in \mathbb{S}^{d - 1} \coloneqq \bigl\{ x \in \mathbb{R}^d : \Vert x \Vert = 1 \bigr\}$, let $\pi_{\theta}$ be the one-dimensional projection from $\mathbb{R}^d$ to $\mathbb{R}$ defined by $\pi_{\theta}: x \mapsto \theta^\top x$. We denote the max-sliced $p$-Wasserstein distance by
\[
\widebar{\mathcal{W}}_p(\mu, \nu) \coloneqq \sup_{\theta \in \mathbb{S}^{d - 1}} \mathcal{W}_p\bigl( (\pi_{\theta})_\sharp\mu, (\pi_{\theta})_\sharp\nu \bigr).
\]
\end{definition}

Since the proof of the strict Wasserstein comparison in Appendix~\ref{pf:strict-wasserstein} uses the existence and uniqueness of quadratic optimal transport maps, we also recall Brenier's theorem in the form needed later. We refer the reader to the proof in \cite{brenier1991polar}.

\begin{theorem}[Brenier's Theorem] \label{thm:brenier}
Let $\mu, \nu \in \mathcal{P}_2(\mathbb{R}^d)$, and assume that $\mu$ is absolutely continuous with respect to Lebesgue measure. Then there exists a convex function $\varphi : \mathbb{R}^d \to \mathbb{R}$ such that
\[
T \coloneqq \nabla \varphi, \quad \nu = T_\sharp \mu, \quad \mathcal{W}_2(\mu, \nu)^2 = \mathbb{E}_{X \sim \mu}\Bigl[ \bigl\Vert X - T(X) \bigr\Vert^2 \Bigr].
\]

Furthermore, the optimal coupling $\pi = (\mu, T_\sharp \mu)$ is unique and induced by the measurable map $T$.
\end{theorem}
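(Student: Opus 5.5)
We prove Brenier's theorem in three steps: use Kantorovich duality to get a convex potential, show that every optimal plan lies on the graph of its gradient, and read off existence, the cost identity, and uniqueness.

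\textbf{Existence of an optimal plan.} Since $\mu, \nu \in \mathcal{P}_2(\mathbb{R}^d)$, the set $\Pi(\mu,\nu)$ is tight and weakly compact. The quadratic cost is lower semicontinuous and bounded below, so the map $\pi \mapsto \int \Vert x-y\Vert^2 \, d\pi$ attains its infimum. Call a minimizer $\pi^\star$; its cost is finite because both second moments are.

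\textbf{Reduction to a convex potential.} Expanding $\Vert x-y\Vert^2 = \Vert x\Vert^2 + \Vert y\Vert^2 - 2\langle x,y\rangle$, the two squared-norm terms integrate to constants that do not depend on $\pi$. Minimizing the quadratic cost is therefore the same as maximizing $\int \langle x,y\rangle \, d\pi$.

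Kantorovich duality for this correlation problem states
\[
\sup_{\pi\in\Pi(\mu,\nu)}\int \langle x,y\rangle\,d\pi = \inf\Bigl\{\int\varphi\,d\mu+\int\psi\,d\nu : \varphi(x)+\psi(y)\ge\langle x,y\rangle\Bigr\}.
\]
We would prove this by the standard route via cyclical monotonicity. The support of $\pi^\star$ is cyclically monotone: otherwise, locally rearranging mass would strictly increase the correlation. Rockafellar's theorem then gives a proper lower semicontinuous convex function $\varphi$ whose subdifferential contains $\operatorname{supp}\pi^\star$. Setting $\psi=\varphi^*$, the Fenchel--Young inequality gives $\varphi(x)+\varphi^*(y)\ge\langle x,y\rangle$, with equality exactly when $y\in\partial\varphi(x)$. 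Integrating this equality against $\pi^\star$ shows that the pair $(\varphi,\varphi^*)$ attains the dual infimum, so $\pi^\star$ is concentrated on the graph of $\partial\varphi$.

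\textbf{Obtaining the map.} This is the main obstacle: we must show that $\partial\varphi(x)$ is a singleton for $\mu$-almost every $x$, which requires controlling where $\varphi$ is finite. Since $\pi^\star$ lives on the graph of $\partial\varphi$, $\mu$ is concentrated on the domain of $\partial\varphi$, which is contained in the closed convex set $\overline{\operatorname{dom}\varphi}$. The boundary of a convex set is Lebesgue-null, and $\mu$ is absolutely continuous, so $\mu$ gives full mass to the interior of $\operatorname{dom}\varphi$. On that interior, $\varphi$ is locally Lipschitz, and Rademacher's theorem (or Alexandrov's, for convex functions) shows it is differentiable Lebesgue-almost everywhere, hence $\mu$-almost everywhere. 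At each such point $\partial\varphi(x)=\{\nabla\varphi(x)\}$.

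It follows that $\pi^\star=(\mathrm{id},\nabla\varphi)_\sharp\mu$, and hence $\nu=(\nabla\varphi)_\sharp\mu$. The cost identity $\mathcal{W}_2(\mu,\nu)^2=\mathbb{E}_{X\sim\mu}\bigl[\Vert X-\nabla\varphi(X)\Vert^2\bigr]$ holds because $\pi^\star$ is optimal. Outside the interior of the domain we may redefine $\varphi$ arbitrarily to make it real-valued, as the statement requires; this changes nothing $\mu$-almost surely.

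\textbf{Uniqueness.} Let $\pi'$ be any other optimal plan. Because the dual pair $(\varphi,\varphi^*)$ is optimal, duality forces $\varphi(x)+\varphi^*(y)=\langle x,y\rangle$ $\pi'$-almost surely, so $\pi'$ is also concentrated on the graph of $\partial\varphi$. By the previous step this graph is $\mu$-almost surely the graph of $\nabla\varphi$, and therefore $\pi'=(\mathrm{id},\nabla\varphi)_\sharp\mu=\pi^\star$.
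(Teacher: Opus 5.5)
Apart from the last sentence of your third step, your argument is the standard cyclical-monotonicity proof of Brenier's theorem and is sound. The paper gives no proof of its own and only cites \cite{brenier1991polar}.

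The genuine gap is the patch ``outside the interior of the domain we may redefine $\varphi$ arbitrarily to make it real-valued.'' An arbitrary redefinition destroys convexity. Moreover, no redefinition can work in general, because the requirement $\varphi:\mathbb{R}^d\to\mathbb{R}$ in the statement is false as written. A finite convex function on $\mathbb{R}^d$ is locally Lipschitz, so $\nabla\varphi$ is bounded on compact sets, and $(\nabla\varphi)_\sharp\mu$ is compactly supported whenever $\mu$ is. Take $d=1$, $\mu$ uniform on $[0,1]$ and $\nu=\mathcal{N}(0,1)$; both lie in $\mathcal{P}_2(\mathbb{R})$ and $\mu$ is absolutely continuous. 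Then no real-valued convex $\varphi$ satisfies $\nu=(\nabla\varphi)_\sharp\mu$. The optimal map is the Gaussian quantile function $\Phi^{-1}$ on $(0,1)$, which is unbounded at both ends, so any convex potential for it must equal $+\infty$ off $[0,1]$. What your argument correctly establishes is the usual form of the theorem: $\varphi:\mathbb{R}^d\to\mathbb{R}\cup\{+\infty\}$ is proper, lower semicontinuous and convex, and $T=\nabla\varphi$ is defined $\mu$-almost everywhere on $\operatorname{int}\operatorname{dom}\varphi$.

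So either conclude with that extended-valued statement, or add a hypothesis forcing $\operatorname{dom}\varphi=\mathbb{R}^d$. Full support of $\mu$ is enough, and it covers the only case the paper needs ($\mu=\mathcal{N}(0,1)$ in the proof of Lemma~\ref{lem:strict-wasserstein}). Your step already gives $\mu(\operatorname{int}\operatorname{dom}\varphi)=1$. If the open convex set $\operatorname{int}\operatorname{dom}\varphi$ missed a point, a separating hyperplane through that point would leave a nonempty open half-space of $\mu$-measure zero, contradicting full support. Hence $\varphi$ is finite everywhere and no redefinition is needed.

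A smaller omission: in the uniqueness step you split $\int(\varphi(x)+\varphi^*(y))\,d\pi'$ into $\int\varphi\,d\mu+\int\varphi^*\,d\nu$. This needs $\varphi\in L^1(\mu)$ and $\varphi^*\in L^1(\nu)$. It holds because both functions have affine minorants, which are integrable by the finite second moments, and $\varphi(x)+\varphi^*(y)=\langle x,y\rangle\in L^1(\pi^\star)$ on the support of $\pi^\star$.
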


\section{Optimal Transport ICA} \label{sec:ot-ica}

In this section, we consider linear independent component analysis and assume that the data distribution is centered and whitened. This population-level normalization is standard in ICA analyses \citep{comon1994independent, hyvarinen2001independent} and reduces a general invertible ICA model to an orthogonal one, leaving only sign and permutation indeterminacies. Empirically, a consistent plug-in strategy consists of subtracting the sample mean and applying a whitening map, for instance through the eigendecomposition of the empirical covariance matrix. On the resulting orthogonal parameter space, our estimation strategy is to maximize the squared $2$-Wasserstein distance of each recovered coordinate to the Gaussian. Since this quantity is computed exactly in one dimension, the resulting empirical objective does not require density estimation, entropy approximation, or the choice of a nonlinearity.

\subsection{Non-Gaussian Orthogonal Model}

Let $d \geq 2$ denote the number of variables, let $X = (X_1, \dots, X_d)^\top$ be a random vector with distribution $P$, and let $S = (S_1, \dots, S_d)^\top$ denote the latent sources. We assume that the data follow the model
\begin{equation} \label{eq:ica-model}
X = (W^\star)^{-1} S,
\end{equation}
where $(W^\star)^{-1} \in \mathcal{O}_d(\mathbb{R})$ is the \emph{mixing} matrix, $W^\star$ is the \emph{unmixing} matrix, and $\mathcal{O}_d(\mathbb{R})$ denotes the space of real-valued orthogonal matrices of size $d \times d$.

\bigbreak

The following assumptions are standard in ICA. The first normalizes the latent sources, while the second ensures identifiability up to signed permutations.

\begin{assumption}[Source Standardization] \label{ass:ica-standardization}
The components of $S$ are mutually independent and satisfy
\[
\forall j \in \llbracket d \rrbracket, \, \mathbb{E}[S_j] = 0, \quad \mathbb{E}[S_j^2] = 1.
\]
\end{assumption}

\begin{assumption}[ICA Non-Gaussianity] \label{ass:ica-non-gaussianity}
At most one component of $S$ is Gaussian.
\end{assumption}

In particular, Assumption~\ref{ass:ica-standardization} together with the orthogonality of the unmixing matrix $W^\star$ imply that the distribution of $X$ is centered and whitened
\[
\mathbb{E}[X] = (W^\star)^{-1} \mathbb{E}[S] = 0, \quad \mathrm{Cov}(X) = (W^\star)^{-1} \mathrm{Cov}(S) W^\star = I_d.
\]

\subsection{Optimal Transport-Based Identifiability}

We now focus on the Wasserstein inequalities that drive our identifiability guarantees and study one-dimensional projections of the distribution $P$ along a unit vector $\alpha \in \mathbb{R}^d$. The starting point is a non-strict subadditivity bound for independent centered variables. This bound will then be applied with independent standard Gaussian variables as the reference components, so that any unit-norm Gaussian linear combination remains standard Gaussian. This particular case has already been studied, for instance, in \cite{panaretos2019statistical}.

\begin{lemma}[Wasserstein Bound for Linear Combinations] \label{lem:wasserstein-linear-combination}
Let $S_1, \dots, S_d$ and $Z_1, \dots, Z_d$ be independent, centered random variables with finite second moments. Then for any $\alpha_1, \dots, \alpha_d \in \mathbb{R}$
\[
\mathcal{W}_2\bigl( \sum_{j = 1}^d \alpha_j S_j, \sum_{j = 1}^d \alpha_j Z_j \bigr)^2 \leq \sum_{j = 1}^d \alpha_j^2 \mathcal{W}_2\bigl( S_j, Z_j \bigr)^2.
\]
\end{lemma}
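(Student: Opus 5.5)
The plan is a product-coupling argument. Because the $S_j$ are mutually independent and the $Z_j$ are mutually independent, any choice of per-coordinate couplings can be glued into a single coupling of the vectors $(S_1,\dots,S_d)$ and $(Z_1,\dots,Z_d)$ in which the pairs $(S_j',Z_j')$ are independent across $j$. In that joint coupling, the cross terms of the squared error vanish because everything is centered.

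\textbf{Step 1 (per-coordinate couplings).} For each $j$, take an optimal coupling $\pi_j \in \Pi(\mathrm{Law}(S_j), \mathrm{Law}(Z_j))$. In one dimension this can be made explicit as the quantile coupling $(F_{S_j}^{-1}(U_j), F_{Z_j}^{-1}(U_j))$ with $U_j$ uniform. Existence of an optimal coupling also follows from standard compactness (tightness plus lower semicontinuity), so Brenier's theorem is not needed here. The coupling satisfies
\[
\mathbb{E}_{\pi_j}\bigl[(S_j'-Z_j')^2\bigr] = \mathcal{W}_2(S_j,Z_j)^2 .
\]

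\textbf{Step 2 (gluing).} Let the pairs $(S_j',Z_j')\sim\pi_j$, $j\in\llbracket d\rrbracket$, be mutually independent, which is possible by taking $U_1,\dots,U_d$ i.i.d. Then the $S_j'$ are independent with the same marginals as the $S_j$. Hence $\sum_j \alpha_j S_j'$ has the law of $\sum_j\alpha_j S_j$, and likewise $\sum_j\alpha_j Z_j'$ has the law of $\sum_j \alpha_j Z_j$. This is exactly where the independence hypotheses enter, and it is the only step needing care: one must check that the law of the sum depends only on the product of the marginals. Consequently $\bigl(\sum_j\alpha_jS_j', \sum_j\alpha_jZ_j'\bigr)$ is an admissible coupling of the two linear combinations.

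\textbf{Step 3 (expansion).} Writing $D_j = S_j'-Z_j'$, the $D_j$ are independent, centered (since both $S_j$ and $Z_j$ are centered), and square-integrable. Therefore
\[
\mathcal{W}_2\Bigl(\sum_j\alpha_jS_j,\sum_j\alpha_jZ_j\Bigr)^2 \le \mathbb{E}\Bigl[\Bigl(\sum_j \alpha_j D_j\Bigr)^2\Bigr] = \sum_j \alpha_j^2\,\mathbb{E}[D_j^2] + \sum_{j\neq k}\alpha_j\alpha_k\,\mathbb{E}[D_j]\,\mathbb{E}[D_k].
\]
The cross sum vanishes because each $\mathbb{E}[D_j]=0$, and the diagonal sum equals $\sum_j\alpha_j^2\mathcal{W}_2(S_j,Z_j)^2$. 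This concludes the proof.

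The centering assumption is used only to kill the cross terms. I expect no real obstacle beyond stating the gluing construction rigorously.
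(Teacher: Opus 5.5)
Your proposal is correct and follows the paper's proof: take per-coordinate optimal couplings, glue them independently into a product coupling that is admissible for the two linear combinations thanks to within-family independence, and expand the square, where centering kills the cross terms. Making the optimal coupling explicit via the one-dimensional quantile coupling is a harmless refinement; the paper only invokes existence.
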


\bigbreak

The proof of Lemma~\ref{lem:wasserstein-linear-combination} is provided in Appendix~\ref{pf:wasserstein-linear-combination}.

\bigbreak

We now define the population objective used to identify the independent components.

\begin{definition}[Oracle OT-ICA Objective] \label{def:ica-objective-function}
For every candidate orthogonal \emph{unmixing} matrix $W \in \mathcal{O}_d(\mathbb{R})$, define
\[
F(W) \coloneqq \sum_{j = 1}^d \mathcal{W}_2\bigl( (WX)_j, \mathcal{N}(0, 1) \bigr)^2.
\]
\end{definition}

The objective measures the total Wasserstein non-Gaussianity of the recovered components. For identifiability, it must distinguish projections aligned with individual sources from projections that combine several sources. A non-strict Wasserstein bound is therefore insufficient, since it would not exclude a nontrivial mixture from attaining the same objective value as a source-aligned projection.

The following lemma ensures that every normalized, nontrivial mixture is strictly closer to the standard Gaussian than the corresponding weighted combination of the individual source distances. The two-variable case was established in Proposition~2.4 of \cite{johnson2005central}, but we state the general form and provide a proof for completeness.

\begin{lemma}[Strict Wasserstein Bound for Nontrivial Mixtures] \label{lem:strict-wasserstein}
Suppose Assumptions~\ref{ass:ica-standardization} and~\ref{ass:ica-non-gaussianity} hold. Then for every $\alpha \in \mathbb{R}^d$ such that $\Vert \alpha \Vert_2 = 1$ and $\#\bigl\{ j \in \llbracket d \rrbracket : \alpha_j \neq 0 \bigr\} > 1$, we have
\[
\mathcal{W}_2\bigl( \sum_{j = 1}^d \alpha_j S_j, \mathcal{N}(0, 1) \bigr)^2 < \sum_{j = 1}^d \alpha_j^2 \mathcal{W}_2\bigl( S_j, \mathcal{N}(0, 1) \bigr)^2.
\]
\end{lemma}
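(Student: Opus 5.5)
The plan is to make the proof of Lemma~\ref{lem:wasserstein-linear-combination} explicit with Gaussian reference variables, and then rule out equality using the uniqueness part of Brenier's theorem (Theorem~\ref{thm:brenier}) together with a Pexider-type functional equation.

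\textbf{Step 1 (the non-strict bound with an explicit coupling).} Let $Z_1,\dots,Z_d$ be i.i.d.\ $\mathcal{N}(0,1)$. Realize each source as $S_j = g_j(Z_j)$ with $g_j \coloneqq F_{S_j}^{-1}\circ\Phi$. This map is nondecreasing, and $(S_j,Z_j)$ is then the comonotone coupling, which in one dimension is optimal. Independence of the $Z_j$ gives independence of the $S_j$, and $Z_\alpha \coloneqq \sum_j \alpha_j Z_j \sim \mathcal{N}(0,1)$ because $\Vert\alpha\Vert_2=1$. Since $\mathbb{E}[S_j - Z_j]=0$ and the pairs are independent, the cross terms vanish:
\[
\mathbb{E}\Bigl[\bigl(\textstyle\sum_j \alpha_j (S_j - Z_j)\bigr)^2\Bigr] = \sum_j \alpha_j^2 \mathcal{W}_2(S_j,\mathcal{N}(0,1))^2 .
\]
This recovers the bound of Lemma~\ref{lem:wasserstein-linear-combination}.

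\textbf{Step 2 (equality forces a monotone functional equation).} Suppose equality holds. Then the coupling $(S_\alpha, Z_\alpha)$, with $S_\alpha \coloneqq \sum_j \alpha_j g_j(Z_j)$, is an optimal coupling between $\mathcal{N}(0,1)$ and the law of $S_\alpha$. Since $\mathcal{N}(0,1)$ is absolutely continuous, Theorem~\ref{thm:brenier} says the optimal coupling is unique and induced by a nondecreasing map $T$. Hence
\[
\sum_j \alpha_j g_j(Z_j) = T\Bigl(\sum_j \alpha_j Z_j\Bigr) \quad \text{a.s.}
\]
Because the joint Gaussian law has full support and a positive density, this identity holds for Lebesgue-a.e.\ $(z_1,\dots,z_d)\in\mathbb{R}^d$.

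\textbf{Step 3 (solving the equation).} Pick two indices $j\neq k$ with $\alpha_j,\alpha_k\neq 0$, and freeze the remaining coordinates at a generic value (Fubini). This gives an a.e.\ identity of the form $a\,g_j(x)+b\,g_k(y)+c = T(ax+by+e)$ on $\mathbb{R}^2$, with $a=\alpha_j$, $b=\alpha_k$. Take distributional derivatives. Because the left side is a sum of a function of $x$ and a function of $y$, we get $\partial_x\partial_y(\text{LHS})=0$. On the other side, $\partial_x\partial_y\bigl[T(ax+by+e)\bigr] = ab\,T''(ax+by+e)$ in the sense of distributions; the change of variables is legitimate since $T$ is monotone, hence locally integrable. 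Since $ab\neq0$, this gives $T''=0$, so $T$ is affine on $\mathbb{R}$. Then $a\,g_j(x)$ equals an affine function of $x$ for a.e.\ $x$, so $g_j$ is affine, and likewise for every index in the support of $\alpha$. Consequently every $S_j$ with $\alpha_j\neq0$ is Gaussian, and by standardization it is $\mathcal{N}(0,1)$.

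\textbf{Step 4 (contradiction).} The support of $\alpha$ has at least two indices, while Assumption~\ref{ass:ica-non-gaussianity} allows at most one Gaussian source. So some $S_j$ in the support is non-Gaussian, contradicting Step 3. Hence the inequality is strict. Note that the non-Gaussianity assumption is genuinely needed here: if all supported sources were Gaussian, both sides would equal zero.

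The main obstacle is Step 3, namely making the Pexider/Cauchy argument rigorous when $g_j$ and $T$ are merely monotone and the identity holds only almost everywhere. I would first try the distributional-derivative argument above, after checking that composition with the linear submersion $(x,y)\mapsto ax+by$ commutes with distributional differentiation for $L^1_{\mathrm{loc}}$ functions. If that becomes delicate, the fallback is to mollify in the variable $ax+by$. Alternatively, one can use that monotone functions have at most countably many discontinuities: this lets us upgrade the a.e.\ identity to an everywhere identity for the right-continuous versions, and then invoke the classical fact that monotone solutions of Pexider's equation are affine.
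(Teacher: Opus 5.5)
Your proposal is correct and follows essentially the same route as the paper. Both arguments take the comonotone (Brenier) couplings of each $S_j$ with independent Gaussians, use equality to make the product coupling optimal, apply Brenier uniqueness to get $T(\sum_j \alpha_j z_j) = \sum_j \alpha_j g_j(z_j)$ for a.e.\ $z$, and reduce via Fubini to a two-variable Pexider equation. The only difference is that the paper cites the classical results on a.e.\ Pexider equations for measurable functions, whereas you solve the equation directly with distributional derivatives, using the local integrability of the monotone maps; this is a valid, self-contained substitute.
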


\bigbreak

The proof of Lemma~\ref{lem:strict-wasserstein} is provided in Appendix~\ref{pf:strict-wasserstein}.

\bigbreak

This strictness now provides the separation needed for the Wasserstein objective below.

\begin{theorem}[Oracle Identifiability of the OT-ICA Objective] \label{thm:ica-identifiability}
Suppose Assumptions~\ref{ass:ica-standardization} and~\ref{ass:ica-non-gaussianity} hold. Then
\[
\operatorname*{\arg\max}_{W \in \mathcal{O}_d(\mathbb{R})} F(W) = \{ MW^\star : M \in \mathcal{M}_d \},
\]
where $\mathcal{M}_d \coloneqq \bigl\{ \mathrm{diag}(\epsilon) \Pi : \epsilon \in \{ -1, 1 \}^d, \Pi \text{ permutation matrix} \bigr\}$ is the set of signed permutation matrices of size $d \times d$.
\end{theorem}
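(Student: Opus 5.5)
We reduce everything to the source coordinates and then apply Lemma~\ref{lem:strict-wasserstein} row by row. Fix $W \in \mathcal{O}_d(\mathbb{R})$ and set $A \coloneqq W (W^\star)^{-1}$, which is orthogonal as a product of orthogonal matrices. Then $WX = AS$, so the $i$-th recovered coordinate is $(WX)_i = \sum_{j} A_{ij} S_j$. Each row $a_i$ of $A$ has unit Euclidean norm, and each column of $A$ also has unit norm. Writing $c_j \coloneqq \mathcal{W}_2(S_j, \mathcal{N}(0,1))^2$, we are led to compare $F(W) = \sum_i \mathcal{W}_2(\sum_j A_{ij} S_j, \mathcal{N}(0,1))^2$ with $F(W^\star) = \sum_j c_j$.

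\textbf{Upper bound.} For each row $i$, apply Lemma~\ref{lem:wasserstein-linear-combination} with $Z_1, \dots, Z_d$ i.i.d.\ $\mathcal{N}(0,1)$. Since $\Vert a_i \Vert = 1$, the combination $\sum_j A_{ij} Z_j$ is standard Gaussian, and we get $\mathcal{W}_2((WX)_i, \mathcal{N}(0,1))^2 \leq \sum_j A_{ij}^2 c_j$. Summing over $i$ and using that the columns of $A$ have unit norm ($\sum_i A_{ij}^2 = 1$) gives $F(W) \leq \sum_j c_j = F(W^\star)$. Thus $W^\star$ is a maximizer. Every $MW^\star$ with $M \in \mathcal{M}_d$ attains the same value, because $(MW^\star X)_i = \pm S_{\sigma(i)}$ and $\mathcal{W}_2(-S_j, \mathcal{N}(0,1)) = \mathcal{W}_2(S_j, \mathcal{N}(0,1))$ by the symmetry of the Gaussian. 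This proves the inclusion $\supseteq$.

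\textbf{Reverse inclusion.} Suppose $F(W) = F(W^\star)$. Then every row inequality above must be an equality. Lemma~\ref{lem:strict-wasserstein} says the inequality is strict whenever row $a_i$ has more than one nonzero entry. Hence every row of $A$ has exactly one nonzero entry, and that entry equals $\pm 1$ by the unit-norm condition. An orthogonal matrix with this structure is a signed permutation, so $A \in \mathcal{M}_d$ and $W = A W^\star$.

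\textbf{Main obstacle.} The delicate point is the invocation of Lemma~\ref{lem:strict-wasserstein}: its strictness hypotheses are exactly Assumptions~\ref{ass:ica-standardization} and~\ref{ass:ica-non-gaussianity}, which we assume, so it applies to every row directly. One might worry about a row mixing a Gaussian source with zero-distance contributions. The lemma as stated already covers any nontrivial unit-norm $\alpha$, so this case needs no separate treatment. The only remaining care is the existence of the argmax, which holds because a maximizer was exhibited explicitly. Continuity or compactness is therefore not needed.
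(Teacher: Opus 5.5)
Your proof is correct and follows essentially the same route as the paper. Both set $\widetilde{W} = W(W^\star)^{-1}$ and apply Lemma~\ref{lem:wasserstein-linear-combination} row by row with Gaussian references. Both then sum using the unit-norm columns, equivalently the doubly stochastic squared entries, and invoke Lemma~\ref{lem:strict-wasserstein} to force each row of an optimal $\widetilde{W}$ to have a single nonzero entry. You are slightly more explicit than the paper on two minor points: the sign invariance $\mathcal{W}_2(-S_j, \mathcal{N}(0,1)) = \mathcal{W}_2(S_j, \mathcal{N}(0,1))$, and why such an orthogonal matrix must be a signed permutation.
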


\bigbreak

The proof of Theorem~\ref{thm:ica-identifiability} is provided in Appendix~\ref{pf:ica-identifiability}.

\bigbreak

In other words, maximizing the oracle objective $F$ recovers the unmixing matrix $W^\star$ up to signed permutation because every rotation that mixes multiple sources scores strictly lower. Furthermore, the criterion is nonparametric and distribution-free, fixed by the Gaussian reference, and requires only finite second moments under the stated assumptions.

\subsection{Empirical Estimation}

We now replace the unknown true distribution in the objective (Definition~\ref{def:ica-objective-function}) by its empirical counterpart. In the following, $n \geq 1$ denotes the sample size, and we observe samples $X^{(1)}, \dots, X^{(n)} \overset{\mathrm{i.i.d.}}{\sim} P$. We further define the empirical distribution
\[
P_n \coloneqq \frac{1}{n} \sum_{i = 1}^n \delta_{X^{(i)}}.
\]

The estimation scheme is then to maximize the empirical objective
\[
\widehat{W}_n \in \operatorname*{\arg\max}_{W \in \mathcal{O}_d(\mathbb{R})} \widehat{F}_n(W),
\]
where
\begin{equation}
\widehat{F}_n(W) \coloneqq \sum_{j = 1}^d \mathcal{W}_2\bigl( (\pi_{W_j})_\sharp P_n, \mathcal{N}(0, 1) \bigr)^2. \label{eq:empirical-ica-objective}
\end{equation}

\begin{remark}
Each $2$-Wasserstein distance to the Gaussian can be computed in $O(n \log n)$ time (see Lemma~\ref{lem:empirical-gaussian-wasserstein}).
\end{remark}

\begin{remark}
Since $\widehat{F}_n$ is continuous on the compact set $\mathcal{O}_d(\mathbb{R})$, the maximum in Equation~\eqref{eq:empirical-ica-objective} is attained.
\end{remark}

\subsection{Convergence Guarantees}

We first establish convergence of the empirical objective, then use the oracle identifiability to derive consistency of its global maximizers. The result is distribution-free in the sense that the bound is not tied to a parametric source model, a density estimator, or a selected contrast nonlinearity. Following the max-sliced Wasserstein concentration result of \cite{han2024maxsliced}, we assume a finite moment of order strictly larger than four. This condition is verified, for instance, when $X$ is bounded, which also entails the non-Gaussianity of the sources.

\begin{assumption}[Moment Condition for Empirical Convergence] \label{ass:moment}
There exists $p > 4$ such that
\[
P \in \mathcal{P}_p(\mathbb{R}^d).
\]
\end{assumption}

\begin{theorem}[Uniform Convergence of the Empirical Objective] \label{thm:uniform-convergence}
Suppose $n \geq d \log(2n + 1)^{1 + 4/p}$ and Assumptions~\ref{ass:ica-standardization} and~\ref{ass:moment} hold. Then there exists a constant $C_p(P) > 0$, depending only on $p$ and $\mathbb{E}[ \Vert X \Vert^p ]$, such that
\[
\mathbb{E}\Bigl[ \sup_{W \in \mathcal{O}_d(\mathbb{R})} \bigl\vert \widehat{F}_n(W) - F(W) \bigr\vert \Bigr] \leq C_p(P) d^{5/4} n^{-1/4} \log(2n + 1)^{1/p + 1/4}.
\]
\end{theorem}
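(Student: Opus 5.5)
The argument reduces the uniform deviation of $\widehat{F}_n$ to a single max-sliced Wasserstein error between $P_n$ and $P$, and then applies the concentration result of \cite{han2024maxsliced}.

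\textbf{Step 1: reduction to one direction at a time.} Fix $W \in \mathcal{O}_d(\mathbb{R})$ with rows $W_j \in \mathbb{S}^{d-1}$, and write $\gamma = \mathcal{N}(0,1)$, $\mu_j = (\pi_{W_j})_\sharp P$ and $\mu_{n,j} = (\pi_{W_j})_\sharp P_n$. The identity $a^2 - b^2 = (a-b)(a+b)$ and the triangle inequality for $\mathcal{W}_2$ give
\[
\bigl\vert \mathcal{W}_2(\mu_{n,j},\gamma)^2 - \mathcal{W}_2(\mu_j,\gamma)^2 \bigr\vert \leq \mathcal{W}_2(\mu_{n,j},\mu_j)\bigl( \mathcal{W}_2(\mu_{n,j},\mu_j) + 2\mathcal{W}_2(\mu_j,\gamma) \bigr).
\]
Under Assumption~\ref{ass:ica-standardization}, $\mathbb{E}[(W_j^\top X)^2] = 1$. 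The coupling of $\mu_j$ with $\gamma$ through an independent Gaussian then gives $\mathcal{W}_2(\mu_j,\gamma)^2 \leq 2$, a dimension-free constant.

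\textbf{Step 2: uniform bound over $W$.} Each $\mathcal{W}_2(\mu_{n,j},\mu_j)$ is at most $\Delta_n \coloneqq \widebar{\mathcal{W}}_2(P_n,P)$ by Definition~\ref{def:max-sliced-wasserstein}. Summing over the $d$ rows gives
\[
\sup_{W \in \mathcal{O}_d(\mathbb{R})} \vert \widehat{F}_n(W) - F(W) \vert \leq d\bigl(\Delta_n^2 + 2\sqrt{2}\,\Delta_n\bigr).
\]
Taking expectations, it suffices to bound $\mathbb{E}[\Delta_n]$ and $\mathbb{E}[\Delta_n^2]$.

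\textbf{Step 3: concentration of $\Delta_n$ (main obstacle).} Under Assumption~\ref{ass:moment} with $p>4$ and $n \geq d\log(2n+1)^{1+4/p}$, I would invoke \cite{han2024maxsliced} in its squared form. I expect this to read
\[
\mathbb{E}\bigl[\widebar{\mathcal{W}}_2(P_n,P)^2\bigr] \lesssim C_p\,\mathbb{E}[\Vert X\Vert^p]^{2/p}\Bigl(\tfrac{d\log(2n+1)^{1+4/p}}{n}\Bigr)^{1/2},
\]
where the constant depends only on $p$ and $\mathbb{E}[\Vert X\Vert^p]$. The main difficulty is matching the exact form and normalization of that result, since it may be stated for $\mathcal{W}_p$ or with different logarithmic exponents. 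I would check it carefully rather than re-derive it.

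The $\Delta_n$ term then follows from Jensen's inequality:
\[
\mathbb{E}[\Delta_n] \leq \mathbb{E}[\Delta_n^2]^{1/2} \lesssim d^{1/4} n^{-1/4}\log(2n+1)^{1/4+1/p}.
\]
This term dominates, because the sample-size condition forces the quantity inside the parentheses above to be at most $1$, so its square root controls its first power. The $\Delta_n^2$ term is therefore at most a constant multiple of the same expression.

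Multiplying by the factor $d$ from Step 2 yields
\[
\mathbb{E}\Bigl[\sup_{W}\vert \widehat{F}_n - F\vert\Bigr] \leq C_p(P)\, d^{5/4} n^{-1/4}\log(2n+1)^{1/p+1/4},
\]
with all constants absorbed into $C_p(P)$. This is the rate claimed in Theorem~\ref{thm:uniform-convergence}.
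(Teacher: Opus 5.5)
Your proposal is correct and follows the paper's proof essentially step for step. The paper likewise bounds each summand using $\vert a^2 - b^2 \vert = \vert a - b \vert \, \vert a + b \vert$, the reverse triangle inequality, and the independent-coupling bound of Lemma~\ref{lem:standardized-gaussian-bound}; it dominates every projected error by $\Delta_n = \widebar{\mathcal{W}}_2(P_n, P)$, invokes Theorem~3.5 of \cite{han2024maxsliced} in exactly the squared form you anticipated, $\mathbb{E}\bigl[ \widebar{\mathcal{W}}_2(P_n, P)^2 \bigr] \leq C_p(P) \sqrt{d/n} \log(2n + 1)^{2/p + 1/2}$, and then applies Jensen's inequality and the sample-size condition to absorb the $d\,\mathbb{E}[\Delta_n^2]$ term into the $d\,\mathbb{E}[\Delta_n]$ term, just as you do.
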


\begin{remark}[$n^{-1/4}$ Convergence Rate]
Up to logarithmic terms, the achieved convergence rate matches the $n^{-1/4}$ rate established in \cite{mukherjee2023wasserstein}.
\end{remark}

\bigbreak

The proof of Theorem~\ref{thm:uniform-convergence} is provided in Appendix~\ref{pf:ica-uniform-convergence}.

\bigbreak

Theorem~\ref{thm:uniform-convergence} shows that the empirical objective uniformly approximates the oracle objective over $\mathcal{O}_d(\mathbb{R})$. To translate this objective convergence into recovery of an unmixing matrix, we must account for the equivalence class described by Theorem~\ref{thm:ica-identifiability}: the true unmixing matrix is recoverable only up to a signed permutation. We therefore choose to measure recovery by the Frobenius distance between the relative unmixing $W (W^\star)^{-1}$ and the finite set of signed permutation matrices.

\begin{definition}[Signed Permutation Recovery Error] \label{def:ica-recovery-error}
For every $W \in \mathcal{O}_d(\mathbb{R})$, define its signed permutation recovery error by
\[
\rho(W) \coloneqq \inf_{M \in \mathcal{M}_d} \Vert W (W^\star)^{-1} - M \Vert_F,
\]
where $\mathcal{M}_d$ is defined in Theorem~\ref{thm:ica-identifiability}.
\end{definition}

The quantity $\rho(W)$ vanishes if and only if $W$ recovers the source directions up to signed permutation and otherwise measures their Frobenius discrepancy. It is also used to quantify the recovery error in \cite{niu2022distribution}, but the metric remains largely arbitrary. For a fixed tolerance $\delta > 0$, the next quantity measures how much oracle objective value in $F$ is lost by every unmixing matrix whose recovery error is at least $\delta$.

\begin{definition}[Oracle Separation Margin] \label{def:ica-separation-margin}
For every $\delta > 0$, define the margin
\[
\Gamma(\delta) \coloneqq F(W^\star) - \max_{W \in \mathcal{K}_\delta} F(W),
\]
where $\mathcal{K}_\delta \coloneqq \bigl\{ W \in \mathcal{O}_d(\mathbb{R}) : \rho(W) \geq \delta \bigr\}$ is assumed to be nonempty, which always holds for $\delta$ small enough.
\end{definition}

By continuity of $F$, compactness of $\mathcal{K}_\delta$, and Theorem~\ref{thm:ica-identifiability} with its required assumptions, we have $\Gamma(\delta) > 0$ whenever $\mathcal{K}_\delta$ is nonempty. Uniform convergence can therefore be converted into recovery of the unmixing matrix.

\begin{corollary}[Consistency of Empirical Unmixing Matrices] \label{cor:ica-estimator-consistency}
Suppose $n \geq d \log(2n + 1)^{1 + 4/p}$ and Assumptions~\ref{ass:ica-standardization}, \ref{ass:ica-non-gaussianity} and~\ref{ass:moment} hold. If $\delta > 0$ and $\mathcal{K}_\delta$ is nonempty, then there exists a constant $C_p(P) > 0$, depending only on $p$ and $\mathbb{E}[ \Vert X \Vert^p ]$, such that
\[
\mathbb{P}\bigl( \rho(\widehat{W}_n) \geq \delta \bigr) \leq \mathbb{P}\Bigl( \sup_{W \in \mathcal{O}_d(\mathbb{R})} \bigl\vert \widehat{F}_n(W) - F(W) \bigr\vert \geq \frac{\Gamma(\delta)}{2} \Bigr) \leq \frac{C_p(P) d^{5/4} n^{-1/4} \log(2n + 1)^{1/p + 1/4}}{\Gamma(\delta)}.
\]

Thus, we have convergence in probability $\rho(\widehat{W}_n) \overset{\mathbb{P}}{\to} 0$ and in expectation $\mathbb{E}\bigl[ \rho(\widehat{W}_n) \bigr] \to 0$.
\end{corollary}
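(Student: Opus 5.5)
The plan is the standard argmax-consistency argument, followed by Markov's inequality and the uniform bound of Theorem~\ref{thm:uniform-convergence}. Write $\Delta_n \coloneqq \sup_{W \in \mathcal{O}_d(\mathbb{R})} \vert \widehat{F}_n(W) - F(W) \vert$.

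\textbf{Deterministic inclusion.} I first show the event inclusion $\{\rho(\widehat{W}_n) \geq \delta\} \subseteq \{\Delta_n \geq \Gamma(\delta)/2\}$. Suppose $\rho(\widehat{W}_n) \geq \delta$, so that $\widehat{W}_n \in \mathcal{K}_\delta$ and hence $F(\widehat{W}_n) \leq F(W^\star) - \Gamma(\delta)$. Since $\widehat{W}_n$ maximizes $\widehat{F}_n$ and $W^\star \in \mathcal{O}_d(\mathbb{R})$, we have $\widehat{F}_n(\widehat{W}_n) \geq \widehat{F}_n(W^\star)$. Therefore
\[
\Gamma(\delta) \leq F(W^\star) - F(\widehat{W}_n) \leq \bigl[F(W^\star) - \widehat{F}_n(W^\star)\bigr] + \bigl[\widehat{F}_n(\widehat{W}_n) - F(\widehat{W}_n)\bigr] \leq 2\Delta_n .
\]
This gives the first inequality of the statement.

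The bound uses the positivity of $\Gamma(\delta)$, which follows from the remark after Definition~\ref{def:ica-separation-margin}. That remark relies on three facts. Theorem~\ref{thm:ica-identifiability} shows that the maximizers of $F$ are exactly $\mathcal{M}_d W^\star$. Each such maximizer satisfies $\rho = 0$, because $\rho$ is invariant under left multiplication by $\mathcal{M}_d$. Finally, $\mathcal{K}_\delta$ is compact and $F$ is continuous. I will briefly justify continuity of $F$: for orthogonal $W, W'$,
\[
\bigl\vert \mathcal{W}_2((WX)_j, \mathcal{N}) - \mathcal{W}_2((W'X)_j, \mathcal{N}) \bigr\vert \leq \Vert W_j - W'_j \Vert \, \mathbb{E}\Vert X\Vert^2{}^{1/2},
\]
and every distance is bounded by $2$, since both laws have unit variance.

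\textbf{Probability bound.} Markov's inequality gives $\mathbb{P}(\Delta_n \geq \Gamma(\delta)/2) \leq 2\mathbb{E}[\Delta_n]/\Gamma(\delta)$. Inserting Theorem~\ref{thm:uniform-convergence}, whose hypotheses are assumed, yields the stated bound after absorbing the factor $2$ into $C_p(P)$.

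\textbf{Modes of convergence.} For fixed $\delta$, the right-hand side tends to $0$, since $n^{-1/4}\log(2n+1)^{1/p+1/4} \to 0$. The size condition $n \geq d\log(2n+1)^{1+4/p}$ holds for all large $n$, so convergence in probability follows. If $\mathcal{K}_\delta$ is empty for some $\delta$, the probability is trivially zero.

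Expectation requires boundedness of $\rho$. For any $M \in \mathcal{M}_d$, $\rho(W) \leq \Vert W(W^\star)^{-1}\Vert_F + \Vert M\Vert_F = 2\sqrt{d}$. Hence
\[
\mathbb{E}[\rho(\widehat{W}_n)] \leq \delta + 2\sqrt{d}\,\mathbb{P}(\rho(\widehat{W}_n) \geq \delta).
\]
Taking $\limsup_n$ and then letting $\delta \to 0$ gives $\mathbb{E}[\rho(\widehat{W}_n)] \to 0$.

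\textbf{Technical points.} The main technical point is measurability of $\widehat{W}_n$. I would handle it by fixing a measurable selection of the argmax, which exists because $\widehat{F}_n$ is continuous in $W$ and measurable in the data on a compact set. Alternatively, the argument works with outer probability, since the inclusion above is deterministic. The remaining steps are routine.
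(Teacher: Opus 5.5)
Your proposal is correct and follows the paper's proof essentially step for step: the deterministic inclusion via $\Gamma(\delta) \leq 2 \sup_{W} \vert \widehat{F}_n(W) - F(W) \vert$ using optimality of $\widehat{W}_n$, then Markov's inequality combined with Theorem~\ref{thm:uniform-convergence}, then the bound $\rho \leq 2\sqrt{d}$ with $\mathbb{E}[\rho(\widehat{W}_n)] \leq \delta + 2\sqrt{d}\,\mathbb{P}(\rho(\widehat{W}_n) \geq \delta)$ to pass to expectation. Your extra justifications fill in points the paper leaves implicit: continuity of $F$ for $\Gamma(\delta) > 0$, the empty-$\mathcal{K}_\delta$ case, the sample-size condition holding eventually, and measurability of $\widehat{W}_n$.
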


\bigbreak

The proof of Corollary~\ref{cor:ica-estimator-consistency} is provided in Appendix~\ref{pf:ica-estimator-consistency}.

\bigbreak

Corollary~\ref{cor:ica-estimator-consistency} establishes consistency of global empirical maximizers without an additional margin assumption. The probability bound is quantitative at every fixed recovery tolerance $\delta$. A direct rate for $\rho(\widehat{W}_n)$ would require a lower bound on $\Gamma(\delta)$ as $\delta \to 0$. The result also does not apply directly to a local numerical solution unless its optimization error is controlled.

\section{Causal Inference in Linear Non-Gaussian SEMs}

In this section, we further specialize our framework and show that the Wasserstein strict subadditivity principle can also be used for causal order recovery in linear non-Gaussian acyclic models. As in many two-stage approaches, including ICA-LiNGAM \citep{shimizu2006linear}, we first focus on \emph{order} recovery. Given a causal order, inferring the structural coefficients then reduces to a sequence of sparse regression problems, which can be solved efficiently by adaptive Lasso-type procedures \citep{zou2006adaptive, shimizu2006linear}. As in the ICA analysis, the data distribution is assumed to be centered at the population level, which empirically corresponds to a centering step obtained by subtracting the sample mean. Intuitively, in the oracle setting, the correct causal order yields sequential least-squares residuals that coincide with structural noises. In a wrong order, at least one residual combines several independent noises, bringing it closer to a Gaussian distribution. Hence, the strict Wasserstein inequality can distinguish correct orders from incorrect ones.

More precisely, orthogonal rotations in the ICA analysis and sequential least-squares residualization under a candidate causal order both produce one-dimensional linear combinations of the independent components. Lemma~\ref{lem:strict-wasserstein} therefore applies to the recovered ICA coordinates and to the standardized residuals.

\subsection{Non-Gaussian Model}

Let $d \geq 2$ denote the number of variables, let $X = (X_1, \dots, X_d)^\top$ be a random vector with distribution $P$, and let $\varepsilon = (\varepsilon_1, \dots, \varepsilon_d)^\top$ denote the structural noises. We assume that the data follow the model
\begin{equation} \label{eq:lingam}
X = B^\star X + \varepsilon,
\end{equation}
where $B^\star \in \mathbb{R}^{d \times d}$ is the structural matrix, whose associated graph is assumed to be \emph{acyclic}, meaning that it is impossible to start from one variable, follow a sequence of directed relationships, and eventually return to the same variable.

\bigbreak

Formally, to each structural matrix $B^\star$, one associates a directed graph $\mathcal{G}^\star$ with vertices $\llbracket d \rrbracket$ and edges $E^\star \coloneqq \bigl\{ (j, k) \in \llbracket d \rrbracket^2 : B^\star_{kj} \neq 0 \bigr\}$. Thus, Model~\eqref{eq:lingam} is equivalently written coordinatewise as an SEM
\[
\forall j \in \llbracket d \rrbracket, \, X_j = \sum_{k \in \mathrm{Pa}_{\mathcal{G}^\star}(j)} B^\star_{jk} X_k + \varepsilon_j, \quad \text{where } \mathrm{Pa}_{\mathcal{G}^\star}(j) \coloneqq \bigl\{ k \in \llbracket d \rrbracket : B^\star_{jk} \neq 0 \bigr\},
\]
where $\mathrm{Pa}_{\mathcal{G}^\star}(j)$ denotes the set of parents of node $j \in \llbracket d \rrbracket$, that is, the variables having a direct effect on it. Furthermore, because $\mathcal{G}^\star$ is acyclic, there exists a \emph{causal ordering} (otherwise known as a topological ordering) of its vertices under which the corresponding permuted structural matrix is strictly lower triangular.

\begin{definition}[Causal Orders] \label{def:topological-order}
Consider an order $\sigma \in \mathfrak{S}_d$, $\mathfrak{S}_d$ being the set of permutations of $\llbracket d \rrbracket$, and denote by $\Pi^\sigma \in \mathbb{R}^{d \times d}$ its associated permutation matrix. The set of causal orders associated with Model~\eqref{eq:lingam} is defined by
\[
\mathcal{I}^\star \coloneqq \bigl\{ \sigma \in \mathfrak{S}_d : \Pi^\sigma B^\star (\Pi^\sigma)^\top \text{ is strictly lower triangular} \bigr\}.
\]
\end{definition}

In other words, we represent an order $\sigma \in \mathfrak{S}_d$ by the tuple $\bigl( \sigma(1), \dots, \sigma(d) \bigr)$, which lists variables from source (no parents) to sink (no children). We further define the \emph{predecessors} of $j \in \llbracket d \rrbracket$ as the preceding variables in this order, i.e., the set $A_j(\sigma) \coloneqq \bigl\{ k \in \llbracket d \rrbracket : \sigma^{-1}(k) < \sigma^{-1}(j) \bigr\}$. An order $\sigma^\star \in \mathcal{I}^\star$ is causal precisely when every parent appears among the predecessors of its child.

We now impose the standard LiNGAM requirements of noise normalization and non-Gaussianity.

\begin{assumption}[Structural Noise Normalization] \label{ass:causal-noise-normalization}
The structural noises $\varepsilon_1, \dots, \varepsilon_d$ are mutually independent, and
\[
\forall j \in \llbracket d \rrbracket, \, \mathbb{E}[\varepsilon_j] = 0, \quad \mathbb{E}[\varepsilon_j^2] \in (0, +\infty).
\]
\end{assumption}

\begin{assumption}[Structural Noise Non-Gaussianity] \label{ass:causal-non-gaussianity}
At most one of the structural noises $\varepsilon_1, \dots, \varepsilon_d$ is Gaussian.
\end{assumption}

\subsection{Optimal Transport-Based Identifiability}

The representation $X = (I_d - B^\star)^{-1} \varepsilon$ expresses the observations as linear mixtures of the independent structural noises. One may therefore estimate $I_d - B^\star$ through ICA, which guarantees identifiability. This is the approach of ICA-LiNGAM, which first estimates $B^\star$ by ICA and then extracts a causal order from the resulting estimate, as detailed in Paragraph~\ref{subsec:ica-lingam}. Instead, we propose to optimize our Wasserstein criterion directly over causal orders through sequential residualization, which follows the general principle underlying DirectLiNGAM, but replaces independence testing criteria by a non-Gaussianity measure.

We first define the standardization operator used in the Wasserstein objective below.

\begin{definition}[Standardization] \label{def:standardization-causal-order}
Let $m \geq 1$ and $Z \coloneqq (Z_1, \ldots, Z_m)^\top$ be a random vector whose components have finite nonzero second moments. We define its componentwise standardization by
\[
\mathrm{std}(Z) \coloneqq \biggl( \frac{Z_1}{\sqrt{\mathbb{E}[Z_1^2]}}, \ldots, \frac{Z_m}{\sqrt{\mathbb{E}[Z_m^2]}} \biggr)^\top.
\]
\end{definition}

\begin{remark}
For a probability measure $\mu \in \mathcal{P}_2(\mathbb{R})$ with nonzero second moment, we extend the operator of Definition~\ref{def:standardization-causal-order} and let $\mathrm{std}(\mu)$ denote its pushforward by $x \mapsto x / \sqrt{\int x^2 \, d\mu(x)}$.
\end{remark}

We now introduce the counterpart of the oracle ICA objective, which is defined over candidate orders instead of orthogonal matrices. For each order, every variable is regressed on its predecessors, and the resulting residuals are compared to the standard Gaussian.

\begin{definition}[Oracle Wasserstein Order Objective] \label{def:wasserstein-lingam-objective}
For every candidate order $\sigma \in \mathfrak{S}_d$, define
\[
G(\sigma) \coloneqq \sum_{j = 1}^d \mathcal{W}_2\Bigl( \mathrm{std}\bigl( R_j(\sigma) \bigr), \mathcal{N}(0, 1) \Bigr)^2,
\]
where, for any $j \in \llbracket d \rrbracket$, $R_j$ denotes the residual of $X_j$ after regression on its predecessors
\[
R_j(\sigma) \coloneqq X_j - \mathrm{proj}_{\mathrm{span}\bigl\{ X_k : k \in A_j(\sigma) \bigr\}} X_j,
\]
where
\[
A_j(\sigma) \coloneqq \bigl\{ k : \sigma^{-1}(k) < \sigma^{-1}(j) \bigr\}.
\]
\end{definition}

To relate the order-based objective to the ICA analysis, we first show that the standardized residuals corresponding to any candidate order form an orthogonal mixture of the standardized structural noises.

\begin{lemma}[Orthogonal Representation of Order Residuals] \label{lem:order-residuals-orthogonal}
Suppose Assumption~\ref{ass:causal-noise-normalization} holds. Then for every order $\sigma \in \mathfrak{S}_d$
\[
\exists Q^\sigma \in \mathcal{O}_d(\mathbb{R}), \, \mathrm{std}\bigl( R(\sigma) \bigr) = Q^\sigma \mathrm{std}(\varepsilon).
\]
\end{lemma}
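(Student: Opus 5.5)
The plan is to work in the Hilbert space $L^2$ of square-integrable random variables. Write $\tilde\varepsilon \coloneqq \mathrm{std}(\varepsilon)$. By Assumption~\ref{ass:causal-noise-normalization}, the coordinates of $\tilde\varepsilon$ are centered, pairwise uncorrelated and of unit variance, so they form an orthonormal family in $L^2$. Since $X = (I_d - B^\star)^{-1}\varepsilon$ and $I_d - B^\star$ is invertible (acyclicity makes $B^\star$ nilpotent after permutation), every $X_k$ lies in $H \coloneqq \mathrm{span}\{\tilde\varepsilon_1,\dots,\tilde\varepsilon_d\}$, and in fact $\mathrm{span}\{X_1,\dots,X_d\} = H$. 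It follows that every residual $R_j(\sigma)$ lies in $H$, so $R(\sigma) = C^\sigma \tilde\varepsilon$ for some matrix $C^\sigma \in \mathbb{R}^{d\times d}$. Because $\tilde\varepsilon$ is orthonormal, $\mathbb{E}[R(\sigma) R(\sigma)^\top] = C^\sigma (C^\sigma)^\top$.

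The key step is that the residuals are mutually orthogonal and nonzero. Order the variables by $\sigma$ and set $V_i \coloneqq \mathrm{span}\{X_{\sigma(1)},\dots,X_{\sigma(i)}\}$. Then $R_{\sigma(i)}(\sigma) = X_{\sigma(i)} - \mathrm{proj}_{V_{i-1}} X_{\sigma(i)}$ is exactly the Gram--Schmidt step. It lies in $V_i$ and is orthogonal to $V_{i-1}$, which contains every earlier residual; hence the residuals are pairwise orthogonal in $L^2$. Since they are centered, orthogonality is the same as zero covariance.

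Nondegeneracy comes from linear independence of $X_1,\dots,X_d$. Their covariance is $(I_d-B^\star)^{-1}\mathrm{diag}(\mathbb{E}[\varepsilon_j^2])(I_d-B^\star)^{-\top}$, which is positive definite because each $\mathbb{E}[\varepsilon_j^2]>0$. Hence $X_{\sigma(i)} \notin V_{i-1}$, so $\mathbb{E}[R_j(\sigma)^2] > 0$ and $\mathrm{std}$ is well defined.

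Now put $D \coloneqq \mathrm{diag}\bigl(\mathbb{E}[R_j(\sigma)^2]^{-1/2}\bigr)$ and $Q^\sigma \coloneqq D C^\sigma$. Then $\mathrm{std}(R(\sigma)) = Q^\sigma \tilde\varepsilon$, and
\[
Q^\sigma (Q^\sigma)^\top = D\, \mathbb{E}[R(\sigma)R(\sigma)^\top]\, D = I_d,
\]
because the covariance is diagonal with entries $\mathbb{E}[R_j(\sigma)^2]$. A square matrix with $QQ^\top = I_d$ is orthogonal, so $Q^\sigma \in \mathcal{O}_d(\mathbb{R})$.

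The main point needing care is the positivity of the residual variances, which relies on invertibility of $I_d - B^\star$ and on $\mathbb{E}[\varepsilon_j^2] > 0$. Beyond that, the argument is the observation that sequential residualization is Gram--Schmidt inside the $d$-dimensional space $H$ spanned by the noises. Full independence is not needed here; uncorrelatedness suffices.
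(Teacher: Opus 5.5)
Your proposal is correct and follows essentially the paper's argument: write the residuals as a linear image of $\mathrm{std}(\varepsilon)$, recognize sequential residualization as Gram--Schmidt so that $\mathrm{std}(R(\sigma))$ has identity covariance, and conclude $Q^\sigma (Q^\sigma)^\top = I_d$. You also explicitly check that each residual variance is positive, using positive definiteness of $\mathrm{Cov}(X)$, which the paper's proof leaves implicit, so your version is slightly more complete.
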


\bigbreak

The proof of Lemma~\ref{lem:order-residuals-orthogonal} is provided in Appendix~\ref{pf:order-residuals-orthogonal}.

\bigbreak

Replacing $\mathrm{std}\bigl( R_j(\sigma) \bigr)$ by $(Q^\sigma \mathrm{std}(\varepsilon))_j$ in Definition~\ref{def:wasserstein-lingam-objective}, we recognize our proposed ICA criterion of Definition~\ref{def:ica-objective-function}. Thus, every candidate order induces the same type of orthogonal mixing considered in the ICA analysis. The equality case additionally requires the following causal characterization: an order whose residuals recover individual structural noises without mixing must be causal. This guarantee is established in the following proposition.

\begin{proposition}[Signed Permutation Implies a Causal Order] \label{prop:permutation-causal}
Suppose Assumption~\ref{ass:causal-noise-normalization} holds. If, for some order $\sigma \in \mathfrak{S}_d$, the matrix $Q^\sigma$ from Lemma~\ref{lem:order-residuals-orthogonal} is a signed permutation matrix, then $\sigma \in \mathcal{I}^\star$.
\end{proposition}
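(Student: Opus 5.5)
The plan is to work in the Hilbert space $L^2$ of square-integrable random variables and translate the signed-permutation hypothesis into a statement about nested subspaces. For $k \in \llbracket d \rrbracket$, set $S_k \coloneqq \{\sigma(1), \dots, \sigma(k)\}$ and $V_k \coloneqq \mathrm{span}\{X_i : i \in S_k\}$.

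\textbf{Step 1: spans of the residuals.} By Definition~\ref{def:wasserstein-lingam-objective}, the residuals $R_{\sigma(1)}, \dots, R_{\sigma(d)}$ are exactly the Gram--Schmidt residuals of $X_{\sigma(1)}, \dots, X_{\sigma(d)}$, taken in the order $\sigma$. Hence, by induction on $k$,
\[
\mathrm{span}\{R_i : i \in S_k\} = V_k .
\]

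\textbf{Step 2: use the hypothesis on $Q^\sigma$.} If $Q^\sigma = \mathrm{diag}(\epsilon)\Pi$ is a signed permutation matrix, Lemma~\ref{lem:order-residuals-orthogonal} says that each $R_j(\sigma)$ is a nonzero multiple of a single noise $\varepsilon_{\tau(j)}$, for some permutation $\tau$. Therefore
\[
V_k = \mathrm{span}\{\varepsilon_i : i \in T_k\}, \qquad T_k \coloneqq \tau(S_k), \qquad \# T_k = k .
\]

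\textbf{Step 3: two linear-independence facts.} By Assumption~\ref{ass:causal-noise-normalization}, the noises are mutually independent, centered and have positive variances, so they are orthogonal and nonzero in $L^2$, hence linearly independent. Since $X = (I_d - B^\star)^{-1}\varepsilon$ with $I_d - B^\star$ invertible (acyclicity makes it unit triangular after permutation), the covariance $\mathrm{Cov}(X)$ is positive definite. Thus $X_1, \dots, X_d$ are also linearly independent in $L^2$.

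\textbf{Step 4: the key inclusion $T_k \subseteq S_k$.} Take $i \in T_k$. From the structural equation,
\[
\varepsilon_i = X_i - \sum_{l} B^\star_{il} X_l ,
\]
and $B^\star_{ii} = 0$ by acyclicity. Because $\varepsilon_i \in V_k$, it can also be written as a linear combination of $\{X_l : l \in S_k\}$. By uniqueness of coordinates from Step 3, the coefficient vector $e_i - B^\star_{i,\cdot}$ must be supported on $S_k$. Its $i$-th entry equals $1$, so $i \in S_k$. Moreover $B^\star_{il} = 0$ for every $l \notin S_k$, so all parents of $i$ lie in $S_k$.

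\textbf{Step 5: conclude.} Since $\# T_k = \# S_k = k$, the inclusion gives $T_k = S_k$. So Step 4 shows that every $i \in S_k$ has $\mathrm{Pa}_{\mathcal{G}^\star}(i) \subseteq S_k$. Apply this with $i = \sigma(k)$: its parents lie in $S_k$ and exclude $\sigma(k)$ itself, so they lie in $S_{k-1} = A_{\sigma(k)}(\sigma)$. As this holds for every $k$, every parent precedes its child in $\sigma$. Equivalently, $\Pi^\sigma B^\star (\Pi^\sigma)^\top$ is strictly lower triangular, i.e.\ $\sigma \in \mathcal{I}^\star$.

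\textbf{Main obstacle.} The delicate point is that the coefficients of $X$ in terms of $\varepsilon$, given by $(I_d - B^\star)^{-1}$, may vanish through path cancellations (unfaithfulness). An argument based on "each $X_j$ loads on all its ancestors" would therefore fail. The plan avoids this by expressing each noise through the rows of $I_d - B^\star$, whose support is exactly $\{i\} \cup \mathrm{Pa}_{\mathcal{G}^\star}(i)$, and relying only on the linear independence of the $X_j$.
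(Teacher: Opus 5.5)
Your proof is correct, and it takes a different route from the paper's.

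\textbf{The paper's route.} The paper argues inductively. The first residual is $X_{\sigma(1)}$ itself, which is proportional to a single noise and hence exogenous by Lemma~\ref{lem:prop-parents}. The paper then residualizes the remaining variables on $X_{\sigma(1)}$. It asserts that the result is again a linear acyclic model on the remaining vertices, with unchanged later residuals, so the signed-permutation property is inherited. It repeats this step until every variable has been selected.

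\textbf{Your route.} You give a single global argument instead. The Gram--Schmidt identity $\mathrm{span}\{R_i : i \in S_k\} = V_k$ turns the hypothesis into $V_k = \mathrm{span}\{\varepsilon_i : i \in T_k\}$. Reading the row identity $\varepsilon_i = ((I_d - B^\star)X)_i$ in the basis $X_1, \dots, X_d$ then gives two facts at once: $T_k \subseteq S_k$, and each $S_k$ is closed under taking parents.

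\textbf{What each approach buys.} Your route needs neither Lemma~\ref{lem:prop-parents} nor the reduced-model construction. The paper leaves the key facts of that construction implicit. One such fact is that the residualized variables satisfy an SEM whose structural matrix is the restriction of $B^\star$ and whose noises are the original ones. Without it, ``exogenous in the reduced model'' would not translate into a statement about the original parents. Your argument also gives a sharper byproduct: $T_k = S_k$ for all $k$, so $\tau = \mathrm{id}$ and $R_j(\sigma) \propto \varepsilon_j$ for every $j$. The paper's inductive formulation, for its part, mirrors the sequential exogenous-variable removal of DirectLiNGAM. It is also the template reused in the proof of Theorem~\ref{thm:greedy-equal-score}, where the selection is driven by scores rather than by a known signed permutation.
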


\bigbreak

The proof of Proposition~\ref{prop:permutation-causal} is provided in Appendix~\ref{pf:permutation-causal}.

\bigbreak

Combining the orthogonal residual representation of Lemma~\ref{lem:order-residuals-orthogonal} and the causal order characterization of Proposition~\ref{prop:permutation-causal}, the following theorem now yields causal order identification.

\begin{theorem}[Identification of Causal Orders] \label{thm:causal-order-identification}
Suppose Assumptions~\ref{ass:causal-noise-normalization} and~\ref{ass:causal-non-gaussianity} hold. Then
\[
\operatorname*{\arg\max}_{\sigma \in \mathfrak{S}_d} G(\sigma) = \mathcal{I}^\star.
\]
\end{theorem}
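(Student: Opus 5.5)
We will compare every order's objective value to a single reference quantity, namely the total non-Gaussianity of the standardized noises,
\[
G^\star \coloneqq \sum_{k=1}^d \mathcal{W}_2\bigl( \mathrm{std}(\varepsilon)_k, \mathcal{N}(0,1) \bigr)^2 .
\]
We show that causal orders attain exactly $G^\star$ and that every non-causal order scores strictly less.

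\textbf{Causal orders attain $G^\star$.} Let $\sigma \in \mathcal{I}^\star$. Every parent of $j$ lies in $A_j(\sigma)$. Every predecessor $X_k$ is a linear function of the noises $\{\varepsilon_l : \sigma^{-1}(l) < \sigma^{-1}(j)\}$, and these are independent of $\varepsilon_j$. Hence $\varepsilon_j$ is orthogonal to the span of the predecessors, while $\sum_{k} B^\star_{jk}X_k$ lies in that span. So $R_j(\sigma) = \varepsilon_j$, and therefore $\mathrm{std}(R_j(\sigma)) = \mathrm{std}(\varepsilon)_j$. Summing over $j$ gives $G(\sigma) = G^\star$.

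\textbf{Every order satisfies $G(\sigma) \le G^\star$.} Fix an arbitrary $\sigma \in \mathfrak{S}_d$. Lemma~\ref{lem:order-residuals-orthogonal} gives $\mathrm{std}(R(\sigma)) = Q^\sigma \mathrm{std}(\varepsilon)$ with $Q^\sigma$ orthogonal. The components of $\mathrm{std}(\varepsilon)$ satisfy Assumption~\ref{ass:ica-standardization}: they are independent, centered and of unit variance by Assumption~\ref{ass:causal-noise-normalization}. They also satisfy Assumption~\ref{ass:ica-non-gaussianity}, since scaling preserves Gaussianity and Assumption~\ref{ass:causal-non-gaussianity} applies. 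Each row $\alpha = Q^\sigma_{j\cdot}$ has unit norm. Applying Lemma~\ref{lem:wasserstein-linear-combination} with independent standard Gaussian references $Z_k$, so that $\sum_k \alpha_k Z_k \sim \mathcal{N}(0,1)$, gives
\[
\mathcal{W}_2\bigl(\mathrm{std}(R_j(\sigma)),\mathcal{N}(0,1)\bigr)^2 \le \sum_k (Q^\sigma_{jk})^2\,\mathcal{W}_2\bigl(\mathrm{std}(\varepsilon)_k,\mathcal{N}(0,1)\bigr)^2 .
\]
Summing over $j$ and using that the columns of $Q^\sigma$ also have unit norm, $\sum_j (Q^\sigma_{jk})^2 = 1$, yields $G(\sigma) \le G^\star$. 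Combined with the first step, this shows $\max_\sigma G = G^\star$ and $\mathcal{I}^\star \subseteq \arg\max_\sigma G$.

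\textbf{Maximizers are causal.} Suppose $G(\sigma) = G^\star$. Any row of $Q^\sigma$ with more than one nonzero entry would make the corresponding inequality above strict by Lemma~\ref{lem:strict-wasserstein}, and hence $G(\sigma) < G^\star$. So every row of $Q^\sigma$ has exactly one nonzero entry, equal to $\pm 1$ because the row has unit norm. An orthogonal matrix with this property is a signed permutation matrix. Proposition~\ref{prop:permutation-causal} then gives $\sigma \in \mathcal{I}^\star$.

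\textbf{Main obstacle.} The delicate point is the case where one noise is Gaussian, because the strict inequality only rules out rows that mix at least two noises. Lemma~\ref{lem:strict-wasserstein} is stated under the same at-most-one-Gaussian hypothesis, so I expect it to handle this case. If it does not, we would fall back on a Cramér-type argument showing that a mixture involving a non-Gaussian component cannot be Gaussian. The first step also needs some care: the orthogonality of $\varepsilon_j$ to the predecessors must be checked as uncorrelatedness in $L^2$, which follows from their independence and zero means.
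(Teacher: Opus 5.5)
Your proof is correct and follows essentially the same route as the paper. It bounds each row via Lemma~\ref{lem:wasserstein-linear-combination} applied to the orthogonal representation of Lemma~\ref{lem:order-residuals-orthogonal}, gets equality for causal orders since $R_j(\sigma)=\varepsilon_j$, and handles the converse through Lemma~\ref{lem:strict-wasserstein} and Proposition~\ref{prop:permutation-causal}. Your extra checks make explicit what the paper leaves implicit: that $\mathrm{std}(\varepsilon)$ satisfies the ICA assumptions, and that $\varepsilon_j$ is uncorrelated with the predecessors.
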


\bigbreak

The proof of Theorem~\ref{thm:causal-order-identification} is provided in Appendix~\ref{pf:causal-order-identification}.

\bigbreak

In other words, Theorem~\ref{thm:causal-order-identification} shows that the oracle objective identifies exactly the set of causal orders, rather than a unique permutation, since a directed acyclic graph may admit several causal orderings. However, note that the result only concerns order recovery. Once a causal order is known, the structural coefficients still have to be (sparsely) estimated by regressing each variable on its predecessors.

\subsection{Empirical Estimation} \label{subsec:lingam-estim}

Let $n \geq 1$ and $X^{(1)}, \dots, X^{(n)} \overset{\mathrm{i.i.d.}}{\sim} P$ be the observed sample. For every candidate order $\sigma \in \mathfrak{S}_d$, let $\widehat{R}_j(\sigma) = \bigl( \widehat{R}_j^{(1)}(\sigma), \dots, \widehat{R}_j^{(n)}(\sigma) \bigr)$ denote the ordinary least-squares residual vector obtained by regressing $X_j$ on its predecessors $A_j(\sigma)$ under $\sigma$.

The estimation scheme is then to maximize the empirical order objective
\[
\hat{\sigma}_n \in \operatorname*{\arg\max}_{\sigma \in \mathfrak{S}_d} \widehat{G}_n(\sigma),
\]
where
\begin{equation}
\widehat{G}_n(\sigma) \coloneqq \sum_{j = 1}^d \mathcal{W}_2\Bigl( \mathrm{std}\bigl( \frac{1}{n} \sum_{i = 1}^n \delta_{\widehat{R}_j^{(i)}(\sigma)} \bigr), \mathcal{N}(0, 1) \Bigr)^2. \label{eq:empirical-lingam-objective}
\end{equation}

Unlike the continuous orthogonal ICA problem, this finite, decomposable order objective can be tractably optimized exactly via dynamic programming.

\begin{remark}
Assumption~\ref{ass:causal-noise-normalization} implies that $\Sigma \coloneqq \mathbb{E}[X X^\top] \succ 0$. Therefore, for $n$ large enough, all sequential least-squares regressions are well-defined with high probability.
\end{remark}

\subsection{Convergence Guarantees}

\paragraph{Distribution-Free Case.}

Because the order space is finite for fixed $d$, uniform convergence follows by controlling finitely many empirical regressions and one-dimensional Wasserstein distances.

\begin{theorem}[Uniform Convergence of the Empirical Order Objective] \label{thm:dag-uniform-convergence}
Suppose Assumption~\ref{ass:causal-noise-normalization} holds. If all sequential least-squares regressions implicit in Equation~\eqref{eq:empirical-lingam-objective} are almost surely well-defined for $n$ large enough, then
\[
\max_{\sigma \in \mathfrak{S}_d} \bigl\vert \widehat{G}_n(\sigma) - G(\sigma) \bigr\vert \overset{\mathrm{a.s.}}{\to} 0.
\]
\end{theorem}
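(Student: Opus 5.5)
Since $\mathfrak{S}_d$ is finite for fixed $d$, it suffices to show $\widehat{G}_n(\sigma) \overset{\mathrm{a.s.}}{\to} G(\sigma)$ for each fixed $\sigma$ and each coordinate $j$. The maximum over finitely many sequences converging almost surely still converges almost surely. We therefore fix $\sigma$ and $j$, write $A = A_j(\sigma)$, and let $\beta \coloneqq \Sigma_{AA}^{-1}\Sigma_{Aj}$ be the population regression coefficient. By the remark following Equation~\eqref{eq:empirical-lingam-objective}, $\Sigma \succ 0$, so $\beta$ is well defined. The population residual is $R_j = X_j - \beta^\top X_A$. The empirical residuals are $\widehat{R}_j^{(i)} = X_j^{(i)} - \hat\beta_n^\top X_A^{(i)}$, where $\hat\beta_n$ is the OLS coefficient. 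It is well defined for $n$ large almost surely by hypothesis. If an intercept is included, it only adds sample means, and these converge almost surely to $0$.

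The first step is coefficient consistency. Assumption~\ref{ass:causal-noise-normalization} gives finite second moments of $X = (I_d - B^\star)^{-1}\varepsilon$. The strong law of large numbers then gives $\frac1n\sum_i X^{(i)}X^{(i)\top} \to \Sigma$ almost surely. Since matrix inversion is continuous at $\Sigma_{AA} \succ 0$, we get $\hat\beta_n \to \beta$ almost surely.

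The second step is the reduction to a fixed linear functional. Let $\mu_n$ be the empirical measure of the $\widehat{R}_j^{(i)}$, and let $\nu_n$ be the empirical measure of the oracle-coefficient residuals $X_j^{(i)} - \beta^\top X_A^{(i)}$, which are i.i.d.\ copies of $R_j$. Coupling index by index gives
\[
\mathcal{W}_2(\mu_n,\nu_n)^2 \le \frac1n\sum_i \bigl((\hat\beta_n-\beta)^\top X_A^{(i)}\bigr)^2 \le \Vert\hat\beta_n-\beta\Vert^2\,\frac1n\sum_i\Vert X^{(i)}\Vert^2 \to 0
\]
almost surely. In one dimension, $\nu_n \to \mathrm{law}(R_j)$ in $\mathcal{W}_2$ almost surely. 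This follows from weak convergence by Varadarajan's theorem plus convergence of second moments by the strong law, which together characterize $\mathcal{W}_2$ convergence \citep{villani2009optimal}. The triangle inequality then gives $\mathcal{W}_2(\mu_n, \mathrm{law}(R_j)) \to 0$ almost surely.

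The third step handles standardization, which I expect to be the main technical point. $\mathcal{W}_2$ convergence implies convergence of second moments, so $m_n \coloneqq \int x^2\,d\mu_n \to \mathbb{E}[R_j^2]$. Here $\mathbb{E}[R_j^2] > 0$, because $R_j$ is a nonzero combination of the noises (it contains $\varepsilon_j$ with coefficient one by Lemma~\ref{lem:order-residuals-orthogonal}'s construction) and $\Sigma \succ 0$. Write $s_n = m_n^{-1/2}$ and $s = \mathbb{E}[R_j^2]^{-1/2}$. Then
\[
\mathcal{W}_2(\mathrm{std}(\mu_n),\mathrm{std}(R_j)) \le s_n\mathcal{W}_2(\mu_n,R_j) + |s_n - s|\,\mathbb{E}[R_j^2]^{1/2} \to 0.
\]
The first term uses that scaling by $s_n$ is $s_n$-Lipschitz. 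The second term compares scalings of the same law via the identity coupling.

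The final step uses the reverse triangle inequality. It gives $\bigl|\mathcal{W}_2(\mathrm{std}(\mu_n),\mathcal{N}(0,1)) - \mathcal{W}_2(\mathrm{std}(R_j),\mathcal{N}(0,1))\bigr| \to 0$. Both distances are bounded, by $2$ eventually, since both measures have unit second moment. Squaring preserves the convergence. Summing over $j$ and taking the maximum over the finitely many $\sigma$ concludes the proof.
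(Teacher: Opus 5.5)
Your proposal is correct and takes essentially the same route as the paper: strong-law convergence of the OLS coefficients, the index-by-index coupling with the oracle-coefficient residuals, almost-sure $\mathcal{W}_2$ convergence of the i.i.d.\ oracle empirical measure, convergence of second moments to pass to the standardized laws (which you spell out more explicitly than the paper), and finiteness of $\mathfrak{S}_d$. One small slip: for a non-causal order, $R_j$ need not contain $\varepsilon_j$ with coefficient one, because a predecessor may be a descendant of $j$ and carry $\varepsilon_j$; positivity of $\mathbb{E}[R_j^2]$ still follows directly from $R_j = v^\top X$ with $v_j = 1$ and $\Sigma \succ 0$.
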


\bigbreak

The proof of Theorem~\ref{thm:dag-uniform-convergence} is provided in Appendix~\ref{pf:dag-uniform-convergence}.

\bigbreak

To convert uniform objective convergence into causal order recovery, we quantify the separation between causal and noncausal orders.

\begin{definition}[Oracle Order Separation Margin] \label{def:order-separation-margin}
When $\mathcal{I}^\star \neq \mathfrak{S}_d$, define the margin
\[
\Gamma \coloneqq \max_{\sigma \in \mathfrak{S}_d} G(\sigma) - \max_{\sigma \notin \mathcal{I}^\star} G(\sigma).
\]
\end{definition}

Theorem~\ref{thm:causal-order-identification} and finiteness of the order space imply that $\Gamma > 0$. This positive margin transfers uniform almost-sure objective convergence to exact recovery of a causal order.

\begin{corollary}[Consistency of Empirical Causal Orders] \label{cor:dag-order-consistency}
Suppose Assumptions~\ref{ass:causal-noise-normalization} and \ref{ass:causal-non-gaussianity} hold. Let $\hat{\sigma}_n$ be any maximizer of $\widehat{G}_n$ over $\mathfrak{S}_d$. If all sequential least-squares regressions implicit in Equation~\eqref{eq:empirical-lingam-objective} are almost surely well-defined for $n$ large enough, then
\[
\mathbb{P}\bigl( \hat{\sigma}_n \notin \mathcal{I}^\star \bigr) \leq \mathbb{P}\Bigl( \max_{\sigma \in \mathfrak{S}_d} \bigl\vert \widehat{G}_n(\sigma) - G(\sigma) \bigr\vert \geq \frac{\Gamma}{2} \Bigr) \to 0.
\]
\end{corollary}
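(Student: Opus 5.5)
The plan is to use the standard argmax-consistency argument from the finite margin $\Gamma$, and then let Theorem~\ref{thm:dag-uniform-convergence} supply the limit. Throughout we may assume $\mathcal{I}^\star \neq \mathfrak{S}_d$. Otherwise every order is causal, the event $\{\hat{\sigma}_n \notin \mathcal{I}^\star\}$ is empty, and the statement is trivial (with $\Gamma$ read as $+\infty$).

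\textbf{Step 1: the event inclusion.} Write $\Delta_n \coloneqq \max_{\sigma \in \mathfrak{S}_d} \vert \widehat{G}_n(\sigma) - G(\sigma) \vert$. Fix any $\sigma^\star \in \mathcal{I}^\star$. By Theorem~\ref{thm:causal-order-identification}, $G(\sigma^\star) = \max_{\sigma} G(\sigma)$. Suppose $\hat{\sigma}_n \notin \mathcal{I}^\star$. Since $\hat{\sigma}_n$ maximizes $\widehat{G}_n$,
\[
G(\sigma^\star) - \Delta_n \leq \widehat{G}_n(\sigma^\star) \leq \widehat{G}_n(\hat{\sigma}_n) \leq G(\hat{\sigma}_n) + \Delta_n \leq \max_{\sigma \notin \mathcal{I}^\star} G(\sigma) + \Delta_n .
\]
Rearranging gives $\Gamma \leq 2\Delta_n$, that is, $\Delta_n \geq \Gamma/2$. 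Hence $\{\hat{\sigma}_n \notin \mathcal{I}^\star\} \subseteq \{\Delta_n \geq \Gamma/2\}$, and monotonicity of $\mathbb{P}$ yields the first inequality.

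Two side points need care here. Ties among maximizers are harmless, because the argument applies to any maximizer. On the event where some regression is ill-defined, $\hat{\sigma}_n$ and $\Delta_n$ need a convention. By hypothesis, almost surely this event does not occur for all large $n$, so it contributes nothing asymptotically. I would simply intersect with the good event, or adopt the convention $\Delta_n = +\infty$ there.

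\textbf{Step 2: positivity of $\Gamma$.} The order space is finite, so both maxima in Definition~\ref{def:order-separation-margin} are attained. By Theorem~\ref{thm:causal-order-identification}, every $\sigma \notin \mathcal{I}^\star$ satisfies $G(\sigma) < \max G$. Therefore $\Gamma > 0$.

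\textbf{Step 3: the limit.} Theorem~\ref{thm:dag-uniform-convergence} gives $\Delta_n \to 0$ almost surely under the stated well-definedness hypothesis. Almost sure convergence implies convergence in probability, so $\mathbb{P}(\Delta_n \geq \Gamma/2) \to 0$ because $\Gamma/2 > 0$ is a fixed constant.

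The only mildly delicate point is measurability and the handling of ill-defined regressions in Step 1. Everything else is the routine chain of inequalities. I expect no real obstacle, since Theorems~\ref{thm:causal-order-identification} and~\ref{thm:dag-uniform-convergence} carry all the substance.
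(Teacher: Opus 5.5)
Your proposal is correct and follows the paper's proof essentially step for step: the same chain of inequalities, via optimality of $\hat{\sigma}_n$, gives $\Gamma \leq 2\max_{\sigma}\vert \widehat{G}_n(\sigma) - G(\sigma)\vert$ on the event $\{\hat{\sigma}_n \notin \mathcal{I}^\star\}$, the limit comes from Theorem~\ref{thm:dag-uniform-convergence} (almost sure convergence, hence convergence in probability), and the case $\mathcal{I}^\star = \mathfrak{S}_d$ is dismissed as trivial. You verify $\Gamma > 0$ explicitly and handle the event where a regression is ill-defined, which makes your version slightly more careful than the paper's, since the paper takes both points for granted.
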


\bigbreak

The proof of Corollary~\ref{cor:dag-order-consistency} is provided in Appendix~\ref{pf:dag-order-consistency}.

\bigbreak

Corollary~\ref{cor:dag-order-consistency} ensures that any empirical maximizer of the Wasserstein order objective belongs to the set of valid causal orders with probability tending to one. Thus, the procedure consistently recovers a valid causal ordering.

\paragraph{Sub-Gaussian Case.}

To obtain a quantitative rate, note that all $d 2^{d - 1}$ candidate regressions used in the oracle and empirical objectives (Definition~\ref{def:wasserstein-lingam-objective} and Equation~\eqref{eq:empirical-lingam-objective}) are determined by submatrices of $\Sigma$ and $\widehat{\Sigma} \coloneqq \frac{1}{n} \sum_{i = 1}^n X^{(i)}{X^{(i)}}^\top$, respectively. Hence, whenever $\widehat{\Sigma}$ is sufficiently close to $\Sigma$, all empirical residuals are uniformly close to their oracle counterparts. For instance, under a sub-Gaussian condition, the concentration of $\widehat{\Sigma}$ is such that the previous qualitative consistency statement admits the same quantitative rate as the empirical ICA objective. We now formalize the sub-Gaussian assumption and state the resulting consistency theorem.

\begin{definition}[Sub-Gaussian Random Variable]
A centered real-valued random variable $Z$ belongs to $\mathrm{SubG}(K)$ with $K > 0$ if
\[
\forall t \in \mathbb{R}, \, \mathbb{E}\bigl[ \exp(t Z) \bigr] \leq \exp\biggl( \frac{K^2 t^2}{2} \biggr).
\]
\end{definition}

\begin{assumption}[Sub-Gaussian Observations]
\label{ass:sub-gaussian-observations}
The random vector $X \in \mathbb{R}^d$ is centered with covariance matrix $\Sigma \succ 0$, and there exists $K \geq 1$ such that
\[
\forall \theta \in \mathbb{S}^{d - 1}, \, \theta^\top \Sigma^{-1/2} X \in \mathrm{SubG}(K).
\]
\end{assumption}

\begin{remark}
Assumption~\ref{ass:sub-gaussian-observations} implies, in particular, that Assumption~\ref{ass:moment} is met for all $p > 4$.
\end{remark}

\begin{theorem}[Sub-Gaussian Rate for Empirical Causal Orders] \label{thm:dag-order-sub-gaussian-rate}
Fix any $p > 4$. Suppose Assumptions~\ref{ass:causal-noise-normalization}, \ref{ass:causal-non-gaussianity} and~\ref{ass:sub-gaussian-observations} hold. Let $\hat{\sigma}_n$ be any maximizer of $\widehat{G}_n$ over $\mathfrak{S}_d$. Then there exists a universal constant $C_0 \geq 1$ and a constant $C_p(P) > 0$, depending only on $p$ and $\mathbb{E}[ \Vert \Sigma^{-1/2} X \Vert^p ]$ such that, for
\[
n \geq \max\bigl\{ 64 C_0^2 K^4 d, \, K^8 d, \, d \log(2n + 1)^{1 + 4/p} \bigr\},
\]
it holds
\[
\mathbb{E}\Bigl[ \max_{\sigma \in \mathfrak{S}_d} \bigl\vert \widehat{G}_n(\sigma) - G(\sigma) \bigr\vert \Bigr] \leq C_p(P) d^{5/4} n^{-1/4} \log(2n + 1)^{1/p + 1/4},
\]
and we have convergence in probability
\[
\mathbb{P}\bigl( \hat{\sigma}_n \notin \mathcal{I}^\star \bigr) \leq \frac{C_p(P) d^{5/4} n^{-1/4} \log(2n + 1)^{1/p + 1/4}}{\Gamma} \to 0.
\]
\end{theorem}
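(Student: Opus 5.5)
The proof reduces the order objective to the ICA setting after whitening, so that Theorem~\ref{thm:uniform-convergence} can be reused. Set $Y \coloneqq \Sigma^{-1/2} X$, which is centered, whitened and sub-Gaussian. Let $Y^{(i)} \coloneqq \Sigma^{-1/2} X^{(i)}$ and let $P^Y_n$ be their empirical measure. For a fixed order $\sigma$ and index $j$, the oracle residual has the form $R_j(\sigma) = \beta_{j,\sigma}^\top X = (\Sigma^{1/2}\beta_{j,\sigma})^\top Y$, where $\beta_{j,\sigma}$ depends only on a submatrix of $\Sigma$. Hence $\mathrm{std}(R_j(\sigma)) = \theta_{j,\sigma}^\top Y$ with $\theta_{j,\sigma} \coloneqq \Sigma^{1/2}\beta_{j,\sigma}/\Vert \Sigma^{1/2}\beta_{j,\sigma}\Vert \in \mathbb{S}^{d-1}$. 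Likewise, the empirical residual is $\widehat R_j^{(i)}(\sigma) = \hat\beta_{j,\sigma}^\top X^{(i)}$, with $\hat\beta_{j,\sigma}$ built from the same submatrix of $\widehat\Sigma$. After empirical standardization it equals $\hat\theta_{j,\sigma}^\top Y^{(i)}$ divided by $\hat s_{j,\sigma} \coloneqq (\hat\theta_{j,\sigma}^\top \widehat\Sigma_Y \hat\theta_{j,\sigma})^{1/2}$, where $\widehat\Sigma_Y \coloneqq \Sigma^{-1/2}\widehat\Sigma\Sigma^{-1/2}$.

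We bound each term through the metric, using $|a^2-b^2| \le |a-b|(a+b)$ for $\mathcal{W}_2$ distances to $\mathcal{N}(0,1)$. By the triangle inequality, $|\mathcal{W}_2(\hat\mu,\gamma) - \mathcal{W}_2(\mu,\gamma)| \le \mathcal{W}_2(\hat\mu,\mu)$, and we decompose this into three pieces.
\begin{enumerate}
\item The projection error $\mathcal{W}_2((\pi_{\theta_{j,\sigma}})_\sharp P^Y_n, (\pi_{\theta_{j,\sigma}})_\sharp P^Y) \le \widebar{\mathcal{W}}_2(P^Y_n, P^Y)$. This is exactly the quantity controlled in the proof of Theorem~\ref{thm:uniform-convergence} via \cite{han2024maxsliced}. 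It gives the rate $d^{1/4} n^{-1/4}\log(2n+1)^{1/p+1/4}$ in expectation, under $n \ge d\log(2n+1)^{1+4/p}$.
\item The direction perturbation, bounded by $\Vert \hat\theta_{j,\sigma} - \theta_{j,\sigma}\Vert\, (\lambda_{\max}(\widehat\Sigma_Y))^{1/2}$. This holds because coupling the points identically gives $\mathcal{W}_2^2 \le \frac1n\sum_i ((\hat\theta-\theta)^\top Y^{(i)})^2$.
\item The scale error $|\hat s_{j,\sigma}^{-1} - 1|$ times the empirical second moment.
\end{enumerate}
The sums $a+b$ are bounded by $2 + \widebar{\mathcal{W}}_2(P^Y_n,P^Y) + {}$(small terms), using $\mathcal{W}_2(\mu,\gamma)\le 2$ for standardized $\mu$. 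Any product of errors can be handled with Cauchy--Schwarz in expectation.

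The key step, and the main obstacle, is a \emph{uniform} perturbation bound over all $d2^{d-1}$ regressions. I would show that $\max_{j,\sigma}\Vert\hat\theta_{j,\sigma}-\theta_{j,\sigma}\Vert$ and $\max_{j,\sigma}|\hat s_{j,\sigma}-1|$ are both $\lesssim \Vert \widehat\Sigma_Y - I_d\Vert_{\mathrm{op}}$ on the event $\mathcal{E} \coloneqq \{\Vert\widehat\Sigma_Y - I_d\Vert_{\mathrm{op}}\le 1/2\}$, with a constant independent of $\sigma$ and $d$. The argument works in whitened coordinates: the residual direction is $\theta_{j,\sigma} \propto (I - \Pi_{A})\Sigma^{1/2}e_j$, where $\Pi_A$ is the orthogonal projection onto the span of the columns $\Sigma^{1/2}e_k$, $k\in A_j(\sigma)$. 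The empirical version replaces the Euclidean inner product by the one induced by $\widehat\Sigma_Y$. Comparing the two weighted projections gives a Lipschitz bound in $\Vert\widehat\Sigma_Y - I\Vert$, with the normalization controlled because residual norms are bounded below by the minimal noise variance ratio. I would try to obtain this lower bound cleanly from Assumption~\ref{ass:causal-noise-normalization}; if it does not come out cleanly, I will absorb it into $C_p(P)$.

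The sub-Gaussian covariance concentration gives $\mathbb{E}\Vert\widehat\Sigma_Y - I\Vert \le C_0K^2\sqrt{d/n}$ together with exponential tails. The conditions $n\ge 64C_0^2K^4d$ and $n\ge K^8 d$ ensure that $\mathcal{E}$ has high probability and that the complement's contribution (where $G$ and $\widehat G_n$ are at most $4d$-ish) is $O(d\,e^{-cn/K^4})$, dominated by the stated rate. Since $\sqrt{d/n}\le d^{1/4}n^{-1/4}$ for $n\ge d$, summing over $j$ gives a factor $d$, and the final bound is $C_p(P)d^{5/4}n^{-1/4}\log(2n+1)^{1/p+1/4}$. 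The probability statement then follows exactly as in Corollary~\ref{cor:dag-order-consistency}. If $\hat\sigma_n\notin\mathcal{I}^\star$, then $\max_\sigma|\widehat G_n - G|\ge\Gamma/2$, because $\Gamma>0$ by Theorem~\ref{thm:causal-order-identification}. Markov's inequality applied to the expectation bound concludes, absorbing the factor $2$ into $C_p(P)$.
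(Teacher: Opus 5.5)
Your plan is essentially the paper's proof. Your intermediate measure $\frac{1}{n}\sum_i \delta_{\theta_{j,\sigma}^\top Y^{(i)}}$ is exactly the oracle-residual objective $\widetilde{G}_n$ used there. Your projection term is the max-sliced bound behind Theorem~\ref{thm:uniform-convergence}; the paper reaches it through $Q^\sigma$ and $\mathrm{std}(\varepsilon)$, which is an orthogonal image of $\Sigma^{-1/2}X$. Your direction and scale terms are controlled, as in the paper, by the single quantity $\delta_n = \Vert \widehat{\Sigma}_Y - I_d \Vert_{\mathrm{op}}$ on a high-probability event, with a crude bound on its complement.

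Two points in your key step need fixing.

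First, you do not need any lower bound on residual variances. Your fallback of absorbing one into $C_p(P)$ would not prove the theorem as stated. $C_p(P)$ may depend only on $p$ and $\mathbb{E}[\Vert \Sigma^{-1/2} X \Vert^p] = \mathbb{E}[\Vert \mathrm{std}(\varepsilon) \Vert^p]$, which does not see $B^\star$ or the noise scales. By contrast, a ratio like $s_{j,A}^2/\Sigma_{jj}$ can be made arbitrarily small with a large edge weight. The perturbation is in fact automatically relative to $s_{j,A}$:
\begin{itemize}
\item Since $\widehat{\Pi}_A$ and $\Pi_A$ have the same range, $(\Pi_A - \widehat{\Pi}_A)\Sigma^{1/2}e_j = -\widehat{\Pi}_A u$, where $u \coloneqq (I_d - \Pi_A)\Sigma^{1/2}e_j$ has norm $s_{j,A}$.
\item Write $\widehat{\Pi}_A = U(U^\top \widehat{\Sigma}_Y U)^{-1}U^\top \widehat{\Sigma}_Y$ for an orthonormal basis $U$ of that range. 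Because $U^\top u = 0$, you get $\Vert \widehat{\Pi}_A u \Vert \le \Vert (U^\top \widehat{\Sigma}_Y U)^{-1}\Vert \, \Vert U^\top(\widehat{\Sigma}_Y - I_d)u \Vert \le \frac{\delta_n}{1-\delta_n}\Vert u \Vert$.
\item Hence $\Vert \hat{\theta}_{j,\sigma} - \theta_{j,\sigma} \Vert \le \frac{2\delta_n}{1-\delta_n}$ uniformly over all $(j,\sigma)$.
\item Also $\vert \hat{s}_{j,\sigma}^2 - 1 \vert \le \delta_n$ follows directly from $\Vert \hat{\theta}_{j,\sigma} \Vert = 1$.
\end{itemize}
The paper gets the same relative bound from $E_{:,A}\widehat{\Sigma}_{A,A}^{-1}E_{A,:} \preceq \frac{\delta_n^2}{1-\delta_n}\Sigma$.

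Second, $d\,\mathbb{E}[\delta_n] \lesssim C_0 K^2 d^{3/2} n^{-1/2}$ carries a factor $K^2$ that $C_p(P)$ cannot absorb. So $n \geq d$ alone does not reduce it to the stated rate. As in the paper, it is the condition $n \geq K^8 d$ that gives $K^2 d^{3/2} n^{-1/2} \le d^{5/4} n^{-1/4}$. This absorbs both this term and the complement-event contribution.
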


\bigbreak

The proof of Theorem~\ref{thm:dag-order-sub-gaussian-rate} is provided in Appendix~\ref{pf:dag-order-sub-gaussian-rate}.

\bigbreak

This theorem shows that the leading $d^{5/4} n^{-1/4}$ term is shared with the ICA analysis, since uniform estimation of the ordinary least-squares residuals contributes a $K^2 d^{3/2} n^{-1/2}$ term, which is absorbed by the leading term under $n \geq K^8 d$.

\section{Algorithms} \label{sec:algorithms}

This section describes how the optimization problems of Equations~\eqref{eq:empirical-ica-objective} and~\eqref{eq:empirical-lingam-objective} are solved in practice. While ICA is an optimization problem on the continuous group $\mathcal{O}_d(\mathbb{R})$, causal order search is essentially a finite combinatorial problem over $\mathfrak{S}_d$. In the following, we present four methods: one for ICA and three for causal discovery. Table~\ref{tab:complexity} summarizes our solvers and their costs.

\begin{table}[H]
\centering
\begin{tabular}{lllc}
\toprule
Task & Method & Cost & Global optimum \\
\midrule
ICA & Picard OT-ICA & $O(nd^2 + dn \log n)$ per iteration & --- \\
DAG & OT-ICA-LiNGAM & $O(nd^2 + dn \log n)$ per iteration & --- \\
DAG & Exhaustive OT-LiNGAM & $O\bigl( d 2^d (nd^2 + n \log n) \bigr)$ & yes (order) \\
DAG & Greedy OT-LiNGAM & $O(d^2 n \log n)$ & partial (order) \\
\bottomrule
\end{tabular}
\caption{Solvers proposed in the paper. Here, $d \geq 2$ is the number of variables, $n \geq d$ is the sample size, and an ``iteration'' is one pass of the corresponding method.}
\label{tab:complexity}
\end{table}

\subsection{Picard OT-ICA Solver}

Let $\mathbf{X} \in \mathbb{R}^{n \times d}$ contain the centered and whitened independent observations as rows. After whitening, ICA reduces to estimating the orthogonal rotation maximizing the empirical objective in Equation~\eqref{eq:empirical-ica-objective}. Following the Riemannian optimization framework of Picard ICA \citep{ablin2018faster}, we propose to optimize the criterion using limited-memory Broyden--Fletcher--Goldfarb--Shanno (L-BFGS, \cite{liu1989limited}) updates in the tangent spaces of the orthogonal group.

\paragraph{Initialization.}

The extensively studied and widely used FastICA algorithm \citep{hyvarinen1999fast} is used only as a warm start. Although it does not optimize the Wasserstein score, it typically provides a useful initial point on the orthogonal group.

\paragraph{Gradient Formulation.}

To formalize the optimization step, let $W \in \mathcal{O}_d(\mathbb{R})$ be the current unmixing matrix and set $\mathbf{Z} \coloneqq \mathbf{X} W^\top$. As we want to compute the Riemannian gradient of $\widehat{F}_n(W) = \sum_{j = 1}^d \mathcal{W}_2\bigl( (\pi_{W_j})_\sharp P_n, \mathcal{N}(0, 1) \bigr)^2$ on the orthogonal group $\mathcal{O}_d(\mathbb{R})$, we must parameterize the problem in such a way that any infinitesimal perturbation of $W$ remains an orthogonal matrix. One possibility is to use parameterizations along curves $\tau \mapsto \exp(\tau \mathcal{E}) W \in \mathcal{O}_d(\mathbb{R})$ (also called exponential retractions), where $\mathcal{E} \in \mathbb{R}^{d \times d}$ is an antisymmetric matrix. With the Frobenius inner product $\langle A, C \rangle_F \coloneqq \operatorname{tr}( A^\top C)$, differentiation along this curve gives
\[
\frac{d}{d\tau} \widehat{F}_n\bigl( \exp(\tau \mathcal{E}) W \bigr) \biggr\vert_{\tau = 0} = \bigl\langle \mathcal{E}, \mathcal{G}(W) \bigr\rangle_F,
\]
with $\mathcal{G}(W) \coloneqq \frac{1}{2} \bigl( \Psi(W)^\top \mathbf{Z} - \mathbf{Z}^\top \Psi(W) \bigr)$ and the score matrix $\Psi(W) \in \mathbb{R}^{n \times d}$ defined by
\[
\forall i \in \llbracket n \rrbracket, \, \forall j \in \llbracket d \rrbracket, \, \quad \Psi(W)_{ij} \coloneqq \frac{2}{n} (\mathbf{Z}_{ij} - q_{r_j(i)}),
\]
where for every $j \in \llbracket d \rrbracket$ and $i \in \llbracket n \rrbracket$, $r_j(i)$ denotes the rank of $\mathbf{Z}_{ij}$ among $\mathbf{Z}_{1j}, \dots, \mathbf{Z}_{nj}$ and $q_{r_j(i)}$ is defined in Lemma~\ref{lem:empirical-gaussian-wasserstein}.

This Riemannian gradient expression allows us to choose a local ascent direction for the empirical objective. Each update then follows the chosen parametric curve by applying the exponential \emph{retraction}, although this parameterization is not unique.

\paragraph{Solver.}

At iteration $t \geq 1$, L-BFGS combines the current gradient $\mathcal{G}_t \coloneqq \mathcal{G}(W_t)$ with at most $m \geq 1$ precomputed curvature pairs defined below. Given an ascent direction $\mathcal{D}_t$, an Armijo line search selects $\eta_t > 0$, and the iterate is updated by
\[
W_{t + 1} \coloneqq \exp\bigl( \eta_t\mathcal{D}_t \bigr)W_t.
\]

The associated curvature pair is $\mathcal{S}_t \coloneqq \eta_t \mathcal{D}_t$ and $\mathcal{V}_t \coloneqq \mathcal{G}_t - \mathcal{G}_{t + 1}$. This sign convention applies the standard L-BFGS recursion to the local minimization of $-\widehat{F}_n$. Pairs that do not satisfy $\langle \mathcal{S}_t, \mathcal{V}_t \rangle_F > 0$ are discarded. Algorithm~\ref{alg:picard-w2} summarizes the resulting procedure, which is implemented in the \href{https://pypi.org/project/otica/}{\texttt{otica}} package.

\bigbreak

\begin{algorithm}[H]
\DontPrintSemicolon
\caption{Picard OT-ICA: relative L-BFGS for Wasserstein ICA}
\label{alg:picard-w2}
\KwIn{whitened data $\mathbf{X} \in \mathbb{R}^{n \times d}$, memory size $m$, Armijo constant $c$, tolerance $\varepsilon$}
\KwOut{orthogonal unmixing $W \in \mathcal{O}_d(\mathbb{R})$}
\BlankLine
$W \gets \textsc{FastICA}(\mathbf{X})$ \quad $\mathbf{Z} \gets \mathbf{X} W^\top$\;
$\mathcal{G} \gets \frac{1}{2} \bigl( \Psi(W)^\top \mathbf{Z} - \mathbf{Z}^\top \Psi(W) \bigr)$\;
$\mathcal{M} \gets \emptyset$ \tcp*{L-BFGS memory of triples $(\mathcal{S}_\ell, \mathcal{V}_\ell, \rho_\ell)$}
\Repeat{$\Vert W - W_{\mathrm{old}} \Vert_F < \varepsilon$}{
  $\mathcal{D} \gets \textsc{TwoLoop}(\mathcal{G}, \mathcal{M})$ \tcp*{quasi-Newton ascent direction}
  \lIf{$\langle \mathcal{G}, \mathcal{D} \rangle_F \leq 0$}{$\mathcal{D} \gets \mathcal{G}$}
  $\eta \gets 1$\;
  \While{$\widehat{F}_n\bigl( \exp(\eta \mathcal{D}) W \bigr) < \widehat{F}_n(W) + c \, \eta \, \langle \mathcal{G}, \mathcal{D} \rangle_F$}{
    $\eta \gets \frac{\eta}{2}$\;
  }
  $W_{\mathrm{old}} \gets W$\;
  $W \gets \exp(\eta \mathcal{D}) W$ \quad $\mathbf{Z} \gets \mathbf{X} W^\top$\;
  $\mathcal{G}^{+} \gets \frac{1}{2} \bigl( \Psi(W)^\top \mathbf{Z} - \mathbf{Z}^\top \Psi(W) \bigr)$\;
  $\mathcal{S} \gets \eta \mathcal{D}$ \quad $\mathcal{V} \gets \mathcal{G} - \mathcal{G}^{+}$\;
  \lIf{$\langle \mathcal{S}, \mathcal{V} \rangle_F > 0$}{push $\bigl( \mathcal{S}, \mathcal{V}, \langle \mathcal{S}, \mathcal{V} \rangle_F^{-1} \bigr)$ onto $\mathcal{M}$, deleting the oldest if $\# \mathcal{M} > m$}
  $\mathcal{G} \gets \mathcal{G}^{+}$\;
}
\Return $W$\;
\end{algorithm}

\bigbreak

Computing $\Psi(W)$ requires sorting each of the $d$ recovered components, while forming $\Psi(W)^\top \mathbf{Z}$ requires a matrix product. Thus, excluding the line-search evaluations, one iteration costs $O(nd^2 + dn \log n)$.

\subsection{LiNGAM Solvers}

We again observe a centered data matrix $\mathbf{X} \in \mathbb{R}^{n \times d}$, whose rows are independent observations of the $d$ variables. We consider three strategies for estimating the causal order. The first combines the proposed OT-ICA estimator with the standard ICA-LiNGAM pipeline, while the other two optimize the proposed Wasserstein order objective directly. Once an order $\hat{\sigma}$ has been obtained, a second estimation step can be used to recover the weighted adjacency matrix $\widehat{B}(\hat{\sigma})$, for instance by fitting multiple linear regressions with adaptive Lasso, as in the \href{https://github.com/cdt15/lingam}{\texttt{lingam}} package.

\subsubsection{OT-ICA-LiNGAM} \label{subsec:ica-lingam}

A first practical strategy is to use an equivalent formulation of Model~\eqref{eq:lingam}, namely $\mathbf{X} = \boldsymbol{\varepsilon} (I_d - B^\star)^{-\top}$, which expresses the observations as linear mixtures of the independent structural-noise samples $\boldsymbol{\varepsilon} \in \mathbb{R}^{n \times d}$.

Leveraging Algorithm~\ref{alg:picard-w2}, the data can be unmixed using the OT-ICA solver after a whitening step. The resulting unmixing matrix then induces an estimate $\widehat{B}$, which is not guaranteed to be acyclic. As in standard ICA-LiNGAM, this estimate is post-processed by finding a suitable permutation through a linear-sum assignment problem to ``dagify'' $\widehat{B}$, which can be solved using the Hungarian algorithm \citep{shimizu2006linear}. This yields a direct analogue of the FastICA-based causal discovery implementation of the \href{https://github.com/cdt15/lingam}{\texttt{lingam}} package, but using our Wasserstein criterion. However, the main limitation, which also motivated the introduction of DirectLiNGAM by \cite{shimizu2011directlingam}, is that acyclicity is enforced only greedily and after estimation rather than being imposed as a constraint in the estimation problem itself.

\subsubsection{Exhaustive OT-LiNGAM}

To better reflect the optimization strategy described in Subsection~\ref{subsec:lingam-estim}, exact maximization of the empirical Wasserstein criterion $\widehat{G}_n$ from Equation~\eqref{eq:empirical-lingam-objective} can be carried out by dynamic programming. Indeed, the score is \emph{decomposable} as a sum over all nodes \citep{acid2003searching}, thus accommodating the methodology of \cite{silander2006simple}. However, the version used in this paper differs slightly from the full dynamic program because the objective is optimized over the set of all orders $\mathfrak{S}_d$ instead of the set of all DAGs.

\paragraph{Local Scores.}

Given a node $j \in \llbracket d \rrbracket$ and a candidate predecessor set $U \subseteq \llbracket d \rrbracket \setminus \bigl\{ j \bigr\}$, define the local empirical score
\[
\hat{g}_j(U) \coloneqq \mathcal{W}_2\Bigl( \mathrm{std}\bigl( \frac{1}{n} \sum_{i = 1}^n \delta_{\widehat{R}_j^{(i)}(U)} \bigr), \mathcal{N}(0, 1) \Bigr)^2,
\]
where $\widehat{R}_j(U)$ is the ordinary least-squares residual vector obtained by regressing $X_j$ on $\bigl\{ X_k : k \in U \bigr\}$. In practice, this score is evaluated exactly by solving the ordinary least-squares problem, standardizing the residuals, and computing the Wasserstein distance using Lemma~\ref{lem:empirical-gaussian-wasserstein}.

\paragraph{Dynamic Programming Recursion.}

For each nonempty subset $U \subseteq \llbracket d \rrbracket$, let $H(U)$ be the best empirical score over all orderings of the variables in $U$, and let $\tau^\star(U)$ be the corresponding last variable, or sink, in the restricted order. If $u$ is the sink of $U$, then its predecessors within the restricted order are exactly $U \setminus \{ u \}$, while the remaining variables must form an optimal order on $U \setminus \{ u \}$. This yields the recursion
\[
\begin{cases}
H(U) \coloneqq \max_{u \in U} \Bigl\{ H\bigl( U \setminus \{ u \} \bigr) + \hat{g}_u\bigl( U \setminus \{ u \} \bigr) \Bigr\}, \quad H(\emptyset) \coloneqq 0, \\
\tau^\star(U) \in \operatorname*{\arg\max}_{u \in U} \Bigl\{ H\bigl( U \setminus \{ u \} \bigr) + \hat{g}_u\bigl( U \setminus \{ u \} \bigr) \Bigr\}.
\end{cases}
\]

\paragraph{Causal Order Recovery.}

The causal order is recovered by backtracking through the stored sinks, using the recursion
\[
\begin{cases}
U_t \coloneqq U_{t + 1} \setminus \bigl\{ \hat{\sigma}(t + 1) \bigr\}, \quad U_d \coloneqq \llbracket d \rrbracket, \\
\hat{\sigma}(t) \coloneqq \tau^\star(U_t), \quad \hat{\sigma}(d) \coloneqq \tau^\star(U_d),
\end{cases} \quad \text{for } t = d - 1, \dots, 1.
\]

The recovered causal order is the permutation $\hat{\sigma} = \bigl( \hat{\sigma}(1), \dots, \hat{\sigma}(d) \bigr)$, obtained by successively selecting the stored sink of the current active set and placing it in the last remaining position.

\paragraph{Solver.}

Algorithm~\ref{alg:exhaustive-lingam} summarizes the resulting procedure, which is implemented in the \href{https://github.com/felixlaplante0/otlingam/}{\texttt{otlingam}} package. The implementation is highly optimized and parallelized and uses \href{https://github.com/numba/numba}{\texttt{numba}} to compile performance-critical code with LLVM.

\bigbreak

\begin{algorithm}[H]
\DontPrintSemicolon
\caption{Exhaustive OT-LiNGAM: subset dynamic programming for Wasserstein order search}
\label{alg:exhaustive-lingam}
\KwIn{centered data $\mathbf{X} \in \mathbb{R}^{n \times d}$}
\KwOut{causal order $\hat{\sigma}$ from source to sink}
\BlankLine
$H(\emptyset) \gets 0$\;
\For{$U \subseteq \llbracket d \rrbracket$ in increasing cardinality, $U \neq \emptyset$}{
  $h^\star \gets -\infty$ \quad $s^\star \gets -1$\;
  \For{$u \in U$ \tcp*{parallelizable}}{
    $h \gets H\bigl( U \setminus \{ u \} \bigr) + \hat{g}_u\bigl( U \setminus \{ u \} \bigr)$\;
    \lIf{$h > h^\star$}{$h^\star \gets h$ \quad $s^\star \gets u$}
  }
  $H(U) \gets h^\star$ \quad $\tau^\star(U) \gets s^\star$\;
}
$U \gets \llbracket d \rrbracket$\;
\For{$t = d, d - 1, \dots, 1$}{
  $\hat{\sigma}(t) \gets \tau^\star(U)$\;
  $U \gets U \setminus \bigl\{ \hat{\sigma}(t) \bigr\}$\;
}
\Return $\hat{\sigma}$\;
\end{algorithm}

\bigbreak

In the standard presentation of \cite{silander2006simple}, all $d 2^{d-1}$ local scores are precomputed and stored before running the subset dynamic program. Our implementation instead evaluates $\hat{g}_u\bigl(U \setminus \{ u \} \bigr)$ on demand within the sink recursion and stores only the subset scores $H(U)$ and selected sinks $\tau^\star(U)$. It therefore evaluates the same $d 2^{d-1}$ local residual scores but stores only $O(2^d)$ dynamic-programming states. Each local score costs $O(nd^2)$ operations for residual computation and $O(n\log n)$ for sorting. The resulting method is exponential in $d$ and is practically limited to about $d \approx 25$, but it returns the exact maximizer of Equation~\eqref{eq:empirical-lingam-objective} and remains far cheaper than enumerating all $d!$ causal orders, since $d! \gg d 2^{d-1}$ for large $d$.

\subsubsection{Greedy OT-LiNGAM}

A third greedy causal order estimation strategy consists of constructing the causal order from source to sink by repeatedly selecting the remaining variable whose standardized residual has the largest empirical $2$-Wasserstein distance to the Gaussian. In contrast to Exhaustive OT-LiNGAM, it does not compare complete orders. Instead, it maintains a residual matrix, initially equal to the centered data, and removes the linear effect of each selected source from every remaining variable. The main advantage of this strategy is its scalability, but since the procedure is irreversible, early errors cannot be corrected, which may make it more sensitive to noise and less stable than global maximization. Nonetheless, under the additional assumption that all structural noises have the same Wasserstein non-Gaussianity, Theorem~\ref{thm:greedy-equal-score}, provided alongside its proof in Appendix~\ref{pf:greedy-equal-score}, proves that the greedy algorithm returns a global maximizer in the oracle case. The method also remains applicable beyond this setting and can still give accurate causal orders when these non-Gaussianity levels only differ moderately.

\paragraph{Solver.}

At step $t \geq 1$, let $U_t$ denote the set of remaining variables and let $\widehat{R}_j(t) \in \mathbb{R}^n$ denote the current empirical residual of variable $j \in U_t$. After standardizing $\widehat{R}_j(t)$ by its empirical root mean square, the method computes the squared $2$-Wasserstein distance between its empirical distribution and the standard Gaussian, evaluated exactly using Lemma~\ref{lem:empirical-gaussian-wasserstein}. Once $j_t$ is selected, each remaining residual is updated by ordinary least-squares projection onto $\widehat{R}_{j_t}(t)$. Algorithm~\ref{alg:greedy-lingam} summarizes this procedure.

\bigbreak

\begin{algorithm}[H]
\DontPrintSemicolon
\caption{Greedy OT-LiNGAM: sequential Wasserstein source removal}
\label{alg:greedy-lingam}
\KwIn{centered data $\mathbf{X} \in \mathbb{R}^{n \times d}$}
\KwOut{causal order $\hat{\sigma}$ from source to sink}
\BlankLine
$\mathbf{R} \gets \mathbf{X}$ \quad $U \gets \llbracket d \rrbracket$\;
\For{$t \gets 1$ \KwTo $d$}{
  \For{$j \in U$}{
    $s_j \gets \sqrt{\frac{1}{n} \sum_{i = 1}^n \mathbf{R}_{ij}^2}$ \quad $\widetilde{\mathbf{R}}_{ij} \gets \frac{\mathbf{R}_{ij}}{s_j}$ for $i = 1, \dots, n$\;
    $h_j \gets \mathcal{W}_2\bigl( \frac{1}{n} \sum_{i = 1}^n \delta_{\widetilde{\mathbf{R}}_{ij}}, \mathcal{N}(0, 1) \bigr)^2$\;
  }
  $j_t \gets \operatorname*{\arg\max}_{j \in U} h_j$ \quad $\hat{\sigma}(t) \gets j_t$ \quad $U \gets U \setminus \bigl\{ j_t \bigr\}$\;
  \For{$k \in U$}{
    $\beta_k \gets \frac{\sum_{i = 1}^n \mathbf{R}_{i, j_t} \mathbf{R}_{ik}}{\sum_{i = 1}^n \mathbf{R}_{i, j_t}^2}$ \quad $\mathbf{R}_{ik} \gets \mathbf{R}_{ik} - \beta_k \mathbf{R}_{i, j_t}$ for $i = 1, \dots, n$\;
  }
}
\Return $\hat{\sigma}$\;
\end{algorithm}

\bigbreak

The greedy procedure performs $d$ sweeps rather than searching over all subsets, each costing $O(d n \log n)$, which reduces the order-recovery cost from exponential to polynomial.

\section{Experiments} \label{sec:experiments}

We evaluate Picard OT-ICA by its source-separation quality, unmixing-matrix recovery, and runtime. We evaluate Exhaustive OT-LiNGAM, Greedy OT-LiNGAM, and OT-ICA-LiNGAM by their causal order recovery and runtime. Each setting includes comparisons with multiple state-of-the-art baselines. The experiments are available in the accompanying repositories of the two companion packages, \href{https://github.com/felixlaplante0/otica/}{\texttt{otica}} and \href{https://github.com/felixlaplante0/otlingam/}{\texttt{otlingam}}. Additional computational environment and package-version details used to reproduce all experiments are provided in Appendix~\ref{sec:reproducibility}.

\subsection{ICA} \label{subsec:ica-experiments}

We first consider the general linear ICA model $X = AS$, where the coordinates of $S \in \mathbb{R}^d$ are independent, centered, and variance-normalized, and the entries of the mixing matrix $A$ are sampled independently from the standard Gaussian distribution, i.e.,
\[
\forall i, j \in \llbracket d \rrbracket, \, A_{ij} \overset{\mathrm{i.i.d.}}{\sim} \mathcal{N}(0, 1),
\]
which is almost surely invertible.

We compare Picard OT-ICA, implemented in the \href{https://github.com/felixlaplante0/otica/}{\texttt{otica}} package, with FastICA \citep{hyvarinen1999fast} from the \href{https://github.com/scikit-learn/scikit-learn}{scikit-learn implementation} \citep{pedregosa2011scikit}, using its default $\log$-cosh contrast. Both estimators center and whiten the same observations. Across the experiments, we evaluate the methods in terms of statistical performance as the sample size and dimension vary, robustness to nearly Gaussian latent components, and computational runtime.

For an estimated unmixing matrix $\widehat{A}^{-1}$, let $\widehat{G} \coloneqq \widehat{A}^{-1} A$ denote the gain matrix. We evaluate recovery using the Amari index \citep{poczos2005independent}
\[
\mathrm{Amari}(\widehat{G}) \coloneqq \frac{1}{2d(d - 1)} \Bigl[ \sum_{j = 1}^d \bigl( \frac{\sum_k \vert \widehat{G}_{jk} \vert}{\max_k \vert \widehat{G}_{jk} \vert} - 1 \bigr) + \sum_{k = 1}^d \bigl( \frac{\sum_j \vert \widehat{G}_{jk} \vert}{\max_j \vert \widehat{G}_{jk} \vert} - 1 \bigr) \Bigr].
\]

The Amari index is nonnegative and vanishes exactly when $\widehat{G}$ is a scaled permutation matrix, that is, when the sources are recovered up to scaling and permutation.

\paragraph{Statistical Performance.}

In this experiment, we assess how unmixing-matrix recovery varies with the sample size $n$ and the dimension $d$. We consider four variance-one source distributions: Laplace, uniform, centered exponential, and the calibrated uniform--exponential mixture defined below, all with zero mean and unit variance. For each source distribution and configuration, we independently generate $20$ source matrices and mixing matrices. We first vary $n \in \bigl\{ 100, 250, 500, 1{,}000, 1{,}500 \bigr\}$ while fixing $d = 8$, and then vary $d \in \bigl\{ 5, 10, 15, 20, 30 \bigr\}$ while fixing $n = 1{,}000$.

The fourth source distribution is constructed to expose a limitation of the default FastICA contrast without violating ICA identifiability. We define a uniform--exponential mixture
\[
\forall j \in \llbracket d \rrbracket, \, S_j(q) \overset{\mathrm{i.i.d.}}{\sim} q \mathrm{Unif}(-\sqrt{3}, \sqrt{3}) + (1 - q) \overline{\mathrm{Exp}}(1),
\]
where $\overline{\mathrm{Exp}}(1)$ denotes the centered exponential distribution.

Since both mixture components are centered and have unit variance, so does each source $S_j$ for any $j \in \llbracket d \rrbracket$. Given the $\log$-cosh contrast function $C(x) \coloneqq \log \cosh x$, the squared $\log$-cosh criterion used in the standard FastICA approximation to negentropy is given by
\[
J_C(Y) \coloneqq \Bigl( \mathbb{E}\bigl[ C(Y) \bigr] - \mathbb{E}\bigl[ C(Z) \bigr] \Bigr)^2, \quad Z \sim \mathcal{N}(0, 1).
\]

Writing $a_U \coloneqq \mathbb{E}\bigl[ C(U) \bigr]$, $a_E \coloneqq \mathbb{E}\bigl[ C(E) \bigr]$, and $a_Z \coloneqq \mathbb{E}\bigl[ C(Z) \bigr]$, with $U \sim \mathrm{Unif}(-\sqrt{3}, \sqrt{3})$, $E \sim \overline{\mathrm{Exp}}(1)$, and $Z \sim \mathcal{N}(0, 1)$, we notice
\[
a_E \approx 0.330 < a_Z \approx 0.375 < a_U \approx 0.401.
\]

For any $j \in \llbracket d \rrbracket$, since the density of the mixture $S_j(q)$ is the corresponding convex combination of the component densities, $\mathbb{E}\Bigl[ C\bigl( S_j(q) \bigr) \Bigr] = q \, a_U + (1 - q) \, a_E$ linearly interpolates between $a_U$ and $a_E$. Therefore, there exists a threshold $q \in (0, 1)$ satisfying
\[
q^\star \coloneqq \frac{a_Z - a_E}{a_U - a_E} \approx 0.624 \implies \forall j \in \llbracket d \rrbracket, \, \mathbb{E}\Bigl[ C\bigl( S_j(q^\star) \bigr) \Bigr] = \mathbb{E}\bigl[ C(Z) \bigr].
\]

Therefore, this distribution makes the $\log$-cosh negentropy criterion equal to zero at the true unmixing matrix, which becomes a global minimum rather than the maximum sought by FastICA. This construction is adversarial to the selected $\log$-cosh criterion and analogous examples can be obtained for other contrasts. We illustrate this phenomenon in dimension two in Appendix~\ref{sec:criterion-rotation}.

As shown in Figure~\ref{fig:varying-nd-amari-index}, for Laplace sources, the methods have comparable Amari indices in both settings. OT-ICA improves more clearly for all other source distributions. The calibrated mixture gives the sharpest contrast, since OT-ICA rapidly approaches a small Amari index as $n$ increases and remains accurate as $d$ grows, whereas FastICA remains far from the true unmixing. This is consistent with the distribution-free guarantees given by the Wasserstein objective and the deliberately vanishing $\log$-cosh contrast at the true unmixing directions.

\begin{figure}[t]
\centering
\includegraphics[width=\textwidth]{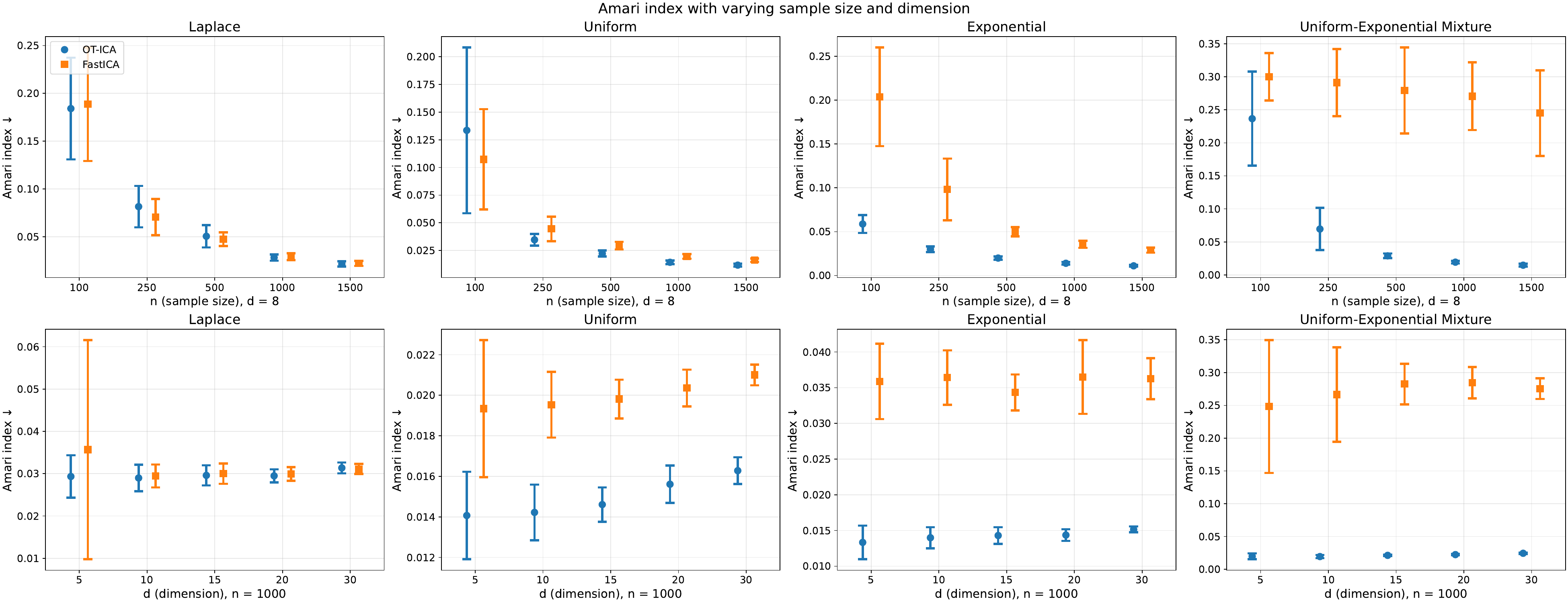}
\caption{ICA recovery for Laplace, uniform, centered-exponential, and calibrated uniform--exponential sources. Top: varying sample size with $d = 8$. Bottom: varying dimension with $n = 1{,}000$. Points show the mean Amari index over $20$ runs, and capped error bars show one standard deviation.}
\label{fig:varying-nd-amari-index}
\end{figure}

\paragraph{Varying Non-Gaussianity.}

We next interpolate each source distribution toward the Gaussian to evaluate the robustness to near-Gaussian data. Given an independent random vector $Z \sim \mathcal{N}(0, I_d)$, we replace $S$ by a convolution interpolating between the original distribution and the standard Gaussian, namely $S(\varepsilon) \coloneqq \sqrt{1 - \varepsilon} \, S + \sqrt{\varepsilon} \, Z$. We fix $n = 3{,}000$ and $d = 8$, vary $\varepsilon \in \bigl\{ 0, 0.05, 0.1, 0.2, 0.4, 0.7, 1 \bigr\}$, and average over $20$ independently generated datasets at each value. Figure~\ref{fig:gaussianity-amari-index} reports the resulting Amari indices. OT-ICA remains more accurate than FastICA away from the Gaussian endpoint, including for the calibrated mixture. Both methods deteriorate as $\varepsilon \to 1$. At $\varepsilon = 1$, all sources are Gaussian, so the unmixing matrix is not identifiable.

\begin{figure}[t]
\centering
\includegraphics[width=\textwidth]{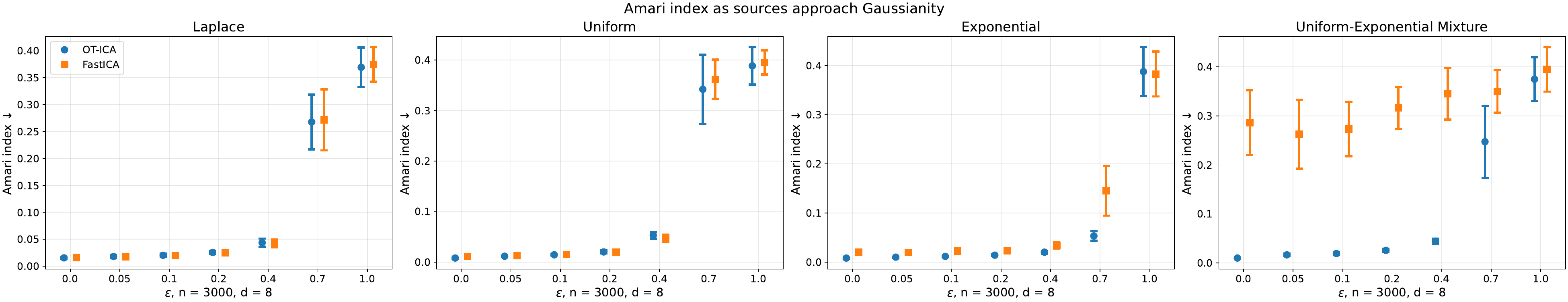}
\caption{ICA recovery as each source is replaced by $S(\varepsilon) = \sqrt{1 - \varepsilon} \, S + \sqrt{\varepsilon} \, Z$, with $n = 3{,}000$ and $d = 8$. Points show the mean Amari index over $20$ runs, and capped error bars show one standard deviation.}
\label{fig:gaussianity-amari-index}
\end{figure}

\paragraph{Runtime Performance.}

Finally, we measure fitting time using variance-one uniform sources. Each configuration is repeated $20$ times. The sample size experiment fixes $d = 8$ and varies $n$ from $250$ to $4{,}000$, while the dimension experiment fixes $n = 1{,}000$ and varies $d \in \bigl\{ 5, 10, 15, 20, 30 \bigr\}$. As detailed in Figure~\ref{fig:runtime-scaling-otica}, both methods exhibit similar scaling behavior, although OT-ICA is slower, typically by about one order of magnitude in the considered configurations. In absolute terms, both fitting times remain below one tenth of a second in most of the configurations considered here, making OT-ICA highly scalable.

\begin{figure}[t]
\centering
\includegraphics[width=0.7\textwidth]{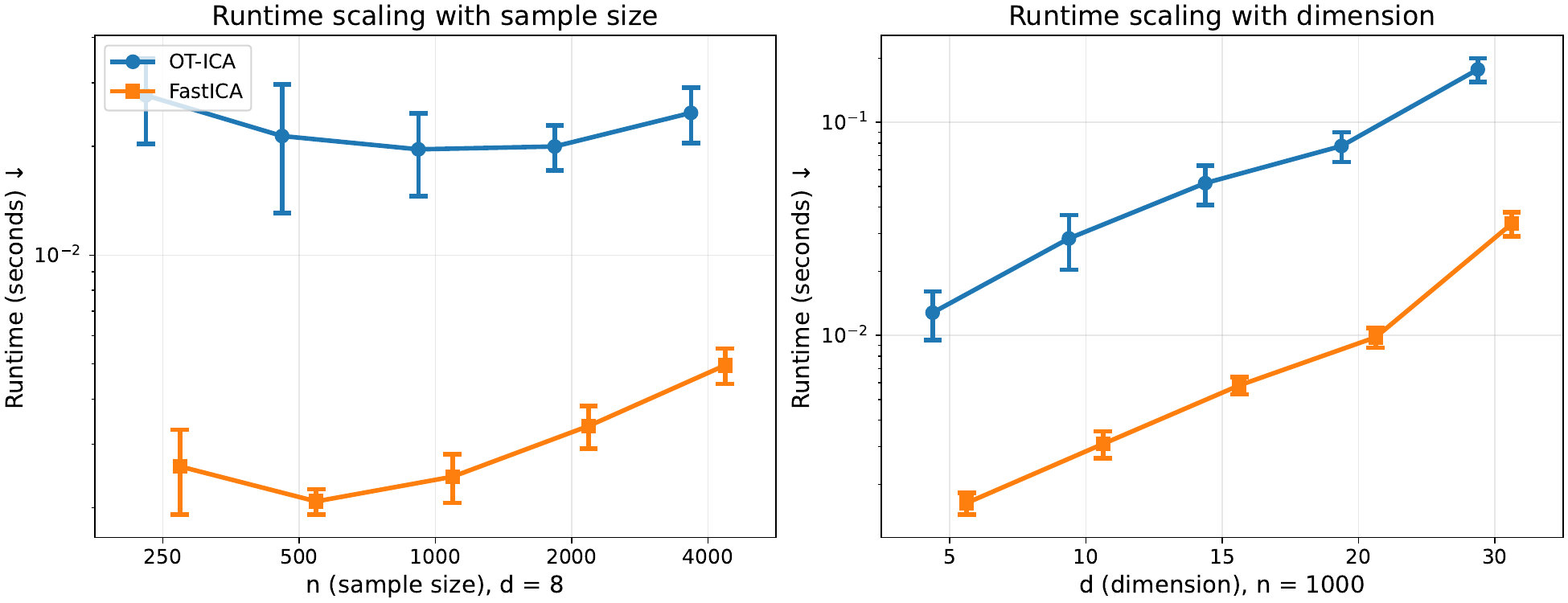}
\caption{ICA fitting time with variance-one uniform sources. Left: varying sample size with $d = 8$. Right: varying dimension with $n = 1{,}000$. Lines connect the mean runtime over $20$ runs, and capped error bars show one standard deviation.}
\label{fig:runtime-scaling-otica}
\end{figure}

\subsection{Causal Discovery}

For causal discovery, we consider linear structural equation models of the form $X = B^\star X + \varepsilon$, following Model~\eqref{eq:lingam}. We evaluate two standard benchmark graph families: Erd\H{o}s--R\'enyi (ER$k$, \cite{Erdos1984OnTE}) and Scale-Free (SF$k$). In the ER$k$ setting, every potential edge is included independently so that the resulting directed acyclic graph contains approximately $kd$ edges. In the SF$k$ setting, the graph is generated according to a preferential-attachment mechanism following the Barab\'asi--Albert model \citep{albert2002statistical}, while also containing approximately $kd$ edges. Thus, $k$ denotes the average in-degree of the graph in both families. After sampling the binary adjacency matrix $A_{ij}^\star \coloneqq \mathbb{I}\{B_{ij}^\star \neq 0\}$ from the chosen graph model, each nonzero coefficient is drawn independently with distribution $\mathrm{Unif}\bigl( [-2, -0.5] \cup [0.5, 2] \bigr)$.

All causal experiments compare the same six methods. Exhaustive OT-LiNGAM, Greedy OT-LiNGAM, and OT-ICA-LiNGAM are the proposed methods implemented in the \href{https://github.com/felixlaplante0/otlingam/}{\texttt{otlingam}} package. ICA-LiNGAM \citep{shimizu2006linear} and DirectLiNGAM \citep{shimizu2011directlingam} are taken from the \href{https://github.com/cdt15/lingam}{\texttt{lingam}} package. DAGMA \citep{bello2022dagma} is taken from its authors' \href{https://github.com/kevinsbello/dagma}{reference implementation}. This common comparison set is used for the sample size and dimension experiments, the noise-heterogeneity experiment, and the runtime experiment below. We evaluate the proposed methods and state-of-the-art competitors from four perspectives: statistical performance as the sample size and dimension vary, performance as graph density varies, robustness to nearly Gaussian structural noise, and runtime performance. Specific experimental settings use different distributions for the structural noise.

The main evaluation metric is the \emph{disorder}, which measures whether the estimated causal order reverses true causal edges. It is defined by
\[
\forall \sigma \in \mathfrak{S}_d, \, \mathrm{dis}(\sigma) = \#\bigl\{ (k, j) : B^\star_{jk} \neq 0 \text{ and } \sigma^{-1}(k) > \sigma^{-1}(j) \bigr\}.
\]

This metric ranges from $0$ to at most $d(d - 1) / 2$, and lower values indicate better causal order recovery. We also report two adjacency-matrix metrics. Denoting by $\widehat{A}_{jk} \coloneqq \mathbb{I}\{\widehat{B}_{jk} \neq 0\}$ the estimated binary adjacency matrix, the Structural Hamming Distance (SHD) and directed-edge F1 score are respectively defined by
\[
\mathrm{SHD}(\widehat{A}) \coloneqq \sum_{1 \leq j < k \leq d} \mathbb{I}\Bigl\{ \bigl( A^\star_{jk}, A^\star_{kj} \bigr) \neq \bigl( \widehat{A}_{jk}, \widehat{A}_{kj} \bigr) \Bigr\}, \quad \mathrm{F1}(\widehat{A}) \coloneqq \frac{2 \, \mathrm{TP}}{2 \, \mathrm{TP} + \mathrm{FP} + \mathrm{FN}},
\]
where $\mathrm{TP}$, $\mathrm{FP}$, and $\mathrm{FN}$ count true-positive, false-positive, and false-negative directed edges, respectively. Thus, a missing, extra, or reversed edge contributes one unit to SHD, which ranges from $0$ to $d(d - 1) / 2$, whereas F1 ranges from $0$ to $1$. Lower SHD and higher F1 indicate better recovery.

Although the SHD is often the main comparison metric in causal discovery \citep{acid2003searching, reisach2021beware}, we mainly focus on the disorder because it is fully agnostic to sparse parent selection and therefore isolates causal order recovery, while SHD and F1 also assess edge selection, which can be affected by other factors. Furthermore, the disorder of an estimated causal order $\hat{\sigma}$ is zero if and only if $\hat{\sigma} \in \mathcal{I}^\star$, i.e., if it is a true causal order for the given model, and thus directly reflects the convergence statement of Corollary~\ref{cor:dag-order-consistency}.

\paragraph{Statistical Performance.}

In this experiment, we assess how causal order recovery varies with the sample size $n$ and the dimension $d$. For each variable $j$, we first draw an independent scale $U_j \sim \mathrm{Unif}\bigl( [0.5, 2] \bigr)$, independently of all other sources of randomness. We then draw the structural noises independently according to
\[
\forall i \in \llbracket n \rrbracket, \, \forall j \in \llbracket d \rrbracket, \, \varepsilon_{ij} \coloneqq U_j Z_{ij}, \quad Z_{ij} \overset{\mathrm{i.i.d.}}{\sim} \mathrm{Lap}(0, 1).
\]

For each graph family and density, we average the disorder over $20$ independently generated datasets. We first vary $n \in \bigl\{ 100, 250, 500, 1{,}000, 1{,}500 \bigr\}$ while fixing $d = 8$, and then vary $d \in \bigl\{ 4, 6, 8, 10, 12 \bigr\}$ while fixing $n = 1{,}000$. Figure~\ref{fig:varying-nd-disorder} reports the results under the four graph configurations. Increasing the sample size reduces both the mean disorder and its variability. DAGMA fares worse than almost all other methods across the four graph families, whereas the methods that exploit non-Gaussianity approach perfect causal order recovery as $n$ increases. This difference may be understood from the fact that DAGMA is the only method in the comparison that does not exploit the non-Gaussianity of the structural noises. More precisely, the population least-squares problem underlying its score
\[
\min_{B \in \mathbb{R}^{d \times d} : B \text{ DAG}} \mathbb{E}\bigl[ \Vert X - BX \Vert^2 \bigr],
\]
does not generally admit $B^\star$ as a minimizer when the structural noises are heteroscedastic, even in the oracle setting \citep{loh2014high, chen2019causal, park2020identifiability, peters2014identifiability}. Among the remaining methods, Greedy OT-LiNGAM exhibits slightly higher finite-sample disorder and variability, while differences are otherwise small throughout the tested values of $(n, d)$. At $n = 1{,}000$, performance remains similarly stable as the dimension grows, with only a modest degradation for the greedy algorithm.

\begin{figure}[t]
\centering
\includegraphics[width=\textwidth]{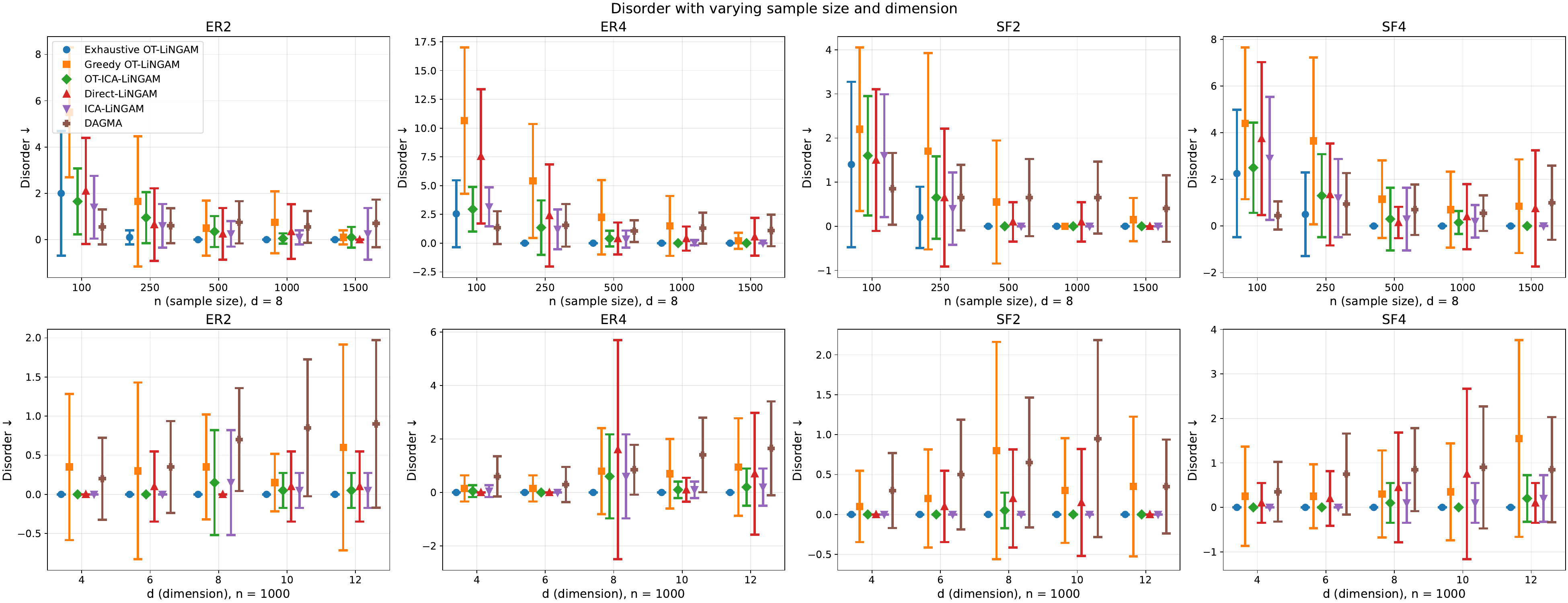}
\caption{Causal-order recovery on ER$2$, ER$4$, SF$2$, and SF$4$ graphs. Top: varying sample size with $d = 8$. Bottom: varying dimension with $n = 1{,}000$. Points show the mean disorder over $20$ runs, and capped error bars show one standard deviation.}
\label{fig:varying-nd-disorder}
\end{figure}

\paragraph{Varying non-Gaussianity.}

We next examine robustness as the structural noises become more heterogeneous and some component distributions become increasingly close to Gaussian. We fix $n = 3{,}000$ and $d = 8$, and set the $j$-th variable to be a standardized Student-$t$ noise with $\nu_j$ degrees of freedom
\[
\forall j \in \llbracket d \rrbracket, \, \nu_j \coloneqq \nu_\mathrm{min} + \frac{j - 1}{d - 1}\bigl( \nu_\mathrm{max} - \nu_\mathrm{min} \bigr).
\]

For each variable $j \in \llbracket d \rrbracket$, we independently draw $U_j \sim \mathrm{Unif}\bigl( [0.5, 2] \bigr)$, independently of the Student-$t$ variables, and sample the scaled, standardized noise as
\[
\forall i \in \llbracket n \rrbracket, \, \forall j \in \llbracket d \rrbracket, \, \varepsilon_{ij} \coloneqq U_j \sqrt{\frac{\nu_j - 2}{\nu_j}} Z_{ij}, \quad Z_{ij} \overset{\mathrm{ind.}}{\sim} \mathrm{Student}\text{-}t_{\nu_j}.
\]

In our setting, the minimum number of degrees of freedom is fixed at $\nu_\mathrm{min} = 2.5$, while we vary $\nu_\mathrm{max} \in \bigl\{ 2.5, 5, 10, 20, 40 \bigr\}$. Larger values of $\nu_\mathrm{max}$ make some noise components progressively closer to Gaussian while increasing the disparity in non-Gaussianity across variables. Figure~\ref{fig:noise-heterogeneity-disorder} reports the resulting disorder over $20$ independently generated datasets for each configuration. Exhaustive OT-LiNGAM maintains near-zero mean disorder across the full range of $\nu_\mathrm{max}$ and all but the SF$2$ graph configurations, surpassing all other models. The ICA-based methods and DirectLiNGAM degrade moderately as some noise components approach Gaussianity. One possible explanation is that these procedures do not optimize the order objective subject to acyclicity, although the experiment does not isolate this effect. This interpretation accords with the motivation for the direct procedure in \cite{shimizu2011directlingam}. OT-ICA-LiNGAM does, however, achieve noticeably better results than the standard FastICA-based ICA-LiNGAM, showing that the proposed ICA criterion improves causal order recovery. DAGMA fares well when $\nu_\mathrm{max}$ is large, but performs substantially worse at low values of $\nu_\mathrm{max}$. Greedy OT-LiNGAM is substantially more sensitive and deteriorates as noise heterogeneity increases. The deterioration occurs as the experiment increasingly violates the greedy method's additional equal-Wasserstein-non-Gaussianity assumption, whereas the exhaustive objective does not require this equality. This correspondence supports, but does not by itself establish, the assumption as the cause of the observed sensitivity.

\begin{figure}[t]
\centering
\includegraphics[width=\textwidth]{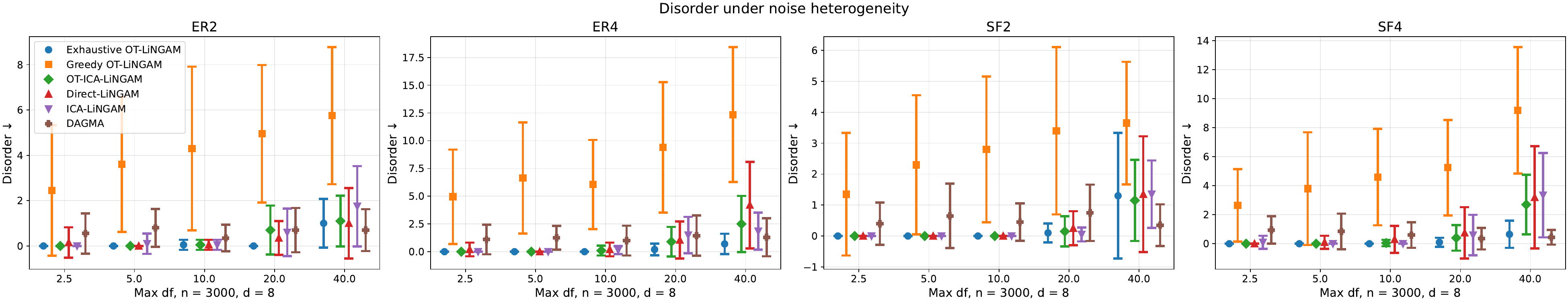}
\caption{Causal-order recovery under increasingly Gaussian structural noise on ER$2$, ER$4$, SF$2$, and SF$4$ graphs, with $n = 3{,}000$ and $d = 8$. Points show the mean disorder over $20$ runs, and capped error bars show one standard deviation. Larger $\nu_{\mathrm{max}}$ produces greater heterogeneity in non-Gaussianity across structural noises.}
\label{fig:noise-heterogeneity-disorder}
\end{figure}

\paragraph{Varying graph density.}

We next evaluate performance as the density of ER$k$ graphs increases. We fix $n = 1{,}000$ and $d = 8$, vary $k \in \bigl\{ 1, 2, 3, 4, 5, 6 \bigr\}$, and average each metric over $20$ independent runs. Figure~\ref{fig:varying-k-performance} reports disorder, SHD, and F1 score. Exhaustive OT-LiNGAM, OT-ICA-LiNGAM, and ICA-LiNGAM remain close to perfect recovery across the tested densities, with near-zero disorder and SHD and F1 scores close to one. Greedy OT-LiNGAM and DirectLiNGAM show greater variability as density increases, while DAGMA consistently gives larger SHD and lower F1 scores.

\begin{figure}[t]
\centering
\includegraphics[width=\textwidth]{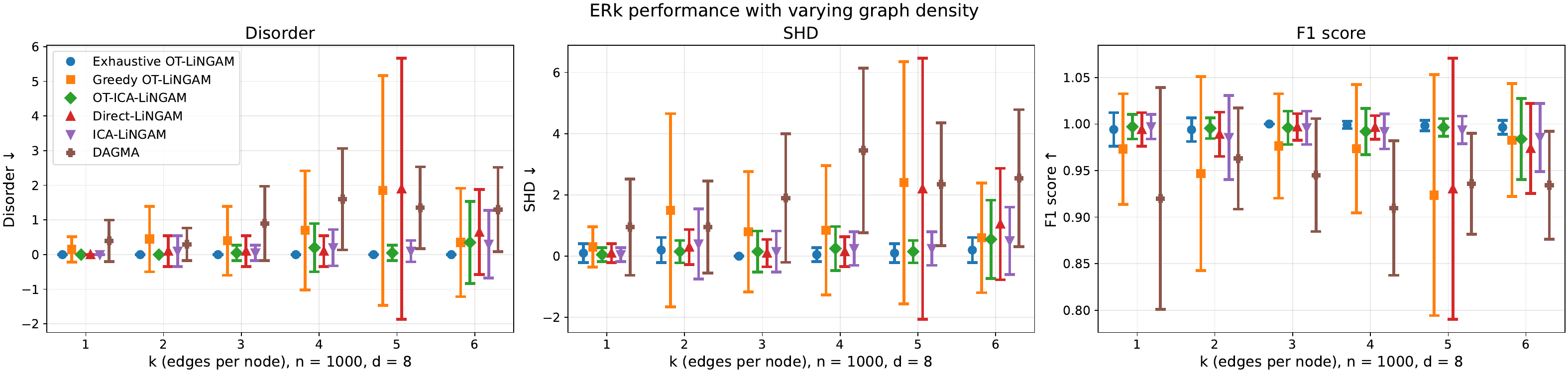}
\caption{Performance on ER$k$ graphs as density varies, with $n = 1{,}000$ and $d = 8$. Left: mean disorder. Center: mean SHD. Right: mean directed-edge F1 score. Points show means over $20$ runs, and capped error bars show one standard deviation.}
\label{fig:varying-k-performance}
\end{figure}

\paragraph{Runtime Performance.}

We finally evaluate runtime on ER$2$ graphs. We repeat each configuration independently for $10$ runs, varying $n$ from $250$ to $4{,}000$ at $d = 8$, and varying $d$ from $6$ to $20$ at $n = 1{,}000$. As shown in Figure~\ref{fig:runtime-scaling-lingam}, Greedy OT-LiNGAM and ICA-LiNGAM are the fastest methods over most configurations. Exhaustive OT-LiNGAM remains inexpensive at small $d$ but increases rapidly with dimension because its subset dynamic program has exponential complexity. OT-ICA-LiNGAM and DirectLiNGAM scale polynomially but are slower than the two fastest methods, while DAGMA has the largest runtime throughout this experiment.

\begin{figure}[t]
\centering
\includegraphics[width=0.7\textwidth]{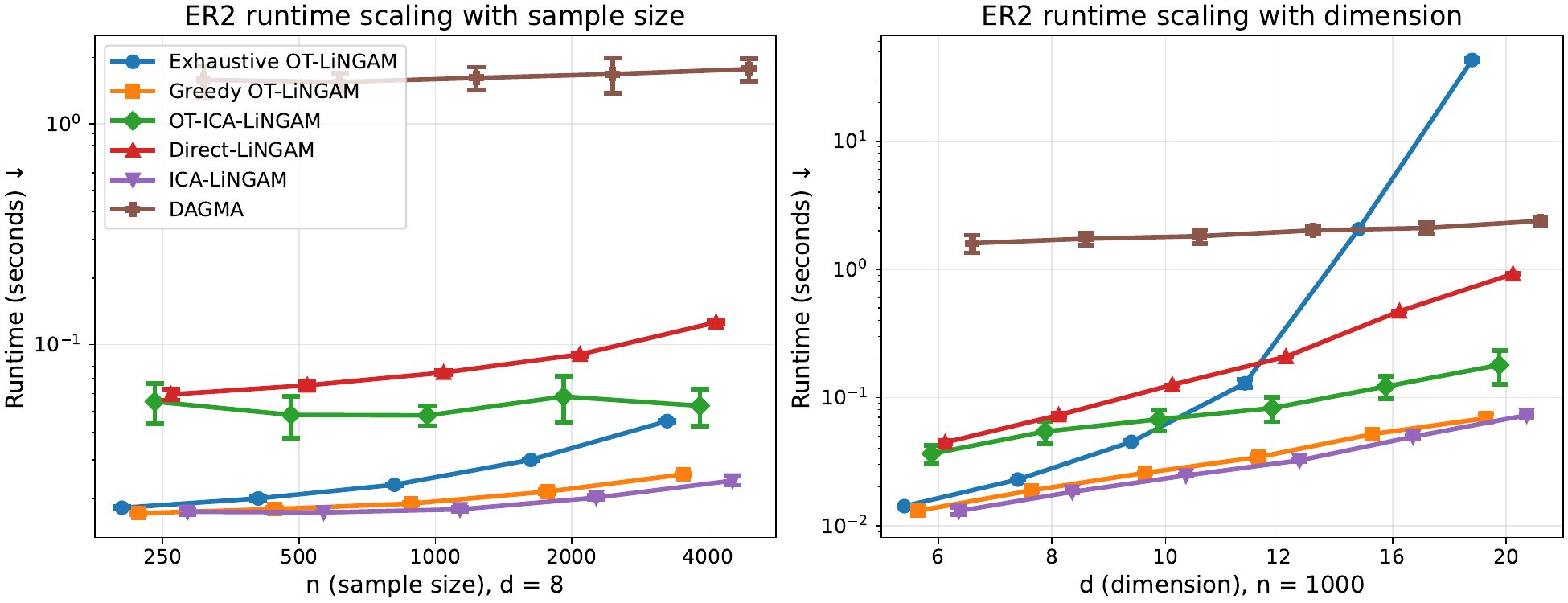}
\caption{Runtime on ER$2$ graphs. Left: varying sample size with $d = 8$. Right: varying dimension with $n = 1{,}000$. Lines connect the mean fitting time over $10$ runs, and capped error bars show one standard deviation.}
\label{fig:runtime-scaling-lingam}
\end{figure}

\section*{Conclusion and Perspectives}

We introduced a Wasserstein non-Gaussianity criterion for linear ICA and linear causal order recovery. The key mechanism is a strict comparison principle that yields identifiability of both the Wasserstein ICA objective and, through regression residuals, causal orders in linear non-Gaussian SEMs. The resulting framework uses a distributional score for source separation and causal discovery, avoiding hand-chosen ICA nonlinearities while retaining the standard non-Gaussian identifiability assumption. The same score also supports practical algorithms in both settings. For ICA, it can be optimized on the orthogonal group using a Picard-style solver, and causal orders can be estimated by exhaustive dynamic programming and greedy approximations.

Yet, the scope of these results is limited by the modeling assumptions. The analysis relies on linearity and on the independence of the latent sources or structural noises. These assumptions are standard in ICA and LiNGAM, but they exclude nonlinear causal mechanisms, feedback, and latent confounding. Furthermore, for ICA, the population objective is identifiable, but the empirical objective remains nonconvex on the orthogonal group, so practical Picard-style solvers are local optimizers. For causal discovery, Exhaustive OT-LiNGAM gives a true global optimum for the empirical order score, but its exponential dependence on the number of variables limits its use to moderate dimensions.

Future work may include nonlinear SEMs, latent-confounder models, and high-dimensional screening procedures that reduce the candidate pool before exact or greedy order search. Another direction is to improve Wasserstein ICA optimization using the piecewise-quadratic structure induced by projection ranks.

\section*{Funding}

The authors did not receive specific funding for this work.

\section*{Competing Interests}

The authors declare no competing interests relevant to the content of this article.

\bibliography{sources}

\newpage

\appendix

\section{Two-Dimensional Criterion Rotation} \label{sec:criterion-rotation}

In this section, we compare our Wasserstein criterion with the $\log$-cosh negentropy approximation used in FastICA. We use the same source distributions as in Subsection~\ref{subsec:ica-experiments}. For each distribution, we consider two independent, centered, variance-one sources $S_1$ and $S_2$ having that common distribution. Without loss of generality, take $w(0) = (1, 0)^\top$ as a true unmixing direction and rotate it through
\[
w(\theta) \coloneqq (\cos \theta, \sin \theta)^\top, \quad X(\theta) \coloneqq w(\theta)^\top S = \cos \theta \, S_1 + \sin \theta \, S_2, \quad \theta \in [-45^\circ, 45^\circ].
\]

For each angle, we evaluate our squared $2$-Wasserstein objective and the $\log$-cosh negentropy approximation. The empirical curves use $n = 3{,}000$ observations and show $20$ independently generated samples. Oracle curves illustrate the corresponding population objectives. As depicted in Figure~\ref{fig:criterion-rotation}, the empirical plug-in Wasserstein objective varies less across runs and follows its oracle curve more closely. In the final panel, the oracle $\log$-cosh objective reverses completely: the true unmixing direction at $\theta = 0$, marked by the vertical black dashed line, is a minimum of the negentropy approximation, whereas it remains the maximum of the Wasserstein objective.

\begin{figure}[H]
\centering
\includegraphics[width=\textwidth]{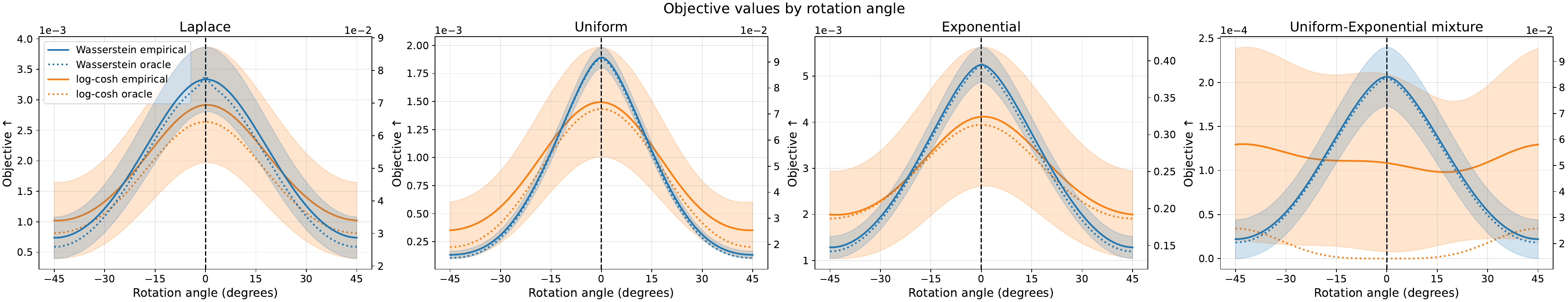}
\caption{Wasserstein and $\log$-cosh objectives along rotations of a true unmixing direction in dimension two for Laplace, uniform, centered-exponential, and calibrated uniform--exponential sources, with $n = 3{,}000$. The vertical black dashed line marks the true direction at $\theta = 0$. Solid lines show empirical means over $20$ runs, shaded regions show one standard deviation, and dotted lines show the oracle objectives.}
\label{fig:criterion-rotation}
\end{figure}

\section{Additional Results}

\begin{lemma}[Gaussian Distance Bound under Standardization] \label{lem:standardized-gaussian-bound}
Let $Z$ be a real-valued random variable such that $\mathbb{E}[Z^2] = 1$. Then
\[
\mathcal{W}_2\bigl( Z, \mathcal{N}(0, 1) \bigr)^2 \leq 2.
\]
\end{lemma}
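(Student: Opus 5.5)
We bound the infimum defining $\mathcal{W}_2$ by the cost of a convenient coupling. The natural candidate is the independent coupling, or more sharply the comonotone (quantile) coupling. The simplest route that already gives the constant $2$ uses the independent coupling.

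Let $G \sim \mathcal{N}(0,1)$ be independent of $Z$. The pair $(Z, G)$ is a coupling of the two laws, so by definition of the $2$-Wasserstein distance
\[
\mathcal{W}_2\bigl( Z, \mathcal{N}(0, 1) \bigr)^2 \leq \mathbb{E}\bigl[ (Z - G)^2 \bigr] = \mathbb{E}[Z^2] - 2\,\mathbb{E}[Z]\,\mathbb{E}[G] + \mathbb{E}[G^2] = 1 + 1 = 2,
\]
because $\mathbb{E}[G] = 0$ kills the cross term. Note that this does not require $Z$ to be centered, only $\mathbb{E}[Z^2] = 1$, which also guarantees $Z \in \mathcal{P}_2(\mathbb{R})$ so that the distance is well defined.

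Alternatively, one could use the quantile coupling $(F_Z^{-1}(U), \Phi^{-1}(U))$ with $U$ uniform. This gives $\mathcal{W}_2^2 = 2 - 2\,\mathbb{E}[F_Z^{-1}(U)\Phi^{-1}(U)]$. By the rearrangement inequality the comonotone cross moment dominates the independent one, which is $\mathbb{E}[Z]\cdot 0 = 0$, so the same bound follows. Computing that comonotone expectation is unnecessary, however, since the independent coupling already suffices.

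There is essentially no obstacle here. The only point to check is that the cross term vanishes, which uses independence and the centering of the Gaussian rather than any property of $Z$.
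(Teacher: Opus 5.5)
Your proposal is correct and matches the paper's proof: both bound $\mathcal{W}_2\bigl( Z, \mathcal{N}(0, 1) \bigr)^2$ by the cost $\mathbb{E}\bigl[ (Z - G)^2 \bigr] = 2$ of the independent coupling with $G \sim \mathcal{N}(0, 1)$, where the cross term vanishes because $\mathbb{E}[G] = 0$. Your quantile-coupling aside is also valid (the comonotone coupling maximizes the cross moment), but, as you note, it is not needed.
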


\bigbreak

The proof of Lemma~\ref{lem:standardized-gaussian-bound} is provided in Appendix~\ref{pf:standardized-gaussian-bound}.

\bigbreak

\begin{lemma}[Exact Computation of the Empirical Gaussian Wasserstein Distance] \label{lem:empirical-gaussian-wasserstein}
Let $\Phi$ and $\phi$ denote the cumulative distribution and density functions of $\mathcal{N}(0, 1)$, respectively. For every $n \geq 1$, set $z_i \coloneqq \Phi^{-1}\bigl( \frac{i}{n} \bigr)$ for $i \in \llbracket n - 1 \rrbracket$, with $z_0 = -\infty$, $z_n = +\infty$, and $\phi(-\infty) = \phi(+\infty) = 0$, and define
\[
\forall i \in \llbracket n \rrbracket, \, q_i \coloneqq n \int_{\frac{i - 1}{n}}^{\frac{i}{n}} \Phi^{-1}(u) \, du = n \bigl( \phi(z_{i - 1}) - \phi(z_i) \bigr), \quad c_n \coloneqq 1 - \frac{1}{n} \sum_{i = 1}^n q_i^2.
\]

Then for every $x^{(1)}, \dots, x^{(n)} \in \mathbb{R}$, with order statistics $x^{(1:n)} \leq \cdots \leq x^{(n:n)}$ and empirical measure $\mu_n \coloneqq \frac{1}{n} \sum_{i = 1}^n \delta_{x^{(i)}}$,
\[
\mathcal{W}_2\bigl( \mu_n, \mathcal{N}(0, 1) \bigr)^2 = \frac{1}{n} \sum_{i = 1}^n \bigl( x^{(i:n)} - q_i \bigr)^2 + c_n.
\]

Moreover, the distance can be computed in $O(n \log n)$ time after precomputing $(q_i)_{i = 1}^n$ and $c_n$.
\end{lemma}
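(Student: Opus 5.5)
The plan is to use the one-dimensional quantile representation of the quadratic Wasserstein distance. For probability measures $\mu,\nu\in\mathcal{P}_2(\mathbb{R})$ with quantile functions $F_\mu^{-1}, F_\nu^{-1}$, the monotone (comonotone) coupling is optimal, so
\[
\mathcal{W}_2(\mu,\nu)^2=\int_0^1 \bigl(F_\mu^{-1}(u)-F_\nu^{-1}(u)\bigr)^2\,du .
\]
This is the standard one-dimensional fact; it can be cited from \cite{villani2009optimal} or \cite{peyre2019computational}. Brenier's theorem (Theorem~\ref{thm:brenier}) also gives it when $\nu=\mathcal{N}(0,1)$ is taken as the absolutely continuous source measure: the optimal map is monotone and equals $F_{\mu_n}^{-1}\circ\Phi$.

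For the empirical measure $\mu_n$, the quantile function is piecewise constant: $F_{\mu_n}^{-1}(u)=x^{(i:n)}$ for $u\in\bigl(\frac{i-1}{n},\frac{i}{n}\bigr]$. I will split $[0,1]$ into these $n$ intervals and expand the square on each one:
\[
\int_{\frac{i-1}{n}}^{\frac{i}{n}}\bigl(x^{(i:n)}-\Phi^{-1}(u)\bigr)^2du=\frac{1}{n}\bigl(x^{(i:n)}\bigr)^2-\frac{2}{n}x^{(i:n)}q_i+\int_{\frac{i-1}{n}}^{\frac{i}{n}}\Phi^{-1}(u)^2du .
\]
Summing over $i$, the last terms add up to $\int_0^1\Phi^{-1}(u)^2du=\mathbb{E}[Z^2]=1$. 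I then add and subtract $\frac1n\sum_i q_i^2$. This turns the expression into $\frac1n\sum_i(x^{(i:n)}-q_i)^2+1-\frac1n\sum_i q_i^2$, which is the claimed formula with $c_n$.

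Next I verify the closed form of $q_i$. Substituting $u=\Phi(z)$ gives $\int \Phi^{-1}(u)\,du=\int z\phi(z)\,dz$, and since $\phi'(z)=-z\phi(z)$ this equals $\phi(z_{i-1})-\phi(z_i)$. The endpoint conventions $\phi(\pm\infty)=0$ cover $i=1$ and $i=n$. The integrals are finite because the Gaussian has finite second moment, so all the manipulations above are legitimate, and nothing at all is needed from the $x^{(i)}$.

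For the complexity claim, once $(q_i)$ and $c_n$ are precomputed, the only work is sorting the sample, which costs $O(n\log n)$, followed by an $O(n)$ sum.

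The only point that needs care is justifying the quantile formula, that is, optimality of the monotone coupling. I would cite it directly rather than reprove it; a self-contained alternative is the Brenier route described above. Everything else is an algebraic identity.
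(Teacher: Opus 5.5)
Your proposal is correct and follows the paper's proof essentially step for step. Both use the one-dimensional quantile representation, expand the square on each interval $((i-1)/n, i/n]$ using $\int_0^1 \Phi^{-1}(u)^2\,du = 1$, complete the square to obtain $c_n$, and attribute the $O(n\log n)$ cost to sorting. Your substitution $u=\Phi(z)$ with $\phi'(z)=-z\phi(z)$ also verifies the closed form $q_i = n(\phi(z_{i-1})-\phi(z_i))$, which the paper asserts in the statement without derivation.
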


\bigbreak

The proof of Lemma~\ref{lem:empirical-gaussian-wasserstein} is provided in Appendix~\ref{pf:empirical-gaussian-wasserstein}.

\bigbreak

\begin{lemma}[Proportional Residual is Exogenous] \label{lem:prop-parents}
Suppose Assumption~\ref{ass:causal-noise-normalization} holds. Then, $X_j \propto \varepsilon_k$ for some $j, k \in \llbracket d \rrbracket$ in Model~\eqref{eq:lingam} if and only if $\mathrm{Pa}_{\mathcal{G}^\star}(j) = \emptyset$, i.e., $X_j$ is exogenous.
\end{lemma}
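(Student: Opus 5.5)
We work with the reduced form $X = (I_d - B^\star)^{-1}\varepsilon$. Fix a causal order $\sigma^\star \in \mathcal{I}^\star$, which exists because the graph is acyclic, and expand $X$ recursively along $\sigma^\star$. This shows that every $X_j$ is $\varepsilon_j$ plus a linear combination of the noises $\varepsilon_k$ with $k$ an ancestor of $j$:
\[
X_j = \varepsilon_j + \sum_{k \in \mathrm{An}(j)} c_{jk}\, \varepsilon_k .
\]
Here $\mathrm{An}(j)$ denotes the set of ancestors of $j$. Equivalently, $(I_d - B^\star)^{-1}$ is unit lower triangular after permuting by $\sigma^\star$, so its diagonal entries equal one. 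Hence the coefficient of $\varepsilon_j$ in $X_j$ is exactly $1$, while $\varepsilon_k$ has coefficient $0$ in $X_j$ whenever $k \neq j$ is not an ancestor of $j$.

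\emph{Backward direction.} Suppose $\mathrm{Pa}_{\mathcal{G}^\star}(j) = \emptyset$. The structural equation then reads $X_j = \varepsilon_j$, so $X_j \propto \varepsilon_k$ with $k = j$. This direction is immediate.

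\emph{Forward direction.} Suppose $X_j = \lambda \varepsilon_k$ for some scalar $\lambda$.
\begin{enumerate}
\item The noises are mutually independent with strictly positive variances, so they are linearly independent in $L^2$. This follows from $\mathrm{Cov}(\varepsilon)$ being diagonal and positive definite. Consequently, the representation of $X_j$ as a linear combination of $\varepsilon_1, \dots, \varepsilon_d$ is unique.
\item The coefficient of $\varepsilon_j$ in $X_j$ is $1$. Comparing with $\lambda \varepsilon_k$ forces $k = j$ and $\lambda = 1$, and all ancestor coefficients $c_{jk'}$ must vanish. Thus $X_j = \varepsilon_j$.
\item Substitute this into the structural equation, which gives $\sum_{k' \in \mathrm{Pa}(j)} B^\star_{jk'} X_{k'} = 0$. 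The covariance $\Sigma = (I_d - B^\star)^{-1} \mathrm{Cov}(\varepsilon) (I_d - B^\star)^{-\top}$ is positive definite, so $X_1, \dots, X_d$ are linearly independent in $L^2$.
\item By this linear independence, $B^\star_{jk'} = 0$ for every $k'$. This contradicts the definition of parents unless $\mathrm{Pa}_{\mathcal{G}^\star}(j) = \emptyset$, which is the claim.
\end{enumerate}

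The main obstacle is bookkeeping rather than depth. One must justify rigorously that the diagonal of $(I_d - B^\star)^{-1}$ equals one, via the triangular structure under $\sigma^\star$. One must also state $L^2$ linear independence of the noises and of the observations; both follow from Assumption~\ref{ass:causal-noise-normalization}.

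An alternative for the last step avoids the polynomial-coefficient argument. Use Step 1 directly: the coefficient of $\varepsilon_{k'}$ in $X_j$, for a parent $k'$ with no other route, is nonzero, so all such coefficients vanishing already excludes parents. However, cancellations along multiple paths make that route messier. The argument via $\Sigma \succ 0$ in Steps 3 and 4 is cleaner, so I would use it.
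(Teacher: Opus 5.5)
Your proof is correct and follows essentially the same route as the paper: the unit diagonal of $(I_d - B^\star)^{-1}$ from its triangular form under a causal order, the reduction to $X_j = \varepsilon_j$, and then nondegeneracy of $X$ ($\Sigma \succ 0$) to force every $B^\star_{jh}$ to vanish. Your explicit appeal to $L^2$ linear independence of the noises also rules out $k \neq j$ and gives $\lambda = 1$. This fills in a step the paper only asserts, since it writes $k = j \implies X_j \propto \varepsilon_j \implies X_j = \varepsilon_j$ without addressing the case $k \neq j$.
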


\bigbreak

The proof of Lemma~\ref{lem:prop-parents} is provided in Appendix~\ref{pf:prop-parents}.

\bigbreak

\begin{theorem}[Greedy Recovery under Equal Non-Gaussianity] \label{thm:greedy-equal-score}
Suppose Assumptions~\ref{ass:causal-noise-normalization} and~\ref{ass:causal-non-gaussianity} hold, and that all structural noises share the same Wasserstein non-Gaussianity score
\begin{equation}
\mathcal{W}_2\bigl( \mathrm{std}(\varepsilon_1), \mathcal{N}(0, 1) \bigr)^2 = \cdots = \mathcal{W}_2\bigl( \mathrm{std}(\varepsilon_d), \mathcal{N}(0, 1) \bigr)^2 = c > 0. \label{eq:greedy-equal-score}
\end{equation}

Then the oracle greedy Wasserstein rule returns a causal order of $\mathcal{G}^\star$.
\end{theorem}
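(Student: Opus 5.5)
We argue by induction on the greedy steps, in the oracle (population) setting where Algorithm~\ref{alg:greedy-lingam} uses population residuals, projections and second moments. The invariant is: at the start of step $t$, the already selected variables $j_1, \dots, j_{t-1}$ form an initial segment of some causal order of $\mathcal{G}^\star$. Under this invariant, each current residual is the population residual of $X_k$ on $\mathrm{span}\{X_{j_1}, \dots, X_{j_{t-1}}\}$. Successive one-dimensional projections onto residuals are Gram--Schmidt, so they agree with projection onto this span.

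\textbf{Residual structure.} Let $A = \{j_1, \dots, j_{t-1}\}$. Because $A$ is ancestrally closed, $\mathrm{span}\{X_a : a \in A\} = \mathrm{span}\{\varepsilon_a : a \in A\}$. Each $X_k$ is a linear combination of $\varepsilon_k$ and the noises of its ancestors. The noises are independent and centered, hence uncorrelated. Therefore the residual $R_k$ of a remaining variable $k \notin A$ equals $\varepsilon_k + \sum_{m \in \mathrm{An}(k) \setminus A} c_{km} \varepsilon_m$.

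\textbf{Scores of remaining variables.} Consider the reduced model on the remaining variables with noises $\{\varepsilon_m : m \notin A\}$. This is again a linear SEM, namely the induced subgraph with paths through $A$ removed. A remaining variable $k$ is a source of this reduced graph if and only if every parent of $k$ lies in $A$, i.e.\ if and only if $R_k = \varepsilon_k$. In that case its score is exactly $c$. If instead $k$ has a parent outside $A$, we want $R_k \not\propto \varepsilon_m$ for every $m$; this is the content of Lemma~\ref{lem:prop-parents} applied to the reduced SEM. Then $\mathrm{std}(R_k) = \sum_m \alpha_m \mathrm{std}(\varepsilon_m)$ with $\Vert\alpha\Vert_2 = 1$ and at least two nonzero $\alpha_m$. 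The standardized noises satisfy Assumptions~\ref{ass:ica-standardization} and~\ref{ass:ica-non-gaussianity}, so Lemma~\ref{lem:strict-wasserstein} gives
\[
\mathcal{W}_2\bigl(\mathrm{std}(R_k), \mathcal{N}(0,1)\bigr)^2 < \sum_m \alpha_m^2 c = c.
\]
Hence the argmax at step $t$ consists exactly of the reduced-graph sources. Any such source $k$ has all its parents in $A$, so appending $k$ preserves the invariant. This closes the induction, and after $d$ steps the output is a causal order.

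\textbf{Main obstacle.} The delicate step is ruling out cancellation, i.e.\ showing that $R_k$ genuinely involves a noise other than $\varepsilon_k$ when $k$ has a parent outside $A$. Path coefficients could cancel in principle, so the claim that $k$ is an ancestor of $m$ does not by itself give $c_{km} \neq 0$. I would avoid path coefficients entirely. Take a remaining parent $p$ of $k$ that is maximal in the causal order among the remaining parents. The coefficient of $\varepsilon_p$ in $R_k$ is then $B^\star_{kp}$ plus contributions through other remaining parents $p'$. Each such contribution requires $p$ to be an ancestor of $p'$, which contradicts the maximality of $p$. So the coefficient is exactly $B^\star_{kp} \neq 0$, which gives at least two nonzero weights. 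Note also that if $c > 0$ and one noise were Gaussian, its score would be $0 \neq c$. Hence all noises are non-Gaussian, and Lemma~\ref{lem:strict-wasserstein} applies without subtlety.
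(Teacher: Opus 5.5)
Your proposal is correct and follows essentially the same route as the paper. Both run an induction in which the sources of the reduced graph on the remaining variables have residual $\varepsilon_k$ and score exactly $c$, while every other residual is a nontrivial orthogonal mixture of standardized noises and scores strictly below $c$ by Lemma~\ref{lem:strict-wasserstein}. Your explicit ancestral-closure invariant, the Gram--Schmidt identification of the sequential updates, and the maximal-remaining-parent argument giving a coefficient of exactly $B^\star_{kp} \neq 0$ on $\varepsilon_p$ make rigorous what the paper obtains from Lemma~\ref{lem:prop-parents} and an induction ``similar to'' the proof of Proposition~\ref{prop:permutation-causal}.
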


\bigbreak

The proof of Theorem~\ref{thm:greedy-equal-score} is provided in Appendix~\ref{pf:greedy-equal-score}.

\begin{remark}[Greedy Failure Under Heterogeneous Noise]
Consider the two-variable model $X_1 = \varepsilon_1$, $X_2 = \beta X_1 + \varepsilon_2$, with $\beta \neq 0$, an instance of Model~\eqref{eq:lingam} with true order $\sigma^\star = (1, 2)$. Take the structural noises to be strongly unbalanced: $\varepsilon_1 \sim \mathcal{N}(0, 1)$ and $\varepsilon_2$ non-Gaussian, violating Equation~\eqref{eq:greedy-equal-score}.

At the first greedy step, the candidates are the raw standardized variables. The true source has score $\mathcal{W}_2\bigl( \mathrm{std}(X_1), \mathcal{N}(0, 1) \bigr)^2 = \mathcal{W}_2\bigl( \varepsilon_1, \mathcal{N}(0, 1) \bigr)^2 = 0$. The other candidate is a nontrivial mixture, hence
\[
\mathcal{W}_2\bigl( \mathrm{std}(X_2), \mathcal{N}(0, 1) \bigr)^2 > 0.
\]

Therefore, the greedy rule selects the wrong order $\hat{\sigma} = (2, 1)$.
\end{remark}

\section{Proofs}

\subsection{Proof of Lemma~\ref{lem:wasserstein-linear-combination}} \label{pf:wasserstein-linear-combination}

\begin{proof}
Since the variables have finite second moments, an optimal coupling $\pi_j$ between $S_j$ and $Z_j$ exists for each $j \in \llbracket d \rrbracket$. Then, consider the product coupling
\[
\pi \coloneqq \bigotimes_{j = 1}^d \pi_j.
\]

Let $(S_1', Z_1'), \dots, (S_d', Z_d')$ have joint distribution $\pi$. Since $\pi$ is a product coupling, the pairs $(S_j', Z_j')$ are independent and have the prescribed marginals. Since $S_j$ and $Z_j$ are centered, each difference $S_j' - Z_j'$ is centered. The product coupling therefore induces an admissible coupling between the two linear combinations. Expanding the square and using independence and centering to cancel the cross terms gives

\begin{align*}
\mathcal{W}_2\bigl( \sum_{j = 1}^d \alpha_j S_j, \sum_{j = 1}^d \alpha_j Z_j \bigr)^2 &\leq \mathbb{E}_\pi\Bigl[ \bigl( \sum_{j = 1}^d \alpha_j (S_j' - Z_j') \bigr)^2 \Bigr] \\
&= \sum_{j = 1}^d \alpha_j^2 \mathbb{E}_{\pi_j} \bigl[ (S_j' - Z_j')^2 \bigr] \\
&= \sum_{j = 1}^d \alpha_j^2 \mathcal{W}_2(S_j, Z_j)^2,
\end{align*}

where the final equality follows from the optimality of each $\pi_j$.
\end{proof}

\subsection{Proof of Lemma~\ref{lem:strict-wasserstein}} \label{pf:strict-wasserstein}

\begin{proof}
Let $Z_1, \dots, Z_d$ be independent standard Gaussian random variables. For each $j \in \llbracket d \rrbracket$, by Assumption~\ref{ass:ica-standardization}, $S_j \in \mathcal{P}_2(\mathbb{R})$, so Theorem~\ref{thm:brenier} ensures the existence of an almost everywhere unique monotone optimal transport map $T_j$ satisfying
\[
(T_j)_\sharp \mathcal{N}(0, 1) = S_j.
\]

Similarly, let $T$ be the monotone optimal transport map from $\mathcal{N}(0, 1)$ to the distribution of $\sum_{j = 1}^d \alpha_j S_j$. By Assumption~\ref{ass:ica-standardization}, the sources are independent, centered, and normalized, so Lemma~\ref{lem:wasserstein-linear-combination} gives the non-strict inequality; since $\Vert \alpha \Vert_2 = 1$, we have $\sum_{j = 1}^d \alpha_j Z_j \sim \mathcal{N}(0, 1)$.

Suppose for contradiction that equality holds, and let
\[
U \coloneqq \sum_{j = 1}^d \alpha_j Z_j, \quad V \coloneqq \sum_{j = 1}^d \alpha_j T_j(Z_j).
\]

The pair $(U, V)$ is the coupling used in the proof of Lemma~\ref{lem:wasserstein-linear-combination}. Equality implies that this coupling is optimal. Uniqueness of the optimal coupling with first marginal $\mathcal{N}(0, 1)$ gives $V = T(U)$ almost surely. Since $(Z_1, \dots, Z_d)$ has a strictly positive density on $\mathbb{R}^d$, we obtain
\begin{equation} \label{eq:pexider}
T\bigl(\sum_{j = 1}^d \alpha_j z_j\bigr) = \sum_{j = 1}^d \alpha_j T_j(z_j),
\end{equation}
for almost every $z \in \mathbb{R}^d$.

Since $\#\bigl\{ j \in \llbracket d \rrbracket : \alpha_j \neq 0 \bigr\} > 1$, choose distinct indices $j_1, j_2 \in \llbracket d \rrbracket$ such that $\alpha_{j_1}$ and $\alpha_{j_2}$ are nonzero. By Fubini's theorem, there exists $z_0 = (z_{0, j})_{j \notin \{ j_1, j_2 \}}$ such that Equation~\eqref{eq:pexider} holds for almost every $(z_{j_1}, z_{j_2}) \in \mathbb{R}^2$ after setting $z_j = z_{0, j}$ for all remaining indices. For this fixed $z_0$, Equation~\eqref{eq:pexider} becomes, almost everywhere
\[
T\bigl(\alpha_{j_1}z_{j_1} + \alpha_{j_2}z_{j_2} + c_0\bigr) = \alpha_{j_1}T_{j_1}(z_{j_1}) + \alpha_{j_2}T_{j_2}(z_{j_2}) + c_1
\]
where
\[
c_0 = \sum_{j \notin \{ j_1, j_2 \}} \alpha_j z_{0, j}, \quad c_1 = \sum_{j \notin \{ j_1, j_2 \}} \alpha_j T_j(z_{0, j}).
\]

After the changes of variables $x = \alpha_{j_1}z_{j_1}$ and $y = \alpha_{j_2}z_{j_2}$, define
\[
f(t) = T(t + c_0) - c_1, \quad g(x) = \alpha_{j_1} T_{j_1}\bigl( \frac{x}{\alpha_{j_1}} \bigr), \quad h(y) = \alpha_{j_2} T_{j_2}\bigl( \frac{y}{\alpha_{j_2}} \bigr).
\]

Then, we obtain the Pexider equation $f(x + y) = g(x) + h(y)$ for almost every $(x, y) \in \mathbb{R}^2$ with respect to Lebesgue measure. By standard results (see, e.g., \cite{de1966almost, aczel1989functional, reem2017remarks}), the measurable functions $f$, $g$, and $h$ must be affine almost everywhere. Therefore, $T_{j_1}(Z_{j_1})$ and $T_{j_2}(Z_{j_2})$ are affine images of standard Gaussian random variables. Their distributions, namely those of $S_{j_1}$ and $S_{j_2}$, are therefore Gaussian, contradicting Assumption~\ref{ass:ica-non-gaussianity}.
\end{proof}

\subsection{Proof of Theorem~\ref{thm:ica-identifiability}} \label{pf:ica-identifiability}

\begin{proof}
Fix $W \in \mathcal{O}_d(\mathbb{R})$. By Model~\eqref{eq:ica-model}, we have
\[
WX = W(W^\star)^{-1}S.
\]

Set $\widetilde{W} \coloneqq W(W^\star)^{-1}$. Since both $W$ and $W^\star$ are orthogonal, $\widetilde{W} \in \mathcal{O}_d(\mathbb{R})$. Thus, each recovered coordinate is a unit-norm linear combination of the true sources:
\[
\forall k \in \llbracket d \rrbracket, \, (WX)_k = (\widetilde{W}S)_k = \sum_{j = 1}^d \widetilde{W}_{kj}S_j.
\]

We now show that $F(W) \leq F(W^\star)$. Indeed, let $Z_1, \dots, Z_d$ be independent standard Gaussian variables. For every row $j \in \llbracket d \rrbracket$, we have
\[
\sum_{k = 1}^d \widetilde{W}_{jk} Z_k \sim \mathcal{N}\bigl( 0, \sum_{k = 1}^d (\widetilde{W}_{jk})^2 \bigr) = \mathcal{N}(0, 1).
\]

By Assumption~\ref{ass:ica-standardization}, applying Lemma~\ref{lem:wasserstein-linear-combination} to this row gives
\[
\mathcal{W}_2\bigl( \sum_{k = 1}^d \widetilde{W}_{jk} S_k, \mathcal{N}(0, 1) \bigr)^2 \leq \sum_{k = 1}^d (\widetilde{W}_{jk})^2 \mathcal{W}_2\bigl( S_k, \mathcal{N}(0, 1) \bigr)^2.
\]

Summing over all rows yields
\[
F(W) \leq \sum_{j = 1}^d \sum_{k = 1}^d (\widetilde{W}_{jk})^2 \mathcal{W}_2\bigl( S_k, \mathcal{N}(0, 1) \bigr)^2.
\]

Since $\widetilde{W}$ is orthogonal, its squared entries form a doubly stochastic matrix. Thus, we get
\[
F(W) \leq \sum_{k = 1}^d \Bigl\{ \mathcal{W}_2\bigl( S_k, \mathcal{N}(0, 1) \bigr)^2 \sum_{j = 1}^d (\widetilde{W}_{jk})^2 \Bigr\} = F(W^\star).
\]

Now, if $\widetilde{W}$ is a signed permutation matrix, every row selects exactly one source, and $WX$ is equivalent to $S$ up to the standard ICA indeterminacies. Hence $F(W) = F(W^\star)$.

If $\widetilde{W}$ is not a signed permutation matrix, at least one row contains two or more nonzero coefficients. By Assumptions~\ref{ass:ica-standardization} and~\ref{ass:ica-non-gaussianity}, Lemma~\ref{lem:strict-wasserstein} makes the inequality strict for that row, while the bounds for all other rows remain non-strict, hence $F(W) < F(W^\star)$.
\end{proof}

\subsection{Proof of Lemma~\ref{lem:empirical-gaussian-wasserstein}} \label{pf:empirical-gaussian-wasserstein}

\begin{proof}
Fix $n \geq 1$ and $x^{(1)}, \dots, x^{(n)} \in \mathbb{R}$. Since $n \geq 1$, the empirical measure and its order statistics are well-defined, and the one-dimensional quantile representation gives
\[
\mathcal{W}_2\bigl( \mu_n, \mathcal{N}(0, 1) \bigr)^2 = \sum_{i = 1}^n \int_{\frac{i - 1}{n}}^{\frac{i}{n}} \bigl( x^{(i:n)} - \Phi^{-1}(u) \bigr)^2 \, du.
\]

Expanding the squares and using $\int_0^1 \Phi^{-1}(u)^2 \, du = 1$ as well as the definition of $(q_i)_{i = 1}^n$ yields
\[
\mathcal{W}_2\bigl( \mu_n, \mathcal{N}(0, 1) \bigr)^2 = 1 + \frac{1}{n} \sum_{i = 1}^n \bigl( x^{(i:n)} \bigr)^2 - \frac{2}{n} \sum_{i = 1}^n x^{(i:n)}q_i.
\]

Completing the square gives
\[
\mathcal{W}_2\bigl( \mu_n, \mathcal{N}(0, 1) \bigr)^2 = \frac{1}{n} \sum_{i = 1}^n \bigl( x^{(i:n)} - q_i \bigr)^2 + \underbrace{1 - \frac{1}{n} \sum_{i = 1}^n q_i^2}_{c_n}.
\]

Furthermore, the values $q_i$ and $c_n$ depend only on $n$. Once they have been precomputed, sorting the observations costs $O(n \log n)$ and evaluating the sum costs $O(n)$, which proves the computational claim.
\end{proof}

\subsection{Proof of Lemma~\ref{lem:standardized-gaussian-bound}} \label{pf:standardized-gaussian-bound}

\begin{proof}
By definition, the Wasserstein distance is an infimum over all couplings. Therefore, taking $G \sim \mathcal{N}(0, 1)$ independent of $Z$, we have
\[
\mathcal{W}_2\bigl( Z, \mathcal{N}(0, 1) \bigr)^2 \leq \mathbb{E}\bigl[ (Z - G)^2 \bigr] = \underbrace{\mathbb{E}[Z^2] + \mathbb{E}[G^2]}_{2}  - 2\mathbb{E}[Z] \underbrace{\mathbb{E}[G]}_0 = 2. 
\]
\end{proof}

\subsection{Proof of Theorem~\ref{thm:uniform-convergence}} \label{pf:ica-uniform-convergence}

\begin{proof}
Fix $W \in \mathcal{O}_d(\mathbb{R})$. First, by the triangle inequality, we have
\begin{align*}
\bigl\vert \widehat{F}_n(W) - F(W) \bigr\vert &= \biggl\vert \sum_{j = 1}^d \Bigl\{ \mathcal{W}_2\bigl( (\pi_{W_j})_\sharp P_n, \mathcal{N}(0, 1) \bigr)^2 - \mathcal{W}_2\bigl( (\pi_{W_j})_\sharp P, \mathcal{N}(0, 1) \bigr)^2 \Bigr\} \biggr\vert \\
&\leq \sum_{j = 1}^d \Bigl\vert \mathcal{W}_2\bigl( (\pi_{W_j})_\sharp P_n, \mathcal{N}(0, 1) \bigr)^2 - \mathcal{W}_2\bigl( (\pi_{W_j})_\sharp P, \mathcal{N}(0, 1) \bigr)^2 \Bigr\vert.
\end{align*}

Further, by the reverse triangle inequality, for each $j \in \llbracket d \rrbracket$,
\[
\Bigl\vert \mathcal{W}_2\bigl( (\pi_{W_j})_\sharp P_n, \mathcal{N}(0, 1) \bigr) - \mathcal{W}_2\bigl( (\pi_{W_j})_\sharp P, \mathcal{N}(0, 1) \bigr) \Bigr\vert \leq \mathcal{W}_2\bigl( (\pi_{W_j})_\sharp P_n, (\pi_{W_j})_\sharp P \bigr).
\]

Since each row $W_j$ belongs to the unit sphere $\mathbb{S}^{d - 1}$, we can uniformly bound
\[
\mathcal{W}_2\bigl( (\pi_{W_j})_\sharp P_n, (\pi_{W_j})_\sharp P \bigr) \leq \sup_{\theta \in \mathbb{S}^{d - 1}} \mathcal{W}_2\bigl( (\pi_\theta)_\sharp P_n, (\pi_\theta)_\sharp P \bigr) \eqqcolon \widebar{\mathcal{W}}_2(P_n, P).
\]

By Theorem 3.5 of \cite{han2024maxsliced} and Assumption~\ref{ass:moment}, there exists a constant $C_p(P) > 0$, depending only on $p$ and $\mathbb{E}[ \Vert X \Vert^p ]$, such that
\[
\mathbb{E}\bigl[ \widebar{\mathcal{W}}_2(P_n, P)^2 \bigr] \leq C_p(P) \sqrt{\frac{d}{n}} \log(2n + 1)^{2/p + 1/2}.
\]
Jensen's inequality therefore gives
\[
\mathbb{E}\bigl[ \widebar{\mathcal{W}}_2(P_n, P) \bigr] \leq \sqrt{C_p(P)} d^{1/4} n^{-1/4} \log(2n + 1)^{1/p + 1/4}.
\]

\bigbreak

Now, set $\Delta_n \coloneqq \widebar{\mathcal{W}}_2(P_n, P)$. By Assumption~\ref{ass:ica-standardization} and Model~\eqref{eq:ica-model}, every projection of $P$ is centered with unit variance, so Lemma~\ref{lem:standardized-gaussian-bound} gives
\[
\begin{cases}
\mathcal{W}_2\bigl( (\pi_{W_j})_\sharp P, \mathcal{N}(0, 1) \bigr) \leq \sqrt{2}, \\
\Bigl\vert \mathcal{W}_2\bigl( (\pi_{W_j})_\sharp P_n, \mathcal{N}(0, 1) \bigr) - \mathcal{W}_2\bigl( (\pi_{W_j})_\sharp P, \mathcal{N}(0, 1) \bigr) \Bigr\vert \leq \Delta_n, \\
\mathcal{W}_2\bigl( (\pi_{W_j})_\sharp P_n, \mathcal{N}(0, 1) \bigr) + \mathcal{W}_2\bigl( (\pi_{W_j})_\sharp P, \mathcal{N}(0, 1) \bigr) \leq 2\sqrt{2} + \Delta_n.
\end{cases}
\]

Hence, using $\vert a^2 - b^2 \vert = \vert a + b \vert \vert a - b \vert$ for any $a, b \in \mathbb{R}$, we get
\[
\Bigl\vert \mathcal{W}_2\bigl( (\pi_{W_j})_\sharp P_n, \mathcal{N}(0, 1) \bigr)^2 - \mathcal{W}_2\bigl( (\pi_{W_j})_\sharp P, \mathcal{N}(0, 1) \bigr)^2 \Bigr\vert \leq 2\sqrt{2}\Delta_n + \Delta_n^2.
\]

This bound holds uniformly in $W$. Moreover, the sample size condition implies
\[
d^{1/4} n^{-1/4} \log(2n + 1)^{1/p + 1/4} \leq 1.
\]

Thus, the contribution of $d \mathbb{E}[\Delta_n^2]$ is bounded by the contribution of $d \mathbb{E}[\Delta_n]$. Summing the $d$ terms and enlarging $C_p(P)$ to absorb the numerical constants yields
\[
\mathbb{E}\Bigl[ \sup_{W \in \mathcal{O}_d(\mathbb{R})} \bigl\vert \widehat{F}_n(W) - F(W) \bigr\vert \Bigr] \leq C_p(P) d^{5/4} n^{-1/4} \log(2n + 1)^{1/p + 1/4}.
\]
\end{proof}

\subsection{Proof of Corollary~\ref{cor:ica-estimator-consistency}} \label{pf:ica-estimator-consistency}

\begin{proof}
Fix $\delta > 0$ such that $\mathcal{K}_\delta$ is nonempty. By Assumptions~\ref{ass:ica-standardization} and~\ref{ass:ica-non-gaussianity} and Theorem~\ref{thm:ica-identifiability}, $\Gamma(\delta) > 0$. Further, let
\[
Z_n \coloneqq \sup_{W \in \mathcal{O}_d(\mathbb{R})} \bigl\vert \widehat{F}_n(W) - F(W) \bigr\vert.
\]

Suppose $\rho(\widehat{W}_n) \geq \delta$. By definition, this implies $\widehat{W}_n \in \mathcal{K}_\delta$, and since $\widehat{W}_n$ maximizes $\widehat{F}_n$, we have
\begin{align*}
\Gamma(\delta) &\leq F(W^\star) - F(\widehat{W}_n) \\
&= F(W^\star) - \widehat{F}_n(W^\star) + \underbrace{\widehat{F}_n(W^\star) - \widehat{F}_n(\widehat{W}_n)}_{\leq 0} + \widehat{F}_n(\widehat{W}_n) - F(\widehat{W}_n) \\
&\leq 2Z_n.
\end{align*}

Therefore, by Assumption~\ref{ass:moment} and $n \geq d \log(2n + 1)^{1 + 4/p}$, Markov's inequality and Theorem~\ref{thm:uniform-convergence} give
\[
\mathbb{P}\bigl( \rho(\widehat{W}_n) \geq \delta \bigr) \leq \mathbb{P}\bigl( Z_n \geq \frac{\Gamma(\delta)}{2} \bigr) \leq \frac{2\mathbb{E}\bigl[ Z_n \bigr]}{\Gamma(\delta)} \leq \frac{C_p(P) d^{5/4} n^{-1/4} \log(2n + 1)^{1/p + 1/4}}{\Gamma(\delta)}.
\]

The right-hand side converges to zero, which proves convergence in probability. Finally, $\rho(\widehat{W}_n) \leq 2\sqrt{d}$ because both $\widehat{W}_n (W^\star)^{-1}$ and every element of $\mathcal{M}_d$ are orthogonal. For every $\delta > 0$ small enough
\[
\mathbb{E}\bigl[ \rho(\widehat{W}_n) \bigr] \leq \delta + 2\sqrt{d} \mathbb{P}\bigl( \rho(\widehat{W}_n) \geq \delta \bigr).
\]

Taking the limit superior as $n \to +\infty$ for fixed $\delta$ gives $\limsup_{n \to +\infty} \mathbb{E}\bigl[ \rho(\widehat{W}_n) \bigr] \leq \delta$. Letting $\delta \to 0$ proves convergence in expectation.
\end{proof}

\subsection{Proof of Lemma~\ref{lem:order-residuals-orthogonal}} \label{pf:order-residuals-orthogonal}

\begin{proof}
By Assumption~\ref{ass:causal-noise-normalization}, $\mathrm{std}(\varepsilon)$ is well-defined. By linearity and acyclicity of Model~\eqref{eq:lingam}, there exists an invertible matrix $\widetilde{B}^\star \in \mathrm{GL}_d(\mathbb{R})$ such that $X = \widetilde{B}^\star \mathrm{std}(\varepsilon)$. For a fixed order $\sigma \in \mathfrak{S}_d$, the ordinary least-squares residuals are linear in $X$, because there exists a matrix $H^\sigma$ such that
\[
R(\sigma) = H^\sigma X = H^\sigma \widetilde{B}^\star \mathrm{std}(\varepsilon).
\]

Up to a permutation, sequential least-squares residualization is the Gram--Schmidt orthogonalization applied to $X_{\sigma(1)}, \dots, X_{\sigma(d)}$. Therefore, the residuals are pairwise orthogonal, and by Assumption~\ref{ass:causal-noise-normalization}, after componentwise standardization
\[
\mathrm{Cov}\Bigl( \mathrm{std}\bigl( R(\sigma) \bigr) \Bigr) = I_d.
\]

Since $\mathrm{std}\bigl( R(\sigma) \bigr)$ is linear in $\mathrm{std}(\varepsilon)$, write $\mathrm{std}\bigl( R(\sigma) \bigr) = Q^\sigma \mathrm{std}(\varepsilon)$, which yields
\[
I_d = Q^\sigma (Q^\sigma)^\top \implies Q^\sigma \in \mathcal{O}_d(\mathbb{R}).
\]
\end{proof}

\subsection{Proof of Lemma~\ref{lem:prop-parents}} \label{pf:prop-parents}

\begin{proof}
First, suppose that $\mathrm{Pa}_{\mathcal{G}^\star}(j) = \emptyset$ for some $j \in \llbracket d \rrbracket$. Then, evidently, $X_j = \varepsilon_j$ therefore $X_j \propto \varepsilon_j$.

Conversely, suppose $X_j \propto \varepsilon_k$ for some $j, k \in \llbracket d \rrbracket$. Then, by Model~\eqref{eq:lingam}, letting $M^\star \coloneqq (I_d - B^\star)^{-1}$, $X$ can be written as
\[
X = M^\star \varepsilon.
\]

Furthermore, in any causal order $\sigma^\star \in \mathcal{I}^\star$, $\Pi^{\sigma^\star} B^\star (\Pi^{\sigma^\star})^\top$ is strictly lower triangular. As such, both $\Pi^{\sigma^\star} (I_d - B^\star) (\Pi^{\sigma^\star})^\top$ and $\Pi^{\sigma^\star} (I_d - B^\star)^{-1} (\Pi^{\sigma^\star})^\top$ are lower triangular with unit diagonal, so
\[
\forall h \in \llbracket d \rrbracket, \, M_{hh}^\star = 1.
\]

Thus, writing
\[
X_j = \sum_{h = 1}^d M_{jh}^\star \varepsilon_h = \varepsilon_j + \sum_{h \neq j} M_{jh}^\star \varepsilon_h,
\]
we have
\[
k = j \implies X_j \propto \varepsilon_j \implies X_j = \varepsilon_j \implies \sum_{h \neq j} B_{jh}^\star X_h \overset{\mathrm{a.s.}}{=} 0. 
\]

Hence, by Assumption~\ref{ass:causal-noise-normalization}, $X$ is nondegenerate and $B_{jh}^\star = 0$ for all $h \neq j$, which proves $\mathrm{Pa}_{\mathcal{G}^\star}(j) = \emptyset$.
\end{proof}

\subsection{Proof of Proposition~\ref{prop:permutation-causal}} \label{pf:permutation-causal}

\begin{proof}
If $Q^\sigma$ is a signed permutation matrix, then each standardized sequential residual is one standardized structural noise. For the first selected variable, there are no predecessors, hence the regression is on the empty span, and there exists $\ell_1 \in \llbracket d \rrbracket$ such that
\[
R_{\sigma(1)}(\sigma) = X_{\sigma(1)}, \quad \mathrm{std}\bigl( X_{\sigma(1)} \bigr) = \pm \mathrm{std}\bigl( \varepsilon_{\ell_1} \bigr).
\]

By Assumption~\ref{ass:causal-noise-normalization} and Lemma~\ref{lem:prop-parents}, a variable is proportional to one structural noise if and only if it is exogenous. Hence, $\mathrm{Pa}_{\mathcal{G}^\star}(\sigma(1)) = \emptyset$.

We now remove the effect of this exogenous variable from all remaining variables
\[
\forall j \in \llbracket d \rrbracket \setminus \{ \sigma(1) \}, \, X_j' \coloneqq X_j - \mathrm{proj}_{\mathrm{span}\bigl\{ X_{\sigma(1)} \bigr\}} X_j.
\]

By Assumption~\ref{ass:causal-noise-normalization}, the residual vector $X' = \bigl( X_j' : j \neq \sigma(1) \bigr)$ again satisfies a linear (possibly Gaussian) acyclic model on the remaining vertices. Projecting the remaining variables on their predecessors gives the same later residuals as in the original order, so the signed-permutation property is preserved. Applying the same argument inductively shows that $\sigma(2)$ is exogenous in the reduced model, then $\sigma(3)$ is exogenous in the next reduced model, until every variable has been selected. Therefore, every variable appears in $\sigma$ after its parents, which implies $\sigma \in \mathcal{I}^\star$.
\end{proof}

\subsection{Proof of Theorem~\ref{thm:causal-order-identification}} \label{pf:causal-order-identification}

\begin{proof}
Fix $\sigma \in \mathfrak{S}_d$. By Assumption~\ref{ass:causal-noise-normalization} and Lemma~\ref{lem:order-residuals-orthogonal}, we have
\[
\mathrm{std}\bigl( R_j(\sigma) \bigr) = \sum_{k = 1}^d (Q^\sigma)_{jk} \mathrm{std}(\varepsilon_k), \quad Q^\sigma \in \mathcal{O}_d(\mathbb{R}).
\]

Applying Lemma~\ref{lem:wasserstein-linear-combination} row-wise gives
\[
\mathcal{W}_2\Bigl( \mathrm{std}\bigl( R_j(\sigma) \bigr), \mathcal{N}(0, 1) \Bigr)^2 \leq \sum_{k = 1}^d (Q^\sigma)_{jk}^2 \mathcal{W}_2\bigl( \mathrm{std}(\varepsilon_k), \mathcal{N}(0, 1) \bigr)^2,
\]
and summing over all rows, using orthogonality, yields
\[
G(\sigma) \leq \sum_{j = 1}^d \sum_{k = 1}^d (Q^\sigma)_{jk}^2 \mathcal{W}_2\bigl( \mathrm{std}(\varepsilon_k), \mathcal{N}(0, 1) \bigr)^2 = \sum_{j = 1}^d \mathcal{W}_2\bigl( \mathrm{std}(\varepsilon_j), \mathcal{N}(0, 1) \bigr)^2.
\]

Now, if $\sigma \in \mathcal{I}^\star$, then every parent of $j \in \llbracket d \rrbracket$ belongs to the predecessor set of $j$ under $\sigma$. Since $\varepsilon_j$ is independent of all predecessors, the ordinary least-squares residual is
\[
R_j(\sigma) = \varepsilon_j \implies G(\sigma) = \sum_{j = 1}^d \mathcal{W}_2\bigl( \mathrm{std}(\varepsilon_j), \mathcal{N}(0, 1) \bigr)^2.
\]

Conversely, suppose that $\sigma$ attains this upper bound. By Assumptions~\ref{ass:causal-noise-normalization} and~\ref{ass:causal-non-gaussianity}, if some row of $Q^\sigma$ had at least two nonzero entries, then Lemma~\ref{lem:strict-wasserstein} would give
\[
G(\sigma) < \sum_{j = 1}^d \mathcal{W}_2\bigl( \mathrm{std}(\varepsilon_j), \mathcal{N}(0, 1) \bigr)^2,
\]
contradicting equality. Hence every row of $Q^\sigma$ has exactly one nonzero entry. Since $Q^\sigma$ is orthogonal, it is a signed permutation matrix, and Proposition~\ref{prop:permutation-causal} then implies $\sigma \in \mathcal{I}^\star$.
\end{proof}

\subsection{Proof of Theorem~\ref{thm:dag-uniform-convergence}} \label{pf:dag-uniform-convergence}

\begin{proof}
Fix $\sigma \in \mathfrak{S}_d$ and $j \in \llbracket d \rrbracket$. Since $d$ is fixed, the empirical ordinary least-squares projection of $X_j$ on its predecessors under $\sigma$ depends only on finitely many empirical second moments. By Assumption~\ref{ass:causal-noise-normalization} and the strong law of large numbers, these empirical moments converge almost surely to their true values. Hence, by assumption, the empirical regression coefficients are almost surely well-defined for $n$ large enough and converge almost surely to their oracle counterparts. Coupling observations with the same index and using the fact that the Wasserstein distance is the infimum over all couplings yields
\[
\mathcal{W}_2\Bigl( \frac{1}{n} \sum_{i = 1}^n \delta_{\widehat{R}_j^{(i)}(\sigma)}, \frac{1}{n} \sum_{i = 1}^n \delta_{R_j^{(i)}(\sigma)} \Bigr)^2 \leq \frac{1}{n} \sum_{i = 1}^n \bigl( \widehat{R}_j^{(i)}(\sigma) - R_j^{(i)}(\sigma) \bigr)^2 \overset{\mathrm{a.s.}}{\to} 0.
\]

By Assumption~\ref{ass:causal-noise-normalization}, the oracle residuals are i.i.d. with finite second moment, so their empirical distribution converges almost surely to the law of $R_j(\sigma)$ in $\mathcal{W}_2$ (see, e.g., Theorem 6.7 of \cite{villani2009optimal}). Their empirical second moment also converges to the positive population second moment, and the triangle inequality therefore gives
\[
\mathcal{W}_2\Bigl( \mathrm{std}\bigl( \frac{1}{n} \sum_{i = 1}^n \delta_{\widehat{R}_j^{(i)}(\sigma)} \bigr), \mathcal{N}(0, 1) \Bigr)^2 \overset{\mathrm{a.s.}}{\to} \mathcal{W}_2\Bigl( \mathrm{std}\bigl( R_j(\sigma) \bigr), \mathcal{N}(0, 1) \Bigr)^2.
\]

Summing over $j \in \llbracket d \rrbracket$ gives $\widehat{G}_n(\sigma) \overset{\mathrm{a.s.}}{\to} G(\sigma)$ for this fixed order. Because $\mathfrak{S}_d$ is finite, pointwise almost-sure convergence implies uniform almost-sure convergence over all orders
\[
\max_{\sigma \in \mathfrak{S}_d} \bigl\vert \widehat{G}_n(\sigma) - G(\sigma) \bigr\vert \overset{\mathrm{a.s.}}{\to} 0.
\]
\end{proof}

\subsection{Proof of Corollary~\ref{cor:dag-order-consistency}} \label{pf:dag-order-consistency}

\begin{proof}
First, assume that $\mathcal{I}^\star \neq \mathfrak{S}_d$ and set
\[
Z_n \coloneqq \max_{\sigma \in \mathfrak{S}_d} \bigl\vert \widehat{G}_n(\sigma) - G(\sigma) \bigr\vert.
\]

Suppose $\hat{\sigma}_n \notin \mathcal{I}^\star$. By the definition of $\Gamma$, for any $\sigma^\star \in \mathcal{I}^\star$ we have
\[
\Gamma \leq G(\sigma^\star) - G(\hat{\sigma}_n).
\]

Using the optimality of $\hat{\sigma}_n$ for $\widehat{G}_n$, we obtain
\begin{align*}
\Gamma &\leq G(\sigma^\star) - G(\hat{\sigma}_n) \\
&= G(\sigma^\star) - \widehat{G}_n(\sigma^\star) + \underbrace{\widehat{G}_n(\sigma^\star) - \widehat{G}_n(\hat{\sigma}_n)}_{\leq 0} + \widehat{G}_n(\hat{\sigma}_n) - G(\hat{\sigma}_n) \\
&\leq 2Z_n.
\end{align*}

Therefore, under the appropriate assumptions, Theorem~\ref{thm:dag-uniform-convergence} gives
\[
\mathbb{P}\bigl( \hat{\sigma}_n \notin \mathcal{I}^\star \bigr) \leq \mathbb{P}\bigl( Z_n \geq \frac{\Gamma}{2} \bigr) \to 0.
\]

If $\mathcal{I}^\star = \mathfrak{S}_d$, then $\hat{\sigma}_n \in \mathcal{I}^\star$ for every empirical maximizer by definition.
\end{proof}

\subsection{Proof of Theorem~\ref{thm:dag-order-sub-gaussian-rate}} \label{pf:dag-order-sub-gaussian-rate}

\begin{proof}
Let $\Sigma \coloneqq \mathbb{E}\bigl[ XX^\top \bigr]$, $\widehat{\Sigma} \coloneqq \frac{1}{n} \sum_{i = 1}^n X^{(i)}{X^{(i)}}^\top$. By Assumption~\ref{ass:causal-noise-normalization}, $\Sigma \succ 0$. For an order $\sigma \in \mathfrak{S}_d$ and a variable $j \in \llbracket d \rrbracket$, let $A_j(\sigma)$ denote the predecessor set of $j$, as in Definition~\ref{def:wasserstein-lingam-objective}. For a nonempty set $A = A_j(\sigma)$, define the oracle linear coefficient, the corresponding oracle residual and variance, as well as its standardized version, by
\[
\beta_{j, A} \coloneqq \Sigma_{A, A}^{-1} \Sigma_{A, j}, \quad R_{j, A} \coloneqq X_j - X_A^\top \beta_{j, A}, \quad s_{j, A}^2 \coloneqq \mathbb{E}[ R_{j, A}^2], \quad \widetilde{R}_{j, A} \coloneqq \frac{R_{j, A}}{s_{j, A}}.
\]

If $A = \emptyset$, we use the conventions
\[
\beta_{j, \emptyset} \coloneqq 0, \quad R_{j, \emptyset} \coloneqq X_j, \quad s_{j, \emptyset}^2 \coloneqq \mathbb{E}[ X_j^2], \quad \widetilde{R}_{j, \emptyset} \coloneqq \frac{X_j}{s_{j, \emptyset}}.
\]

Using the true residuals instead of the least-squares estimated ones, the corresponding objective is
\[
\widetilde{G}_n(\sigma) \coloneqq \sum_{j = 1}^d \mathcal{W}_2\Bigl( \frac{1}{n} \sum_{i = 1}^n \delta_{\widetilde{R}_{j, A_j(\sigma)}^{(i)}}, \mathcal{N}(0, 1) \Bigr)^2.
\]

By Lemma~\ref{lem:order-residuals-orthogonal}, the standardized oracle residuals satisfy $\mathrm{std}\bigl( R(\sigma) \bigr) = Q^\sigma \mathrm{std}(\varepsilon)$ for some $Q^\sigma \in \mathcal{O}_d(\mathbb{R})$. Moreover, $\mathrm{std}(\varepsilon)$ is an orthogonal transformation of $\Sigma^{-1/2} X$, so all its one-dimensional projections belong to $\mathrm{SubG}(K)$ by Assumption~\ref{ass:sub-gaussian-observations}, and $\mathbb{E}\bigl[ \Vert \mathrm{std}(\varepsilon) \Vert^p \bigr] = \mathbb{E}\bigl[ \Vert \Sigma^{-1/2} X \Vert^p \bigr] < +\infty$.

Denoting by $F^\varepsilon$ and $\widehat{F}_n^\varepsilon$ the population and empirical objectives of Definition~\ref{def:ica-objective-function} with $\mathrm{std}(\varepsilon)$ in place of $X$, and applying the proof of Theorem~\ref{thm:uniform-convergence}, we obtain
\begin{align*}
\mathbb{E}\Bigl[ \max_{\sigma \in \mathfrak{S}_d} \bigl\vert \widetilde{G}_n(\sigma) - G(\sigma) \bigr\vert \Bigr] &= \mathbb{E}\Bigl[ \max_{\sigma \in \mathfrak{S}_d} \bigl\vert \widehat{F}_n^\varepsilon(Q^\sigma) - F^\varepsilon(Q^\sigma) \bigr\vert \Bigr] \\
&\leq \mathbb{E}\Bigl[ \sup_{W \in \mathcal{O}_d(\mathbb{R})} \bigl\vert \widehat{F}_n^\varepsilon(W) - F^\varepsilon(W) \bigr\vert \Bigr] \\
&\leq C_p(P) d^{5/4} n^{-1/4} \log(2n + 1)^{1/p + 1/4}.
\end{align*}

It remains to control the replacement of the oracle residuals by their empirical least-squares counterparts. Given a target $j \in \llbracket d \rrbracket$ and candidate predecessor set $A \subseteq \llbracket d \rrbracket \setminus \{ j \}$, let $\beta_{j, A}$ and $\widehat{\beta}_{j, A}$ denote the oracle and empirical ordinary least-squares coefficients (which are well-defined assuming $\widehat{\Sigma}_{A, A}$ to be invertible), respectively. Since Equation~\eqref{eq:empirical-lingam-objective} is almost surely well-defined, the empirical coefficients and residuals exist. We also write $\mathbf{X}_A \in \mathbb{R}^{n \times \# A}$ for the sample matrix whose $i$-th row is ${X_A^{(i)}}^\top$ and $\mathbf{X}_j \coloneqq \bigl( X_j^{(1)}, \dots, X_j^{(n)} \bigr)^\top$. Define the oracle and empirical residual vectors by
\[
\boldsymbol{R}_{j, A} \coloneqq \mathbf{X}_j - \mathbf{X}_A \beta_{j, A}, \quad \widehat{\boldsymbol{R}}_{j, A} \coloneqq \mathbf{X}_j - \mathbf{X}_A \widehat{\beta}_{j, A}.
\]

For $A = \emptyset$, we use $\widehat{\beta}_{j, \emptyset} \coloneqq 0$ and $\boldsymbol{R}_{j, \emptyset} \coloneqq \widehat{\boldsymbol{R}}_{j, \emptyset} \coloneqq \mathbf{X}_j$.

First, to bound the difference between the \emph{unnormalized} residuals, notice that
\[
\widehat{\boldsymbol{R}}_{j, A} - \boldsymbol{R}_{j, A} = \bigl( \mathbf{X}_j - \mathbf{X}_A \widehat{\beta}_{j, A} \bigr) - \bigl( \mathbf{X}_j - \mathbf{X}_A \beta_{j, A} \bigr) = -\mathbf{X}_A (\widehat{\beta}_{j, A} - \beta_{j, A}).
\]

Therefore, taking its squared norm and using $\widehat{\Sigma}_{A, A} = \frac{1}{n} \mathbf{X}_A^\top \mathbf{X}_A$, we obtain
\begin{align}
\frac{1}{n} \Vert \widehat{\boldsymbol{R}}_{j, A} - \boldsymbol{R}_{j, A} \Vert^2 &= \frac{1}{n} \bigl( -\mathbf{X}_A (\widehat{\beta}_{j, A} - \beta_{j, A}) \bigr)^\top \bigl( -\mathbf{X}_A (\widehat{\beta}_{j, A} - \beta_{j, A}) \bigr) \notag \\
&= (\widehat{\beta}_{j, A} - \beta_{j, A})^\top \bigl( \frac{1}{n} \mathbf{X}_A^\top \mathbf{X}_A \bigr) (\widehat{\beta}_{j, A} - \beta_{j, A}) \notag \\
&= (\widehat{\beta}_{j, A} - \beta_{j, A})^\top \widehat{\Sigma}_{A, A} (\widehat{\beta}_{j, A} - \beta_{j, A}) \label{eq:residuals}.
\end{align}

Let $E \coloneqq \widehat{\Sigma} - \Sigma$, and let $v_{j, A} \in \mathbb{R}^d$ equal $1$ at coordinate $j$, $-\beta_{j, A}$ on $A$, and $0$ elsewhere. Then, directly from the definitions of $\beta_{j, A}$ and $\widehat{\beta}_{j, A}$,
\[
s_{j, A}^2 \coloneqq \mathbb{E}[ R_{j, A}^2 ] = v_{j, A}^\top \Sigma v_{j, A}, \quad \widehat{\Sigma}_{A, A} (\widehat{\beta}_{j, A} - \beta_{j, A}) = E_{A, :} v_{j, A}.
\]

Therefore, Equation~\eqref{eq:residuals} gives
\[
\frac{1}{n} \Vert \widehat{\boldsymbol{R}}_{j, A} - \boldsymbol{R}_{j, A} \Vert^2 = (E_{A, :} v_{j, A})^\top \widehat{\Sigma}_{A, A}^{-1} (E_{A, :} v_{j, A}) = v_{j, A}^\top (E_{:, A} \widehat{\Sigma}_{A, A}^{-1} E_{A, :}) v_{j, A}.
\]
Define $\delta_n \coloneqq \bigl\Vert \Sigma^{-1/2} \bigl( \widehat{\Sigma} - \Sigma \bigr) \Sigma^{-1/2} \bigr\Vert_\mathrm{op}$. If $\delta_n < 1$, then
\[
(1 - \delta_n) \Sigma \preceq \widehat{\Sigma}, \quad E_{:, A} \Sigma_{A, A}^{-1} E_{A, :} \preceq E \Sigma^{-1} E \preceq \Sigma^{1/2} (\Sigma^{-1/2} E \Sigma^{-1/2})^2 \Sigma^{1/2} \preceq \delta_n^2 \Sigma,  
\]
hence
\[
\widehat{\Sigma}_{A, A}^{-1} \preceq \frac{1}{1 - \delta_n} \Sigma_{A, A}^{-1}, \quad E_{:, A} \widehat{\Sigma}_{A, A}^{-1} E_{A, :} \preceq \frac{\delta_n^2}{1 - \delta_n} \Sigma.
\]

It now follows that
\[
\frac{1}{n} \Vert \widehat{\boldsymbol{R}}_{j, A} - \boldsymbol{R}_{j, A} \Vert^2 \leq \frac{\delta_n^2}{1 - \delta_n} s_{j, A}^2.
\]

To handle the normalized versions, which generally do not coincide, we must uniformly control the estimation error of $s_{j, A}$. Define
\[
\hat{s}_{j, A} \coloneqq \frac{1}{\sqrt{n}} \Vert \widehat{\boldsymbol{R}}_{j, A} \Vert, \quad \mathrm{std}(\widehat{\boldsymbol{R}}_{j, A}) \coloneqq \frac{\widehat{\boldsymbol{R}}_{j, A}}{\hat{s}_{j, A}}, \quad \widetilde{\boldsymbol{R}}_{j, A} \coloneqq \frac{\boldsymbol{R}_{j, A}}{s_{j, A}}.
\]

Since $\frac{1}{n} \Vert \boldsymbol{R}_{j, A} \Vert^2 = v_{j, A}^\top \widehat{\Sigma} v_{j, A}$ and $s_{j, A}^2 = v_{j, A}^\top \Sigma v_{j, A}$, we have
\begin{equation}
\biggl\vert \frac{\Vert \boldsymbol{R}_{j, A} \Vert}{\sqrt{n} s_{j, A}} - 1 \biggr\vert \leq \biggl\vert \frac{\Vert \boldsymbol{R}_{j, A} \Vert^2}{n s_{j, A}^2} - 1 \biggr\vert = \frac{\bigl\vert v_{j, A}^\top (\widehat{\Sigma} - \Sigma) v_{j, A} \bigr\vert}{v_{j, A}^\top \Sigma v_{j, A}} \leq \delta_n. \label{eq:s-ratio}
\end{equation}

Moreover, the reverse triangle inequality gives
\[
\biggl\vert \frac{\hat{s}_{j, A}}{s_{j, A}} - \frac{\Vert \boldsymbol{R}_{j, A} \Vert}{\sqrt{n} s_{j, A}} \biggr\vert = \biggl\vert \frac{\Vert \widehat{\boldsymbol{R}}_{j, A} \Vert}{\sqrt{n} s_{j, A}} - \frac{\Vert \boldsymbol{R}_{j, A} \Vert}{\sqrt{n} s_{j, A}} \biggr\vert
\leq \frac{1}{\sqrt{n} s_{j, A}} \Vert \widehat{\boldsymbol{R}}_{j, A} - \boldsymbol{R}_{j, A} \Vert \leq \frac{\delta_n}{\sqrt{1 - \delta_n}}.
\]

Combining the preceding inequality with Equation~\eqref{eq:s-ratio}, and using the triangle inequality, we get
\[
\biggl\vert \frac{\hat{s}_{j, A}}{s_{j, A}} - 1 \biggr\vert \leq \biggl\vert \frac{\Vert \boldsymbol{R}_{j, A} \Vert}{\sqrt{n} s_{j, A}} - 1 \biggr\vert + \biggl\vert \frac{\hat{s}_{j, A}}{s_{j, A}} - \frac{\Vert \boldsymbol{R}_{j, A} \Vert}{\sqrt{n} s_{j, A}} \biggr\vert \leq \delta_n + \frac{\delta_n}{\sqrt{1 - \delta_n}} \leq \frac{2\delta_n}{\sqrt{1 - \delta_n}}.
\]

On the event $\delta_n \leq 1/4$, the preceding bound and Equation~\eqref{eq:s-ratio} give
\[
\frac{\hat{s}_{j, A}}{s_{j, A}} \geq 1 - \frac{1}{\sqrt{3}}, \quad \frac{\Vert \boldsymbol{R}_{j, A} \Vert}{\sqrt{n} s_{j, A}} \leq 5/4.
\]

Coupling observations with the same index and using the fact that the Wasserstein distance is the infimum over all couplings therefore yields
\begin{align*}
\mathcal{W}_2\Bigl( \frac{1}{n} \sum_{i = 1}^n \delta_{\mathrm{std}(\widehat{\boldsymbol{R}}_{j, A})_i}, \frac{1}{n} \sum_{i = 1}^n \delta_{\widetilde{R}_{j, A}^{(i)}} \Bigr) &\leq \frac{1}{\sqrt{n}} \bigl\Vert \mathrm{std}(\widehat{\boldsymbol{R}}_{j, A}) - \widetilde{\boldsymbol{R}}_{j, A} \bigr\Vert \\
&\leq \frac{s_{j, A}}{\hat{s}_{j, A}} \frac{\Vert \widehat{\boldsymbol{R}}_{j, A} - \boldsymbol{R}_{j, A} \Vert}{\sqrt{n} s_{j, A}} + \frac{\Vert \boldsymbol{R}_{j, A} \Vert}{\sqrt{n} s_{j, A}} \biggl\vert \frac{s_{j, A}}{\hat{s}_{j, A}} - 1 \biggr\vert \\
&\leq \frac{1}{1 - \frac{1}{\sqrt{3}}} \Biggl( \frac{\delta_n}{\sqrt{1 - \delta_n}} + \frac{5}{4}\frac{2\delta_n}{\sqrt{1 - \delta_n}} \Biggr) \leq 10\delta_n.
\end{align*}

On this event, the second moments of both empirical residual distributions are bounded by a universal constant, and hence so are their Wasserstein distances to $\mathcal{N}(0, 1)$. Fix $\sigma \in \mathfrak{S}_d$. Applying, as in the proof of Theorem~\ref{thm:uniform-convergence}, the triangle inequality, the reverse triangle inequality, and $\vert a^2 - b^2 \vert = \vert a + b \vert \vert a - b \vert$ for any $a, b \in \mathbb{R}$, we obtain
\begin{align*}
\bigl\vert \widehat{G}_n(\sigma) - \widetilde{G}_n(\sigma) \bigr\vert &\leq \sum_{j = 1}^d \Biggl\vert \mathcal{W}_2\Bigl( \mathrm{std}\bigl( \frac{1}{n} \sum_{i = 1}^n \delta_{\widehat{R}_j^{(i)}(\sigma)} \bigr), \mathcal{N}(0, 1) \Bigr)^2 - \mathcal{W}_2\Bigl( \frac{1}{n} \sum_{i = 1}^n \delta_{\widetilde{R}_{j, A_j(\sigma)}^{(i)}}, \mathcal{N}(0, 1) \Bigr)^2 \Biggr\vert \\
&\leq 4 \sum_{j = 1}^d \mathcal{W}_2\Bigl( \mathrm{std}\bigl( \frac{1}{n} \sum_{i = 1}^n \delta_{\widehat{R}_j^{(i)}(\sigma)} \bigr), \frac{1}{n} \sum_{i = 1}^n \delta_{\widetilde{R}_{j, A_j(\sigma)}^{(i)}} \Bigr) \\
&\leq 40d \delta_n.
\end{align*}

Therefore, on the same event, we simultaneously have
\[
\max_{\sigma \in \mathfrak{S}_d} \bigl\vert \widehat{G}_n(\sigma) - \widetilde{G}_n(\sigma) \bigr\vert \leq 40d \delta_n.
\]

Both $\widehat{G}_n(\sigma)$ and $G(\sigma)$ belong to $[0, 2d]$ because every distribution entering these objectives has unit second moment. Applied to $\Sigma^{-1/2}X$, the sub-Gaussian covariance concentration theorem \citep[Theorem~4.7.1 and Remark~4.7.3]{vershynin2026high} gives a universal constant $C_0 \geq 1$ such that, for every $u \geq 0$,
\[
\mathbb{E}\bigl[ \delta_n \bigr] \leq C_0 K^2\biggl( \sqrt{\frac{d}{n}} + \frac{d}{n} \biggr), \quad \mathbb{P}\Biggl( \delta_n > C_0 K^2\biggl( \sqrt{\frac{d + u}{n}} + \frac{d + u}{n} \biggr) \Biggr) \leq 2 \exp(-u).
\]

Taking $u = \frac{n}{64 C_0^2 K^4}$, the condition $n \geq 64 C_0^2 K^4 d$ implies $d + u \leq 2u$, hence
\[
C_0 K^2\biggl( \sqrt{\frac{d + u}{n}} + \frac{d + u}{n} \biggr) \leq \frac{1}{\sqrt{32}} + \frac{1}{32 C_0 K^2} \leq \frac{1}{4} \implies \mathbb{P}\bigl( \delta_n > \frac{1}{4} \bigr) \leq 2 \exp\biggl( -\frac{n}{64 C_0^2 K^4} \biggr).
\]

Since $\exp(-u) \leq u^{-1/2}$ and $n \geq d$, decomposing according to the event $\delta_n \leq 1/4$ therefore gives
\[
\mathbb{E}\Bigl[ \max_{\sigma \in \mathfrak{S}_d} \bigl\vert \widehat{G}_n(\sigma) - G(\sigma) \bigr\vert \Bigr] \leq C_p(P) \Bigl( d^{5/4} n^{-1/4} \log(2n + 1)^{1/p + 1/4} + K^2 d^{3/2} n^{-1/2} \Bigr).
\]

The condition $n \geq K^8 d$ makes the second term no larger than the first, after enlarging $C_p(P)$ by a factor of two, which proves the stated expectation bound.

Finally, by Assumptions~\ref{ass:causal-noise-normalization} and~\ref{ass:causal-non-gaussianity} and Theorem~\ref{thm:causal-order-identification}, $\Gamma > 0$ when $\mathcal{I}^\star \neq \mathfrak{S}_d$, and the proof of Corollary~\ref{cor:dag-order-consistency} shows that
\[
\mathbb{P}\bigl( \hat{\sigma}_n \notin \mathcal{I}^\star \bigr) \leq \frac{2}{\Gamma}\mathbb{E}\Bigl[ \max_{\sigma \in \mathfrak{S}_d} \bigl\vert \widehat{G}_n(\sigma) - G(\sigma) \bigr\vert \Bigr] \leq \frac{C_p(P) d^{5/4} n^{-1/4} \log(2n + 1)^{1/p + 1/4}}{\Gamma}.
\]

If $\mathcal{I}^\star = \mathfrak{S}_d$, the error probability is zero by definition.
\end{proof}

\subsection{Proof of Theorem~\ref{thm:greedy-equal-score}} \label{pf:greedy-equal-score}

\begin{proof}
At step $t \geq 1$, let $A_t \subseteq \llbracket d \rrbracket$ be the selected variables and $U_t \coloneqq \llbracket d \rrbracket \setminus A_t$. Set
\[
\forall j \in U_t, \, R_j(t) \coloneqq X_j - \mathrm{proj}_{\mathrm{span} \bigl\{ X_k : k \in A_t \bigr\}} X_j,
\]
and in particular
\[
\forall j \in \llbracket d \rrbracket, \, R_j(1) = X_j.
\]

By Equation~\eqref{eq:greedy-equal-score}, every exogenous variable has score $c$. By Assumptions~\ref{ass:causal-noise-normalization} and~\ref{ass:causal-non-gaussianity}, Lemma~\ref{lem:strict-wasserstein} gives every nontrivial mixture a score strictly below $c$. Hence, at the first step, the greedy rule selects a structural noise, and thus, by Lemma~\ref{lem:prop-parents}, an exogenous variable.

By an induction argument similar to that of Proof~\ref{pf:permutation-causal}, and by Assumptions~\ref{ass:causal-noise-normalization} and~\ref{ass:causal-non-gaussianity}, after residualizing the variables in $U_t$ on $A_t$ at step $t \geq 1$, the current variables $\bigl( R_j(t) : j \in U_t \bigr)$ form a linear non-Gaussian acyclic model on $U_t$. By Equation~\eqref{eq:greedy-equal-score}, the reduced model retains the common Wasserstein non-Gaussianity score $c$, so the same argument shows that the greedy step again selects an exogenous variable. The induction continues until all variables have been selected, and the returned order belongs to $\mathcal{I}^\star$.
\end{proof}

\section{Reproducibility} \label{sec:reproducibility}

All experiments were run exclusively on an AMD Ryzen 5 5600 CPU under Windows 11. The accompanying GitHub repositories include pinned package versions to ensure full reproducibility.

\section*{LLM Usage}

The authors used Large Language Models (LLMs) for editorial assistance, including improving grammar and concision and suggesting alternative phrasings. The authors take full responsibility for the scientific ideas, mathematical statements, proofs, experimental design, implementations, and final manuscript content.

\end{document}